\providecommand{\tabularnewline}{\\}
 \definecolor{BLACK}{gray}{0}
 \definecolor{WHITE}{gray}{1}
 \definecolor{RED}{rgb}{1,0,0}
 \definecolor{GREEN}{rgb}{0,1,0}
 \definecolor{BLUE}{rgb}{0,0,1}
 \definecolor{CYAN}{cmyk}{1,0,0,0}
 \definecolor{MAGENTA}{cmyk}{0,1,0,0}
 \definecolor{YELLOW}{cmyk}{0,0,1,0}
\newenvironment{lyxlist}[1]
{\begin{list}{}
{\settowidth{\labelwidth}{#1}
 \setlength{\leftmargin}{\labelwidth}
 \addtolength{\leftmargin}{\labelsep}
 }}
{\end{list}}
\theoremstyle{plain}
\newtheorem{thm}{\protect\theoremname}[section]
  \theoremstyle{definition}
  \newtheorem{example}[thm]{\protect\examplename}
  \theoremstyle{definition}
  \newtheorem{defn}[thm]{\protect\definitionname}
  \theoremstyle{remark}
  \newtheorem{rem}[thm]{\protect\remarkname}
  \theoremstyle{plain}
  \newtheorem{criterion}[thm]{\protect\criterionname}
  \theoremstyle{plain}
  \newtheorem{lem}[thm]{\protect\lemmaname}
  \theoremstyle{plain}
  \newtheorem{cor}[thm]{\protect\corollaryname}
\DeclareMathAlphabet{\mathcalligra}{T1}{calligra}{m}{n}
\DeclareFontShape{T1}{calligra}{m}{n}{<->s*[2.2]callig15}{}
\def\frontmatter@abstractheading{}
  \providecommand{\corollaryname}{Corollary}
  \providecommand{\criterionname}{Criterion}
  \providecommand{\definitionname}{Definition}
  \providecommand{\examplename}{Example}
  \providecommand{\lemmaname}{Lemma}
  \providecommand{\remarkname}{Remark}
\providecommand{\theoremname}{Theorem}
\begin{document}

\title{Selectivity in Probabilistic Causality: Drawing Arrows from Inputs
to Stochastic Outputs}

\author{Ehtibar N. Dzhafarov}

\thanks{Corresponding author: Ehtibar Dzhafarov, Purdue University, Department
of Psychological Sciences, 703 Third Street West Lafayette, IN 47907,
USA. email: ehtibar@purdue.edu. This research has been supported by
AFOSR grant FA9550-09-1-0252 to Purdue University and by the Academy
of Finland grant 121855 to University of Jyväskylä.}

\affiliation{Purdue University}

\author{Janne V. Kujala}

\affiliation{University of Jyväskylä}
\begin{abstract}
\mbox{}

Given a set of several inputs into a system (e.g., independent variables
characterizing stimuli) and a set of several stochastically non-independent
outputs (e.g., random variables describing different aspects of responses),
how can one determine, for each of the outputs, which of the inputs
it is influenced by? The problem has applications ranging from modeling
pairwise comparisons to reconstructing mental processing architectures
to conjoint testing. A necessary and sufficient condition for a given
pattern of selective influences is provided by the Joint Distribution
Criterion, according to which the problem of {}``what influences
what'' is equivalent to that of the existence of a joint distribution
for a certain set of random variables. For inputs and outputs with
finite sets of values this criterion translates into a test of consistency
of a certain system of linear equations and inequalities (Linear Feasibility
Test) which can be performed by means of linear programming. The Joint
Distribution Criterion also leads to a metatheoretical principle for
generating a broad class of necessary conditions (tests) for diagrams
of selective influences. Among them is the class of distance-type
tests based on the observation that certain functionals on jointly
distributed random variables satisfy triangle inequality. 

\mbox{}

\textsc{Keywords:} conjoint testing, external factors, joint distribution,
probabilistic causality, mental architectures, metrics on random variables,
random outputs, selective influence, stochastic dependence, Thurstonian
scaling.

\markboth{Dzhafarov and Kujala}{Selectivity in Probabilisitc Causality}
\end{abstract}
\maketitle

\section{Introduction\label{sec:Introduction}}

This paper presents a general methodology of dealing with \emph{diagrams
of selective influences}, like this one: \emph{
\begin{equation}
\boxed{\begin{array}{c}
\xymatrix{\alpha\ar[d]\ar[drr] & \beta\ar[d]\ar[dl] & \gamma\ar[d]\ar[d] & \delta\ar[dl]\ar[dlll]\\
A & B & C
}
\end{array}}\label{diag:1}
\end{equation}
}

\protect{\noindent}The Greek letters in this diagram represent \emph{inputs},
or \emph{external factors}, e.g., parameters of stimuli whose values
can be chosen at will, or randomly vary but can be observed. The capital
Roman letters stand for random outputs characterizing reactions of
the system (an observer, a group of observers, a technical device,
etc.). The arrows show which factor influences which random output.
The factors are treated as \emph{deterministic} entities: even if
$\alpha,\beta,\gamma,\delta$ in reality vary randomly (e.g., being
randomly generated by a computer program, or being concomitant parameters
of observations, such as age of respondents), for the purposes of
analyzing selective influences the random outputs $A,B,C$ are always
viewed as \emph{conditioned} upon various combinations of specific
values of $\alpha,\beta,\gamma,\delta$.

The first question to ask is: what is the meaning of the above diagram
if the random outputs $A,B,C$ in it are not necessarily stochastically
independent? (If they are, the answer is of course trivial.) And once
the meaning of the diagram of selective influences is established,
how can one determine that this diagram correctly characterizes the
dependence of the joint distributions of the random outputs $A,B,C$
on the external factors $\alpha,\beta,\gamma,\delta$? These questions
are important, because the assumption of stochastic independence of
the outputs more often than not is either demonstrably false or adopted
for expediency alone, with no other justification, while the assumption
of selectivity in causal relations between inputs and stochastic outputs
is ubiquitous in theoretical modeling, often being built in the very
language of the models.

\subsection{An illustration: Pairwise comparisons}

Consider Thurstone's most general model of pairwise comparisons (Thurstone,
1927).%
\footnote{This model is known as Thurstonian Cases 1 and 2. The only difference
between the two is that in Case 1 the responding system is an individual
observer to whom pairs of stimuli are presented repeatedly, while
in Case 2 the responding system is a group of people each responding
to every pair of stimuli once. One can, of course, think of all kinds
of mixed or intermediate situations.%
} This model is predicated on the diagram

\begin{equation}
\boxed{\begin{array}{c}
\xymatrix{\alpha\ar[d] & \beta\ar[d]\\
A & B
}
\end{array}}\label{diag:Thurstone}
\end{equation}
where $\left(A,B\right)$ are bivariate normally distributed random
variables, and $\alpha,\beta$ are two stimuli being compared. The
stimuli are identified by their {}``observation areas'' (Dzhafarov,
2002): say, the label $\alpha$ may stand for {}``chronologically
first'' or {}``located to the left from fixation point,'' and the
label $\beta$ for, respectively, {}``chronologically second'' or
{}``located to the right from fixation point.'' For our present
purposes, $\alpha$ and $\beta$ are external factors with varying
values (e.g., light intensity in, respectively, first and second observation
areas). The random variables $A$ and $B$ are supposed to represent
some unidimensional property (say, brightness) of the images of, \emph{respectively},
the stimuli $\alpha$ and $\beta$ (the emphasized word {}``respectively''
indicating selectiveness). According to the model, the probability
with which $\alpha$ is judged to have less of the property in question
than $\beta$ equals $\Pr\left[A<B\right]$. The problem is: what
restrictions should be imposed in this theoretical scheme on the bivariate-normal
distribution of $A,B$ to ensure that $A$ is an image of the stimulus
$\alpha$ alone and $B$ is an image of the stimulus $\beta$ alone,
as opposed to both or either of them being an image of both the stimuli
$\alpha$ and $\beta$? In other words, how can one distinguish, within
the framework of Thurstone's general model, the diagram of selective
influences (\ref{diag:Thurstone}) from the diagrams
\begin{equation}
\boxed{\begin{array}{c}
\xymatrix{\alpha\ar[d]\ar[dr] & \beta\ar[d]\ar[dl]\\
A & B
}
\end{array}}\quad\textnormal{or}\quad\boxed{\begin{array}{c}
\xymatrix{\alpha\ar[d]\ar[dr] & \beta\ar[d]\\
A & B
}
\end{array}}\quad?\label{diag:Thurstone.alt}
\end{equation}
Denoting by $A\left(x,y\right),B\left(x,y\right)$ the two random
variables at the values $\left(x,y\right)$ of the factors $\left(\alpha,\beta\right)$,%
\footnote{It may seem unnecessary to use separate notation for factors and their
values (levels), but it is in fact more convenient in view of the
formal treatment presented below. The factors there are defined as
sets of {}``factor points,'' and the latter are defined as factor
values associated with particular factor names: e.g., $\left(x,`\alpha\textnormal{'}\right)$
is a factor point of factor $\alpha$.%
} intuition tells us that one should be able to write
\[
A\left(x,y\right)=A\left(x\right),B\left(x,y\right)=B\left(y\right)
\]
if the diagram (\ref{diag:Thurstone}) holds, but not in the case
of the diagrams \ref{diag:Thurstone.alt}. Clearly then, one should
require that 
\begin{equation}
\begin{array}{c}
\begin{array}{cc}
\mathrm{E}\left[A\left(x,y\right)\right]=\mu_{A}\left(x\right), & \mathrm{Var}\left[A\left(x,y\right)\right]=\sigma_{AA}\left(x\right),\\
\\
\end{array}\\
\begin{array}{cc}
\mathrm{E}\left[B\left(x,y\right)\right]=\mu_{B}\left(y\right), & \mathrm{Var}\left[B\left(x,y\right)\right]=\sigma_{BB}\left(y\right),\end{array}
\end{array}\label{eq:Thurstone1}
\end{equation}
with the obvious notation for the parameters of the two distributions.
These equations form an instance of what is called \emph{marginal
selectivity} (the notion introduced in Townsend \& Schweickert, 1989)
in the dependence of $\left(A,B\right)$ on $\left(\alpha,\beta\right)$:
separately taken, the distribution of $A$ (here, normal) does not
depend on $\beta$, nor the distribution of $B$ on $\alpha$. The
problem is, however, in dealing with the covariance $\mathrm{Cov}\left[A\left(x,y\right),B\left(x,y\right)\right]$.
If it is zero for all $x,y$ (i.e., $A$ and $B$ are always stochastically
independent), the marginal selectivity is all one needs to speak of
$\alpha$ selectively causing $A$ and $\beta$ selectively causing
$B$. In general, however, the covariance depends on both $x$ and
$y$,
\[
\mathrm{Cov}\left[A\left(x,y\right),B\left(x,y\right)\right]=\sigma_{AB}\left(x,y\right).
\]

It would be unsatisfactory to simply ignore stochastic interdependence
among random variables and focus on marginal selectivity alone. It
will be shown in Section \ref{sub:Three-basic-properties} that marginal
selectivity is too weak a concept to allow one to write $A\left(x,y\right)=A\left(x\right),B\left(x,y\right)=B\left(y\right)$,
because $A\left(x\right)$ generally does not preserve its identity
(is not the same random variable) under different $y$, and analogously
for $B\left(y\right)$ under different $x$. So one needs to answer
the conceptual question: under what forms of the dependence of $\sigma_{AB}$
on $\left(x,y\right)$ can one say that the diagram (\ref{diag:Thurstone})
is correct? Even in the seemingly simple special cases one cannot
reply on one's common sense alone. Thus, if $\sigma_{AB}\left(x,y\right)=\sigma_{AB}\left(x\right)$,
what does this tell us about the selectiveness? Even simpler: what
can one conclude if one finds out that $\sigma_{AB}\left(x,y\right)=const\not=0$
across all $x,y$? After all, if $\sigma_{AB}$ is a constant, other
measures of stochastic interdependence will be functions of both $x$
and $y$. For instance, the correlation coefficient then is 
\[
\mathrm{Cor}\left[A\left(x,y\right),B\left(x,y\right)\right]=\frac{const}{\sqrt{\sigma_{AA}\left(x\right)\sigma_{BB}\left(y\right)}}=\rho\left(x,y\right).
\]
One might be tempted to adopt a radical solution: to always attribute
each of $A$ and $B$ to both $\alpha$ and $\beta$ (i.e., deny any
selectiveness), unless $A$ and $B$ are stochastically independent
and exhibit marginal selectivity. But a simple example will show that
such an approach would be far too restrictive to be useful. 

Consider the model in which the observer can be in one of two states
of attention, or activation, called {}``attentive'' and {}``inattentive,''
with probabilities $p$ and $1-p$, respectively. When in the inattentive
state, the stimuli $\alpha,\beta$ (with respective values $x,y$)
cause independent normally distributed images $A\left(x\right),B\left(y\right)$,
with parameters
\[
\begin{array}{c}
\begin{array}{cc}
\mathrm{E}\left[A\left(x\right)\right]=0, & \mathrm{Var}\left[A\left(x\right)\right]=1,\\
\\
\end{array}\\
\begin{array}{cc}
\mathrm{E}\left[B\left(y\right)\right]=0, & \mathrm{Var}\left[B\left(y\right)\right]=1.\end{array}
\end{array}
\]
That is, in the inattentive state the distribution of the images does
not depend on the stimuli at all. When in the attentive state, $A\left(x\right),B\left(y\right)$
remain independent and normally distributed, but their parameters
change as
\[
\begin{array}{c}
\begin{array}{cc}
\mathrm{E}\left[A\left(x\right)\right]=\mu_{A}\left(x\right), & \mathrm{Var}\left[A\left(x\right)\right]=1,\\
\\
\end{array}\\
\begin{array}{cc}
\mathrm{E}\left[B\left(y\right)\right]=\mu_{B}\left(y\right), & \mathrm{Var}\left[B\left(y\right)\right]=1.\end{array}
\end{array}
\]
We note that, first, $A$ and $B$ are stochastically independent
in either state of attention; second, that $A$ does not depend on
$\beta$ and $B$ does not depend on $\alpha$ in either state of
attention; and third, that the switches from one attention state to
another do not depend on the stimuli at all. It is intuitively clear
then that the causality is selective here, in conformity with the
diagram \ref{diag:Thurstone}. But the overall distribution of $A,B$
in this example (a mixture of two bivariate normal distributions),
while obviously satisfying marginal selectivity, has 
\[
\begin{array}{c}
\mathrm{Cov}\left[A\left(x,y\right),B\left(x,y\right)\right]=p\left(1-p\right)\mu_{A}\left(x\right)\mu_{B}\left(y\right)\neq0.\end{array}
\]
In the theory of selectiveness presented later in this paper it is
easily proved that in this situation $A$ only depends on $\alpha$
and $B$ on $\beta$, in spite of their stochastic interdependence
(see Example \ref{exa:SI}).

It is instructive to see that if one ignores the issue of selectiveness
and formulates Thurstone's general model as Thurstone did it himself,
with no restrictions imposed on the covariance $\sigma_{AB}\left(x,y\right)$,
the model becomes redundant and unfalsifiable, not just with respect
to a finite matrix of data, but for any theoretical probability function
\begin{equation}
\begin{array}{c}
p\left(x,y\right)=\Pr\left[A\left(x,y\right)<B\left(x,y\right)\right]\\
\\
\qquad=\Phi\left(\frac{\mu_{B}\left(y\right)-\mu_{A}\left(x\right)}{\sqrt{\sigma_{AA}\left(x\right)+\sigma_{BB}\left(y\right)+2\sigma_{AB}\left(x,y\right)}}\right),
\end{array}\label{eq:Thurstone formula}
\end{equation}
where $\Phi$ is the standard normal integral. Denoting $z\left(x,y\right)=\Phi^{-1}\left(p\left(x,y\right)\right)$,
let $\mu_{A}\left(x\right)$ and $\mu_{B}\left(y\right)$ be any functions
such that
\[
\left|\frac{\mu_{A}\left(x\right)-\mu_{B}\left(y\right)}{z\left(x,y\right)}\right|<M,
\]
for some $M$. Then, putting $\sigma_{AA}\left(x\right)\equiv\sigma_{BB}\left(y\right)\equiv M^{2}/2$,
one can always find the covariance $\sigma_{AB}\left(x,y\right)$
to satisfy $\left(\ref{eq:Thurstone formula}\right)$. On a moment's
reflection, this is what one should expect: without the assumption
of selective influences Thurstone's general model is essentially the
same as the vacuous {}``model'' in which stimuli $\alpha$ and $\beta$
evoke a single normally distributed random variable $D\left(x,y\right)$
(interpretable as {}``subjective difference'' between the value
$x$ of $\alpha$ and the value $y$ of $\beta$), with the decision
rule {}``say that $\beta$ exceeds $\alpha$ (in a given respect)
if $D\left(x,y\right)<0$, otherwise say that $\alpha$ exceeds $\beta$.'' 

The importance of having a principled way of selectively attributing
stochastic images to stimuli they represent is even more obvious in
the context of the Thurstonian-type models applied to same-different
rather than greater-less judgments (Dzhafarov, 2002). When combined
with another constraint, called the {}``well-behavedness'' of the
random variables representing stimuli, the notion of selective influences
has been shown to impose highly non-obvious constraints on the minima
of discrimination functions and the relationship {}``$x$ of $\alpha$
is the best match for $y$ of $\beta$'' (for details, see Dzhafarov,
2003b-c, 2006; Kujala \& Dzhafarov, 2009)

\subsection{History and related notions}

Historically, the notion of selective probabilistic causality was
introduced in psychology by Sternberg (1969), in the context of the
reconstruction of {}``stages'' of mental processing. If $\alpha$
and $\beta$ are certain experimental manipulations (say, size of
memory lists and legibility of items, respectively), and if $A$ and
$B$ are durations of two hypothetical stages of processing (say,
memory search and perception, respectively), then one can hope to
test this hypothesis (that memory search and perception are indeed
two stages, processes occurring one after another) only if one assumes
that $A$ is selectively influenced by $\alpha$ and $B$ by $\beta$.
Sternberg allows for the possibility of $A$ and $B$ being stochastically
interdependent, but it seems that in this case he reduces the selectivity
of the influence of $\alpha,\beta$ upon $A,B$ to a condition that
is weaker than even marginal selectivity: the condition is that the
mean value of $A$ only depends on $\alpha$ and the mean value of
$B$ on $\beta$, while any other parameter of the distributions of
$A$ and $B$, say, variance, may very well depend on both $\alpha$
and $\beta$.

Townsend (1984), basing his analysis on Townsend and Ashby (1983,
Chapter 12), was the first to investigate the notion of selective
influences without assuming that the processes which may be selectively
influenced by factors are organized serially. He proposed to formalize
the notion of selectively influenced and stochastically interdependent
random variables by the concept of {}``indirect nonselectiveness'':
the conditional distribution of the variable $A$ given any value
$b$ of the variable $B$, depends on $\alpha$ only, and, by symmetry,
the conditional distribution of $B$ at any $A=a$ depends on $\beta$
only. Under the name of {}``conditionally selective influence''
this notion was mathematically characterized and generalized in Dzhafarov
(1999). Although interesting in its own right, this notion turns out
to be inadequate, however, for capturing even the most obvious desiderata
for the notion of selective influences. In particular, indirect nonselectiveness
does not imply marginal selectivity, in fact is not even compatible
with it in nontrivial cases. Consider Thurstone's general model again.
If both the indirect nonselectiveness and marginal selectivity are
satisfied, then
\[
\mathrm{E}\left[A|B=b\right]=\mu_{A}\left(x\right)+\frac{\sigma_{AB}\left(x,y\right)}{\sigma_{BB}\left(y\right)}\left(b-\mu_{B}\left(y\right)\right)=\mu_{A|b}\left(x\right),
\]
\[
\mathrm{Var}\left[A|B=b\right]=\left(1-\frac{\sigma_{AB}^{2}\left(x,y\right)}{\sigma_{AA}\left(x\right)\sigma_{BB}\left(y\right)}\right)\sigma_{AA}\left(x\right)=\sigma_{AA|b}\left(x\right),
\]
\[
\mathrm{E}\left[B|A=a\right]=\mu_{B}\left(y\right)+\frac{\sigma_{AB}\left(x,y\right)}{\sigma_{AA}\left(x\right)}\left(a-\mu_{A}\left(x\right)\right)=\mu_{B|a}\left(y\right),
\]
\[
\mathrm{Var}\left[B|A=a\right]=\left(1-\frac{\sigma_{AB}^{2}\left(x,y\right)}{\sigma_{AA}\left(x\right)\sigma_{BB}\left(y\right)}\right)\sigma_{BB}\left(y\right)=\sigma_{BB|a}\left(y\right).
\]

It is not difficult to show that these equations can be satisfied
if and only if either
\begin{lyxlist}{00.00.0000}
\item [{(i)}] $\sigma_{AB}\left(x,y\right)\equiv0$, in which case the
notions of indirect nonselectiveness and of marginal selectivity simply
coincide; or
\item [{(ii)}] the joint distribution of $\left(A,B\right)$ does not depend
on either $\alpha$ or $\beta$ (i.e., $\mu_{A},\mu_{B},\sigma_{AA},\sigma_{BB}$,
and $\sigma_{AB}$ are all constants).
\end{lyxlist}
Neither of these cases, of course, calls for indirect nonselectiveness
as a separate notion.

The difficulty of developing a rigorous and useful definition of selective
influences has nothing to do with the fact that in the above examples
the random outputs in the diagrams of selective influences are unobservable.
They may very well be entirely observable, at least on a sample level.
An example would be two performance tests, with outcomes $A$ and
$B$, conducted on a group of people divided into four subgroups according
as they were trained or not trained for the $A$-test and for the
$B$-test. It may be reasonable to hypothesize (at least for some
pairs of tests) that the random test score $A$ is selectively influenced
by the factor $\alpha$ with the values `not trained for the $A$-test'
and `trained for the $A$-test', while the random test score $B$
is selectively influenced by the factor $\beta$ with the values `not
trained for the $B$-test' and `trained for the $B$-test'. It is
highly likely, however, that the values of $A$ and $B$ will be stochastically
interdependent within each of the four subgroups.

A definition of selective influences we adopt in this paper was proposed
in Dzhafarov (2003a), and further developed in Dzhafarov and Gluhovsky
(2006), Kujala and Dzhafarov (2008), and Dzhafarov and Kujala (2010).
Its rigorous formulation is given in Section \ref{sec:Basic-Notions},
but the gist of it, when applied to a diagram like (\ref{diag:Thurstone}),
is as follows: there is a random entity $R$ whose distribution does
not depend on either of the factors $\alpha,\beta$, such that $A$
can be presented as a transformation of $R$ determined by the value
$x$ of $\alpha$, and $B$ can be presented as a transformation of
$R$ determined by the value $y$ of $\beta$, so that for every allowable
pair $x,y$, the joint distribution of $A,B$ at these $x,y$ is the
same as the joint distributions of the two corresponding transformations
of $R$. In the case of the diagram (\ref{diag:1}), the transformations
are 
\[
f_{1}\left(R,x,y,u\right),f_{2}\left(R,y\right),f_{3}\left(R,x,z,u\right),
\]
where $x,y,z,u$ are values of $\alpha,\beta,\gamma,\delta$, respectively. 

With some additional assumptions this definition has been applied
to Thurstonian-type modeling for same-different comparisons (Dzhafarov,
2003b-c; Kujala \& Dzhafarov, 2009), as well as to the hypothetical
networks of processes underlying response times (Dzhafarov, Schweickert,
Sung, 2004; Schweickert, Fisher, \& Goldstein, 2010). Unexplicated,
intuitive uses of this notion's special versions can even be found
in much earlier publications, such as Bloxom (1972), Schweickert (1982),
and Dzhafarov (1992, 1997). In the latter two publications, for instance,
response time is considered the sum of a signal-dependent and a signal-independent
components, whose durations may very well be stochastically interdependent
(even perfectly positively correlated). 

Any combination of regression-analytic and factor-analytic models
can be viewed as a special version of our definition of selective
influences. When applied to the diagram (\ref{diag:1}), such a model
would have the form

\[
\begin{array}{l}
f_{1}\left(R,x,y,u\right)=h_{1}(C,x,y,u)+g_{1}(x,y,u)S_{1},\\
\\
f_{2}\left(R,y\right)=h_{2}(C,y)+g_{2}(y)S_{2},\\
\\
f_{3}\left(R,y,z,u\right)=h_{3}(C,y,z,u)+g_{3}(y,z,u)S_{3},
\end{array}
\]
where $C$ is a vector of random variables ({}``common sources of
variation''), $S_{1},S_{2},S_{3}$ are {}``specific sources of variation,''
all sources of variation being stochastically independent. To recognize
in this model our definition one should put $R=\left(C,S_{1},S_{2},S_{3}\right)$.
With some distributional assumptions, this model, for every possible
quadruple $\left(x,y,z,u\right)$, has the structure of the nonlinear
factor analysis (McDonald, 1967, 1982); the more familiar linear structure
is obtained by making $h_{1},h_{2},h_{3}$ linear in the components
of $C$.%
\footnote{To avoid confusion, our use of the term {}``factor'' is reserved
for observable external inputs (corresponding to the use of the term
in MANOVA); the unobservable {}``factors'' of the factor analysis
can be referred to in the present context as {}``sources of variation,''
or {}``sources of randomness.'' %
} 

More details on the early history of the notion of selective influences
can be found in Dzhafarov (2003a). The relation of this notion to
that of {}``probabilistic explanation'' in the sense of Suppes and
Zanotti (1982) and to that of {}``probabilistic dimensionality''
in psychometrics (Levine, 2003) are discussed in Dzhafarov and Gluhovsky
(2006). The probabilistic foundations of the issues involved are elaborated
in Dzhafarov and Gluhovsky (2006) and, especially, Dzhafarov and Kujala
(2010).

\subsection*{Plan of the paper}

In this paper we are primarily concerned with necessary (and, under
additional constraints, necessary and sufficient) conditions for diagrams
of selective influences, like (\ref{diag:1}) or (\ref{diag:Thurstone}).
We call these conditions {}``\emph{tests},'' in the same way in
mathematics we speak of the tests for convergence or for divisibility.
That is, the meaning of the term is non-statistical. We assume that
random outputs are known on the population level. The principles of
constructing statistical tests based on our population level tests
are discussed in Section \ref{sub:Sample-level-tests}, but specific
statistical issues are outside the scope of this paper. 

Unlike in Dzhafarov and Kujala (2010), we do not pursue the goal of
maximal generality of formulations, focusing instead on the conceptual
set-up that would apply to commonly encountered experimental designs.
This means a finite number of factors, each having a finite number
of values, with some (not necessarily all) combinations of the values
of the factors serving as allowable treatments. It also means that
the random outcomes influenced by these factors are \emph{random variables}:
their values are vectors of real numbers or elements of countable
sets, rather than more complex structures, such as functions or sets.
To keep the paper self-contained, however, we have added an appendix
in which we formulate the main definitions and statements of the theory
on a much higher level of generality: for arbitrary sets of factors,
arbitrary sets of factors values, and arbitrarily complex random outcomes.

In Section \ref{sec:Basic-Notions} we introduce the notion of several
random variables influenced by several factors and formulate a definition
of selective influences. In Section \ref{sec:Joint-Distribution-Criterion}
we present the Joint Distribution Criterion, a necessary and sufficient
condition for selective influences (or, if one prefers, an alternative
definition thereof), and we list three basic properties of selective
influences. In the same section we formulate the principle by which
one can construct tests for selective influences, on population and
sample levels. In Section \ref{sec:Linear-Feasibility-Test} we describe
the main and universally applicable test for selective influences,
Linear Feasibility Test. The test is universally applicable because
every random outcome and every set of factors can be discretized into
a finite number of categories. The Linear Feasibility Test is both
necessary and sufficient condition for selective influences within
the framework of the chosen discretization of inputs and outputs.
In Section \ref{sec:Distance-type-tests} we study tests based on
{}``pseudo-quasi-metrics'' defined on spaces of jointly distributed
random variables, and we introduce many examples of such tests. Finally,
in Section \ref{sec:Non-distance-tests} we discuss, with less elaboration,
two examples of non-distance-type tests.

\section{Basic Notions\label{sec:Basic-Notions}}

\subsection{Factors, factor points, treatments}

A \emph{factor} $\alpha$, formally, is a set of \emph{factor points},
each of which has the format {}``value (or level) $x$ of factor
$\alpha$.'' In symbols, this can be presented as $\left(x,`\alpha\textnormal{'}\right)$,
where $`\alpha\textnormal{'}$ is the unique name of the set $\alpha$
rather than the set itself. It is convenient to write $x^{\alpha}$
in place of $\left(x,`\alpha\textnormal{'}\right)$. Thus, if a factor
with the name $`intensity\textnormal{'}$ has three levels, $`low,\textnormal{'}$
$`medium,\textnormal{'}$ and $`high,\textnormal{'}$ then this factor
is taken to be the set 
\[
intensity=\left\{ low^{intensity},medium^{intensity},high^{intensity}\right\} .
\]
There is no circularity here, for, say, the factor point $low^{intensity}$
stands for $\left(value=low,name=`intensity\textnormal{'}\right)$
rather than $\left(value=low,set=intensity\right)$.

In the main text we will deal with finite sets of factors $\Phi=\left\{ \alpha_{1},\ldots,\alpha_{m}\right\} $,
with each factor $\alpha\in\Phi$ consisting of a finite number of
factor points, 
\[
\alpha=\left\{ v_{1}^{\alpha},\ldots,v_{k_{\alpha}}^{\alpha}\right\} .
\]
Clearly, $\alpha\cap\beta=\textrm{Ø}$ for any distinct $\alpha,\beta\in\Phi$. 

A \emph{treatment}, as usual, is defined as the set of factor points
containing one factor point from each factor,%
\footnote{We present treatments as sets $\left\{ x_{1}^{\alpha_{1}},\ldots,x_{m}^{\alpha_{m}}\right\} $
rather than vectors $\left(x_{1}^{\alpha_{1}},\ldots,x_{m}^{\alpha_{m}}\right)$,
which would be a correct representation of elements of $\alpha_{1}\times\ldots\times\alpha_{m}$,
because the superscripting we use makes the ordering of the points
$x_{i}^{\alpha_{i}}$ irrelevant.%
} 
\[
\phi=\left\{ x_{1}^{\alpha_{1}},\ldots,x_{m}^{\alpha_{m}}\right\} \in\alpha_{1}\times\ldots\times\alpha_{m}.
\]

The \emph{set of treatments} (used in an experiment or considered
in a theory) is denoted by $T\subset\alpha_{1}\times\ldots\times\alpha_{m}$
and assumed to be nonempty. Note that $T$ need not include all possible
combinations of factor points. This is an important consideration
in view of the {}``canonical rearrangement'' described below. Also,
incompletely crossed designs occur broadly --- in an experiment because
the entire set $\alpha_{1}\times\ldots\times\alpha_{m}$ may be too
large, or in a theory because certain combinations of factor points
may be physically or logically impossible (e.g., contrast and shape
cannot be completely crossed if zero is one of the values for contrast).
\begin{example}
\label{exa:factors}In the diagram (\ref{diag:1}), let $\alpha,\beta,\gamma$,
and $\delta$ have respectively 3, 2, 1, and 2 values. Then these
factors can be presented as 
\[
\Phi=\left\{ \begin{array}{l}
\alpha=\left\{ 1^{\alpha},2^{\alpha},3^{\alpha}\right\} ,\\
\\
\beta=\left\{ 1^{\beta},2^{\beta}\right\} ,\\
\\
\gamma=\left\{ 1^{\gamma}\right\} ,\\
\\
\delta=\left\{ 1^{\delta},2^{\delta}\right\} 
\end{array}\right\} .
\]
The only constraint on one's choice of the labels for the values (here,
$1,2,3$) is that within a factor they should be pairwise distinct.
Due to the unique superscripting, no two factors can share a factor
point. The maximum number of possible treatments in this example is
12, in which case
\[
T=\left\{ \begin{array}{c}
\left\{ 1^{\alpha},1^{\beta},1^{\delta}\right\} ,\left\{ 1^{\alpha},1^{\beta},2^{\delta}\right\} ,\left\{ 1^{\alpha},2^{\beta},1^{\delta}\right\} ,\left\{ 1^{\alpha},2^{\beta},2^{\delta}\right\} ,\\
\\
\left\{ 2^{\alpha},1^{\beta},1^{\delta}\right\} ,\left\{ 2^{\alpha},1^{\beta},2^{\delta}\right\} ,\left\{ 2^{\alpha},2^{\beta},1^{\delta}\right\} ,\left\{ 2^{\alpha},2^{\beta},2^{\delta}\right\} ,\\
\\
\left\{ 3^{\alpha},1^{\beta},1^{\delta}\right\} ,\left\{ 3^{\alpha},1^{\beta},2^{\delta}\right\} ,\left\{ 3^{\alpha},2^{\beta},1^{\delta}\right\} ,\left\{ 3^{\alpha},2^{\beta},2^{\delta}\right\} 
\end{array}\right\} .
\]
We have deleted $1^{\gamma}$ from all treatments because a factor
with a single factor point can always be removed from a diagram (or
added to a diagram, if convenient; see $\textrm{Ø}^{\alpha}$ notation
in Section \ref{sub:Canonical-Rearrangement}).\qed
\end{example}

\subsection{Random variables}

A rigorous definition of a \emph{random variable} (as a special case
of a random entity) is given in the appendix. For simplicity of notation,
any \emph{random variable} $A$ considered in the main text may be
assumed to be a vector of {}``more elementary'' \emph{discrete}
and \emph{continuous} random variables: for a discrete variable, the
set of its possible values is countable (finite or infinite), and
each value possesses a \emph{probability mass}; in the continuous
case, the set of possible values is $\mathbb{R}^{N}$ (vectors with
$N$ real-valued components), and each $a\in\mathcal{A}$ possesses
a conventional \emph{probability density}. So a random variable $A$
consists of several jointly distributed components, $(A_{1},\ldots,A_{k})$,
some (or all) of which are continuous and some (or all) of which are
discrete. Note that random vectors in this terminology are random
variables. The set of possible values of $A$ is denoted $\mathcal{A}$
and each $a\in\mathcal{A}$ has a mass/density value $p\left(a\right)$
associated with it.%
\footnote{Probability mass/density is generally the Radon-Nikodym derivative
with respect to the product of a counting measure and the Lebesgue
measure on $\mathbb{R}^{N}$.%
}

Every vector of jointly distributed random variables $A=(A_{1},\ldots,A_{n})$
is a random variable, and every value $a=\left(a_{1},\ldots,a_{n}\right)\in\mathcal{A}_{1}\times\ldots\times\mathcal{A}_{n}$
of this random variable possesses a \emph{joint mass/density} $p\left(a\right)=p(a_{1},\ldots,a_{n})$;
then for any subvector $(a_{i_{1}},\ldots,a_{i_{k}})$ of $(a_{1},\ldots,a_{n})$
the mass/density $p_{i_{1}\ldots,i_{k}}(a_{i_{1}},\ldots,a_{i_{k}})$
is obtained by summing and/or integrating $p(a_{1},\ldots,a_{n})$
across all possible values of $(a_{1},\ldots,a_{n})-(a_{i_{1}},\ldots,a_{i_{k}})$.
Note, however, that a vector of random variables $A=(A_{1},\ldots,A_{n})$
need not be a random variable, because $(A_{1},\ldots,A_{n})$ need
not possess a joint distribution.

We use the relational symbol $\sim$ in the meaning of {}``is distributed
as.'' $A\sim B$ is well defined irrespective of whether $A$ and
$B$ are jointly distributed. 

Let, for each treatment $\phi\in T$, there be a vector of jointly
distributed random variables with the set of possible values $\mathcal{A}=\mathcal{A}_{1}\times\ldots\times\mathcal{A}_{n}$
(that does not depend on $\phi$) and probability mass/density $p_{\phi}\left(a_{1},\ldots,a_{n}\right)$
that depends on $\phi$.%
\footnote{\label{fn:Invariance}The invariance of $\mathcal{A}$ with respect
to $\phi$ (more generally, the invariance of the observation space
for $A$ with respect to $\phi$) is convenient to assume, but it
is not essential for the theory. Its two justifications are that (a)
this requirement makes it natural to speak of {}``one and the same''
$A$ whose distribution changes with $\phi$ rather than to speak
(more correctly) of different random variables $A\left(\phi\right)$
for different $\phi$; and (b) in the context of selective influences
one can always redefine the observation spaces for different treatments
$\phi$ to make them coincide (see Remark \ref{rem:Invariance} in
the appendix). %
} Then we say that we have a \emph{vector of jointly distributed random
variables that depends on treatment} $\phi$, and write 
\[
A(\phi)=(A_{1},\ldots,A_{n})(\phi),\quad\phi\in T.
\]
A correct way of thinking of $A(\phi)$ is that it represents a \emph{set
of vectors of jointly distributed random }variables, each of these
vectors being labeled (indexed) by a particular treatment. Any subvector
of $A\left(\phi\right)$ should also be written with the argument
$\phi$, say, $(A_{1},A_{2},A_{3})\left(\phi\right)$. If $\phi$
is explicated as $\phi=\left\{ x_{1}^{\alpha_{1}},\ldots,x_{m}^{\alpha_{m}}\right\} $
or, say, $\phi=\left\{ 3^{\alpha},1^{\beta},1^{\delta}\right\} $,
we will write $A(x_{1}^{\alpha_{1}},\ldots,x_{m}^{\alpha_{m}})$ or
$(A,B,C)\left(3^{\alpha},1^{\beta},1^{\delta}\right)$ instead of
more correct $A(\{x_{1}^{\alpha_{1}},\ldots,x_{m}^{\alpha_{m}}\})$
or $(A,B,C)(\{3^{\alpha},1^{\beta},1^{\delta}\})$.

It is important to note that for distinct treatments $\phi_{1}$ and
$\phi_{2}$ the corresponding $A(\phi_{1})$ and $A(\phi_{2})$ \emph{do
not possess a joint distribution}, they are \emph{stochastically unrelated}.
This is easy to understand: since $\phi_{1}$ and $\phi_{2}$ are
mutually exclusive conditions for observing values of $A$, there
is no non-arbitrary way of choosing which value $a=(a_{1},\ldots,a_{n})$
observed at $\phi_{1}$ should be paired with which value $a'=(a'_{1},\ldots,a'_{n})$
observed at $\phi_{2}$. To consider $A(\phi_{1})$ and $A(\phi_{2})$
stochastically independent and to pair every possible value of $A(\phi_{1})$
with every possible value $A(\phi_{2})$ is as arbitrary as, say,
to consider them positively correlated and to pair every quantile
of $A(\phi_{1})$ with the corresponding quantile of $A(\phi_{2})$. 
\begin{example}
\label{exa:rvs}In diagram (\ref{diag:1}), let $\Phi$ and $T$ be
as in Example \ref{exa:factors}, and let $A,B,C$ be binary, 0/1,
variables. Then $(A,B,C)(\phi)$ is defined, for each $\phi=\{x^{\alpha},y^{\beta},z^{\delta}\}$,
by a table of the following form:

\

\begin{center}%
\begin{tabular}{ccc|ccc|c|}
\hline 
\multicolumn{1}{|c}{$\alpha$} & $\beta$ & $\delta$ & $A$ & $B$ & $C$ & $\Pr$\tabularnewline
\hline 
\hline 
\multicolumn{1}{|c}{$x$} & $y$ & $z$ & 0 & 0 & 0 & $p_{000}$\tabularnewline
\cline{1-3} 
 &  &  & 0 & 0 & 1 & $p_{001}$\tabularnewline
 &  &  & 0 & 1 & 0 & $p_{010}$\tabularnewline
 &  &  & 0 & 1 & 1 & $p_{011}$\tabularnewline
 &  &  & 1 & 0 & 0 & $p_{100}$\tabularnewline
 &  &  & 1 & 0 & 1 & $p_{101}$\tabularnewline
 &  &  & 1 & 1 & 0 & $p_{110}$\tabularnewline
 &  &  & 1 & 1 & 1 & $p_{111}$\tabularnewline
\cline{4-7} 
\end{tabular}\end{center}

\

\protect{\noindent}separately for each of the 12 treatments.\qed
\end{example}

\subsection{Selective influences}

Given a set of factors $\Phi=\left\{ \alpha_{1},\ldots,\alpha_{m}\right\} $
and a vector $A(\phi)=(A_{1},\ldots,A_{n})(\phi)$ of random variables
depending on treatment, a \emph{diagram of selective influences} is
a mapping
\begin{equation}
M:\left\{ 1,\ldots,n\right\} \rightarrow2^{\Phi}\label{eq:DSI}
\end{equation}
($2^{\Phi}$ being the set of subsets of $\Phi$), with the interpretation
that 
\[
\Phi_{i}=M\left(i\right)
\]
is the subset of factors (which may be empty) \emph{selectively influencing}
$A_{i}$ ($i=1,\ldots,n$). The definition of selective influences
is yet to be given (Definition \ref{def:(Selective-Influence)}),
but for the moment think simply of arrows drawn from factors to random
variables (or vice versa). The subset of factors $\Phi_{i}$ influencing
$A_{i}$ determines, for any treatment $\phi\in T$, the subtreatments
$\phi_{\Phi_{i}}$ defined as
\[
\phi_{\Phi_{i}}=\left\{ x^{\alpha}\in\phi:\alpha\in\Phi_{i}\right\} ,\quad i=1,\ldots,n.
\]
Subtreatments $\phi_{\Phi_{i}}$ across all $\phi\in T$ can be viewed
as \emph{admissible} \emph{values} of the subset of factors $\Phi_{i}$
($i=1,\ldots,n$). Note that $\phi_{\Phi_{i}}$ is empty whenever
$\Phi_{i}$ is empty. 
\begin{example}
\label{exa:subtreatments}In the diagram \ref{diag:1}, having enumerated
$A,B,C$ by 1,2,3, respectively, $\Phi_{1}=\left\{ \alpha,\beta,\delta\right\} $,
$\Phi_{2}=\left\{ \beta\right\} $, $\Phi_{3}=\left\{ \alpha,\gamma,\delta\right\} $.
If the factor points are as in Examples \ref{exa:factors} and \ref{exa:rvs},
then, choosing $\phi=\left\{ 3^{\alpha},1^{\beta},1^{\gamma},2^{\delta}\right\} $,
we have $\phi_{\Phi_{1}}=\left\{ 3^{\alpha},1^{\beta},2^{\delta}\right\} $,
$\phi_{\Phi_{2}}=\left\{ 1^{\beta}\right\} $, and $\phi_{\Phi_{3}}=\left\{ 3^{\alpha},1^{\gamma},2^{\delta}\right\} $
(where $\gamma$ and its only point $1^{\gamma}$ can be omitted everywhere,
making, in particular, the treatments $\phi_{\Phi_{1}}$ and $\phi$
coincide).\qed
\end{example}
The definition below is a special case of the definition of selective
influences given in the appendix. This definition will be easier to
justify in terms of the Joint Distribution Criterion formulated in
the next section. 
\begin{defn}[\emph{Selective influences}]
\label{def:(Selective-Influence)} A vector of random variables $A\left(\phi\right)=(A_{1},\ldots,A_{n})\left(\phi\right)$
is said to satisfy a diagram of selective influences (\ref{eq:DSI})
if there is a random variable%
\footnote{\label{fn:entity}Even though $A\left(\phi\right)$ is a random variable,
and $\Phi$ is a finite set of factors containing a finite set of
factor points each, the requirement in the definition that $R$ be
a random variable is unnecessarily restrictive: it is sufficient to
require the existence of a \emph{random entity $R$ }distributed on
some probability space $\left(\mathcal{R},\Sigma_{\mathcal{R},},\mu\right)$
(see the appendix). It is shown in the appendix, however, based on
the Joint Distribution Criterion, that if the definition is satisfied
with an arbitrary $R$, then the latter can always be chosen to be
a random variable --- discrete, continuous, or mixed according as
the variable $A\left(\phi\right)$ is discrete, continuous, or mixed.
(Recall that in our terminology every vector of random variables is
a random variable.) Moreover, $R$ can always be chosen to be distributed
unit-uniformly, or according to any distribution function strictly
increasing on any interval of reals constituting $\mathcal{R}$.%
} $R$ taking values on some set $\mathcal{R}$, and functions $f_{i}:\Phi_{i}\times R\rightarrow\mathcal{A}_{i}$
($i=1,\ldots,n$), such that, for any treatment $\phi\in T$, 
\begin{equation}
(A_{1},\ldots,A_{n})\left(\phi\right)\sim(f_{1}(\phi_{\Phi_{1}},R),\ldots,f_{n}(\phi_{\Phi_{n}},R)).\label{eq:SI def}
\end{equation}
We write then, schematically, $(A_{1},\ldots,A_{n})\looparrowleft(\Phi_{1},\ldots,\Phi_{n})$.

\

The qualifier {}``schematically'' in reference to $(A_{1},\ldots,A_{n})\looparrowleft(\Phi_{1},\ldots,\Phi_{n})$
is due to the fact that $(A_{1},\ldots,A_{n})$ is not well-defined
without mentioning a treatment $\phi$ at which these variables are
taken. This notation, therefore, is merely a compact way of referring
to the diagram (\ref{eq:DSI}). \end{defn}
\begin{example}
\label{exa:SI}Consider the Thurstonian {}``mixture'' model described
in the introduction:
\[
\xymatrix{ & \boxed{state\ar@{-->}[dl]_{1-p}\ar@{-->}[dr]^{p}}\\
\boxed{inattentive\ar@{-->}[d]} &  & \boxed{attentive\ar@{-->}[d]}\\
\boxed{\begin{array}{c}
\mu_{A}=0,\sigma_{AA}=1\\
\mu_{B}=0,\sigma_{BB}=1\\
\sigma_{AB}=0
\end{array}} &  & \boxed{\begin{array}{c}
\mu_{A}\left(x^{\alpha}\right),\sigma_{AA}=1\\
\mu_{B}\left(y^{\beta}\right),\sigma_{BB}=1\\
\sigma_{AB}=0
\end{array}}
}
\]

\protect{\noindent}The selectivity $(A,B)\looparrowleft(\alpha,\beta)$
here is shown by
\begin{enumerate}
\item putting $R=\left(S,N_{1},N_{2}\right)$, where $S$ is a Bernoulli
(0/1) variable with $\Pr\left[S=1\right]=p$, $N_{1},N_{2}$ are standard
normal variables, and the three variables are independent;
\item defining 
\[
\begin{array}{r}
\left(f_{1}\left(x^{\alpha},\left(S,N_{1},N_{2}\right)\right),f_{2}\left(y^{\beta},\left(S,N_{1},N_{2}\right)\right)\right)\\
\\
=\left(\mu_{A}\left(x^{\alpha}\right)S+N_{1},\mu_{B}\left(y^{\beta}\right)S+N_{2}\right);
\end{array}
\]
 
\item and observing that 
\[
\left(\mu_{A}\left(x^{\alpha}\right)S+N_{1},\mu_{B}\left(y^{\beta}\right)S+N_{2}\right)\sim\left(A,B\right)\left(x^{\alpha},y^{\beta}\right)
\]
 for all treatments $\left\{ x^{\alpha},y^{\beta}\right\} .$\qed 
\end{enumerate}
\end{example}
\begin{rem}
Note that the components of $(f_{1}(\phi_{\Phi_{1}},R),\ldots,f_{n}(\phi_{\Phi_{n}},R))$
are jointly distributed for any given $\phi$ because they are functions
of one and the same random variable. The components of $(A_{1},\ldots,A_{n})(\phi)$
are jointly distributed for any given $\phi$ by definition. There
is, however, no joint distribution of these two vectors, $(f_{1}(\phi_{\Phi_{1}},R),\ldots,f_{n}(\phi_{\Phi_{n}},R))$
and $(A_{1},\ldots,A_{n})(\phi)$, for any $\phi$; and, as emphasized
earlier, no joint distribution for $(A_{1},\ldots,A_{n})(\phi_{1})$
and $(A_{1},\ldots,A_{n})(\phi_{2})$, for distinct $\phi_{1}$ and
$\phi_{2}$. 
\end{rem}

\section{\label{sec:Joint-Distribution-Criterion}Joint Distribution Criterion}

\subsection{Canonical Rearrangement\label{sub:Canonical-Rearrangement}}

The simplest diagram of selective influences is \emph{bijective},
\begin{equation}
\boxed{\xymatrix{\alpha_{1}\ar[d] & \ldots & \alpha_{n}\ar[d]\\
A_{1} & \ldots & A_{n}
}
}\label{diag:bijective DSI}
\end{equation}
In this case we write $(A_{1},\ldots,A_{n})\looparrowleft(\alpha_{1},\ldots,\alpha_{n})$
instead of $(A_{1},\ldots,A_{n})\looparrowleft(\Phi_{1}=\{\alpha_{1}\},\ldots,\Phi_{n}=\{\alpha_{n}\})$. 

We can simplify the subsequent discussion without sacrificing generality
by agreeing to reduce each diagram of selective influences to a bijective
form, by appropriately redefining factors and treatments. It is almost
obvious how this should be done. Given the subsets of factors $\Phi_{1}\ldots,\Phi_{n}$
determined by a diagram of selective influences (\ref{eq:DSI}), each
$\Phi_{i}$ can be viewed as a factor identified with the set of factor
points 
\[
\alpha_{i}^{*}=\left\{ (\phi_{\Phi_{i}})^{\alpha_{i}^{*}}:\phi\in T\right\} ,
\]
in accordance with the notation we have adopted for factor points:
$(\phi_{\Phi_{i}})^{\alpha_{i}^{*}}=(\phi_{\Phi_{i}},`\alpha^{*}\textnormal{'})$.
If $\Phi_{i}$ is empty, then $\phi_{\Phi_{i}}$ is empty too, and
we should designate a certain value, say $\textrm{Ø}^{\alpha_{i}^{*}}$,
as a dummy factor point (the only element of factor $\alpha_{i}^{*}$).
The set of treatments $T$ for the original factors $\{\alpha_{1},\ldots,\alpha_{m}\}$
should then be redefined for the vector of new factors $(\alpha_{1}^{*},\ldots,\alpha_{n}^{*})$
as
\[
T^{*}=\left\{ \left\{ (\phi_{\Phi_{1}})^{\alpha_{1}^{*}},\ldots,(\phi_{\Phi_{n}})^{\alpha_{n}^{*}}\right\} :\phi\in T\right\} \subset\alpha_{1}^{*}\times\ldots\times\alpha_{n}^{*}.
\]
We call this redefinition of factor points, factors, and treatments
the \emph{canonical rearrangement.} 
\begin{example}
Diagram (\ref{diag:1}), with the factors defined as in Example \ref{exa:factors}
(with $\gamma$ omitted), is reduced to a bijective form as follows:
\[
\begin{array}{l}
\alpha^{*}=\left\{ \left\{ x_{1}^{\alpha},x_{2}^{\beta},x_{3}^{\delta}\right\} ^{\alpha^{*}}:\left\{ x_{1}^{\alpha},x_{2}^{\beta},x_{3}^{\delta}\right\} \in\alpha\times\beta\times\delta\right\} ,\\
\\
\beta_{2}^{*}=\left\{ \left\{ y^{\beta}\right\} ^{\beta^{*}}:y^{\beta}\in\beta\right\} ,\\
\\
\gamma_{3}^{*}=\left\{ \left\{ z_{1}^{\alpha},z_{3}^{\delta}\right\} ^{\gamma^{*}}:\left\{ z_{1}^{\alpha},z_{3}^{\delta}\right\} \in\alpha\times\delta\right\} ,
\end{array}
\]
with, respectively, 12, 2, and 6 factor points, and
\[
T^{*}=\left\{ \begin{array}{r}
\left\{ \left\{ x_{1}^{\alpha},x_{2}^{\beta},x_{3}^{\delta}\right\} ^{\alpha^{*}},\left\{ y^{\beta}\right\} ^{\beta^{*}},\left\{ z_{1}^{\alpha},z_{3}^{\delta}\right\} ^{\gamma^{*}}\right\} \in\alpha_{1}^{*}\times\beta_{2}^{*}\times\gamma_{3}^{*}\\
\\
:x_{1}^{\alpha}=z_{1}^{\alpha},x_{2}^{\beta}=y^{\beta},x_{3}^{\delta}=z_{3}^{\delta}
\end{array}\right\} ,
\]
the number of treatments, obviously remaining the same, 12, as for
the original factors.\qed
\end{example}
The purpose of canonical rearrangement is to achieve a bijective correspondence
between factors and the random variables selectively influenced by
these factors. Equivalently, we may say that the random variables
following canonical rearrangement can be indexed by the factors (assumed
to be) selectively influencing them. Thus, if we test the hypothesis
that $(A_{1},\ldots,A_{n})\looparrowleft(\alpha_{1},\ldots,\alpha_{n})$,
we can, when convenient, write $A_{\left\{ \alpha_{1}\right\} }$
in place of $A_{1}$, $A_{\left\{ \alpha_{2}\right\} }$ in place
of $A_{2},$ etc.

\subsection{The criterion}

From now on let us assume that we deal with bijective diagrams of
selective influences, (\ref{diag:bijective DSI}). The notation $\phi_{\Phi_{i}}=\phi_{\left\{ \alpha_{i}\right\} }$
then indicates the singleton set $\left\{ x^{\alpha_{i}}\right\} \subset\phi$.
As usual, we write $x^{\alpha_{i}}$ in place of $\left\{ x^{\alpha_{i}}\right\} $:
\[
\phi_{\left\{ \alpha_{i}\right\} }=\left\{ x_{1}^{\alpha_{1}},\ldots,x_{n}^{\alpha_{n}}\right\} _{\left\{ \alpha_{i}\right\} }=x_{i}^{\alpha_{i}}.
\]
The definition of selective influences (Definition \ref{def:(Selective-Influence)})
then acquires the following form: 
\begin{defn}[\emph{Selective influences, bijective form}]
\label{def:(SI,-bijective)} A vector of random variables $A(\phi)=(A_{1},\ldots,A_{n})(\phi)$
is said to satisfy a diagram of selective influences (\ref{diag:bijective DSI}),
and we write $(A_{1},\ldots,A_{n})\looparrowleft(\alpha_{1},\ldots,\alpha_{n})$,
if, for some random variable%
\footnote{See footnote \ref{fn:entity}.%
} $R$ and for any treatment $\phi\in T$,
\begin{equation}
(A_{1},\ldots,A_{n})(\phi)\sim\left(f_{1}(\phi_{\{\alpha_{1}\}},R),\ldots,f_{n}(\phi_{\{\alpha_{n}\}},R)\right),
\end{equation}
where $f_{i}:\alpha_{i}\times\mathcal{R}\rightarrow\mathcal{A}_{i}$
($i=1,\ldots,n$) are some functions, with $\mathcal{R}$ denoting
the set of possible values of $R$. 

\

This definition is difficult to put to work, as it refers to an existence
of a random variable $R$ without showing how one can find it or prove
that it cannot be found. In Dzhafarov and Kujala (2010), however,
we have formulated a necessary and sufficient condition for $(A_{1},\ldots,A_{n})\looparrowleft(\alpha_{1},\ldots,\alpha_{n})$
which circumvents this problem. \end{defn}
\begin{criterion}[\emph{Joint Distribution Criterion}, JDC]
A vector of random variables $A(\phi)=(A_{1},\ldots,A_{n})(\phi)$
satisfies a diagram of selective influences (\ref{diag:bijective DSI})
if and only if there is a vector of jointly distributed random variables
\[
H=\left(\overset{\textnormal{for }\alpha_{1}}{\overbrace{H_{x_{1}^{\alpha_{1}}},\ldots,H_{x_{k_{1}}^{\alpha_{i}}}}},\ldots,\overset{\textnormal{for }\alpha_{n}}{\overbrace{H_{x_{1}^{\alpha_{n}}},\ldots,H_{x_{k_{n}}^{\alpha_{n}}}}}\right),
\]
one random variable for each factor point of each factor, such that
\begin{equation}
\left(H_{\phi_{\{\alpha_{1}\}}},\dots,H_{\phi_{\{\alpha_{n}\}}}\right)\sim A(\phi)
\end{equation}
for every treatment $\phi\in T$\textup{\emph{.}}
\end{criterion}
Due to its central role, the simple proof of this criterion (for the
general case of arbitrary factors and sets of random entities) is
reproduced in the appendix. The vector $H$ in the formulation of
the JDC is referred to as the \emph{JDC-vector for} $A(\phi)$, or
the \emph{hypothetical JDC-vector for} $A(\phi)$, if the existence
of such a vector of jointly distributed variables is in question.
\begin{example}
\label{exa:JDC}For the diagram of selective influences
\[
\boxed{\xymatrix{\alpha\ar[d] & \beta\ar[d] & \gamma\ar[d]\\
A & B & C
}
}
\]
with $\alpha=\left\{ 1^{\alpha},2^{\alpha}\right\} $, $\beta=\left\{ 1^{\beta},2^{\beta},3^{\beta}\right\} $,
$\gamma=\left\{ 1^{\gamma},2^{\gamma},3^{\gamma},4^{\gamma}\right\} $,
and the set of allowable treatments
\[
T=\left\{ \begin{array}{c}
\left\{ 1^{\alpha},2^{\beta},1^{\gamma}\right\} ,\left\{ 1^{\alpha},2^{\beta},3^{\gamma}\right\} ,\left\{ 2^{\alpha},1^{\beta},4^{\gamma}\right\} ,\\
\\
\left\{ 1^{\alpha},3^{\beta},1^{\gamma}\right\} ,\left\{ 2^{\alpha},3^{\beta},2^{\gamma}\right\} 
\end{array}\right\} ,
\]
the hypothetical JDC-vector is
\[
\left(H_{1^{\alpha}},H_{2^{\alpha}},H_{1^{\beta}},H_{2^{\beta}},H_{3^{\beta}},H_{1^{\gamma}},H_{2^{\gamma}},H_{3^{\gamma}},H_{4^{\gamma}}\right),
\]
the hypothesis being that
\[
\begin{array}{l}
(H_{1^{\alpha}},H_{2^{\beta}},H_{1^{\gamma}})\sim(A,B,C)\left(1^{\alpha},2^{\beta},1^{\gamma}\right),\\
\\
(H_{1^{\alpha}},H_{2^{\beta}},H_{3^{\gamma}})\sim(A,B,C)\left(1^{\alpha},2^{\beta},3^{\gamma}\right),\\
\\
(H_{2^{\alpha}},H_{1^{\beta}},H_{4^{\gamma}})\sim(A,B,C)\left(2^{\alpha},1^{\beta},4^{\gamma}\right),\\
\\
(H_{1^{\alpha}},H_{3^{\beta}},H_{1^{\gamma}})\sim(A,B,C)\left(1^{\alpha},3^{\beta},1^{\gamma}\right),\\
\\
(H_{2^{\alpha}},H_{3^{\beta}},H_{2^{\gamma}})\sim(A,B,C)\left(2^{\alpha},3^{\beta},2^{\gamma}\right).
\end{array}
\]
This means, in particular, that $H_{1^{\alpha}}$ and $H_{2^{\alpha}}$
have the same set of values as $A$ (which, by our convention, does
not depend on treatment), the set of values for $H_{1^{\beta}}$,
$H_{2^{\beta}}$, and $H_{3^{\beta}}$ is the same as that of $B$,
and the set of values for $H_{1^{\gamma}}$, $H_{2^{\gamma}}$, $H_{3^{\gamma}}$,
and $H_{4^{\gamma}}$ is the same as that of $C$.\qed
\end{example}
The JDC prompts a simple justification for our definition of selective
influences. Let $(A,B,C)\looparrowleft(\alpha,\beta,\gamma)$, as
in the previous example, with each factors containing two factor points.
Consider all treatments $\phi$ in which the factor point of $\alpha$
is fixed, say, at $1^{\alpha}$. If $(A,B,C)\looparrowleft(\alpha,\beta,\gamma)$,
then in the vectors of random variables
\[
(A,B,C)\left(1^{\alpha},2^{\beta},1^{\gamma}\right),(A,B,C)\left(1^{\alpha},2^{\beta},3^{\gamma}\right),(A,B,C)\left(1^{\alpha},3^{\beta},1^{\gamma}\right),
\]
the marginal distribution of the variable $A$ is one and the same,
\[
A\left(1^{\alpha},2^{\beta},1^{\gamma}\right)\sim A\left(1^{\alpha},2^{\beta},3^{\gamma}\right)\sim A\left(1^{\alpha},3^{\beta},1^{\gamma}\right).
\]
But the intuition of selective influences requires more: that we can
denote this variable $A\left(1^{\alpha}\right)$ because it \emph{preserves
its identity} (and not just its distribution) no matter what other
variables it is paired with, $(B,C)\left(2^{\beta},1^{\gamma}\right)$,
$(B,C)\left(2^{\beta},3^{\gamma}\right)$, or $(B,C)\left(3^{\beta},1^{\gamma}\right)$.
Analogous statements hold for $A\left(2^{\alpha}\right)$, $B\left(2^{\beta}\right)$,
$B\left(3^{\beta}\right)$, $C\left(1^{\gamma}\right)$. The JDC formalizes
the intuitive notion of variables {}``preserving their identity''
when entering in various combinations with each other: there are jointly
distributed random variables 
\[
H_{1^{\alpha}},H_{2^{\alpha}},H_{1^{\beta}},H_{2^{\beta}},H_{3^{\beta}},H_{1^{\gamma}},H_{2^{\gamma}},H_{3^{\gamma}},H_{4^{\gamma}}
\]
whose identity is defined by this joint distribution; when $H_{1^{\alpha}}$
is combined with random variables $H_{2^{\beta}}$ and $H_{3^{\gamma}}$,
it forms the triad $(H_{1^{\alpha}},H_{2^{\beta}},H_{1^{\gamma}})$
whose distribution is the same as that of $(A,B,C)\left(1^{\alpha},2^{\beta},1^{\gamma}\right)$;
when the same random variable $H_{1^{\alpha}}$ is combined with random
variables $H_{2^{\beta}}$ and $H_{3^{\gamma}}$, the triad $(H_{1^{\alpha}},H_{2^{\beta}},H_{3^{\gamma}})$
is distributed as $(A,B,C)\left(1^{\alpha},2^{\beta},3^{\gamma}\right)$;
and so on --- the key concept being that it is \emph{one and the same}
$H_{1^{\alpha}}$ which is being paired with other variables, as opposed
to different random variables $A\left(1^{\alpha},2^{\beta},1^{\gamma}\right),A\left(1^{\alpha},2^{\beta},3^{\gamma}\right),A\left(1^{\alpha},3^{\beta},1^{\gamma}\right)$
which are identically distributed (cf. Example \ref{exa:Identity}
below, which shows that the identity is not generally preserved if
all we know is marginal selectivity).

\subsection{Three basic properties of selective influences\label{sub:Three-basic-properties}}

The three properties in question are immediate consequences of JDC.

\subsubsection{Property 1: Nestedness.}

For any subset $\left\{ i_{1},\ldots,i_{k}\right\} $ of $\left\{ 1,\ldots,n\right\} $,
if $(A_{1},\ldots,A_{n})\looparrowleft(\alpha_{1},\ldots,\alpha_{n})$
then $(A_{i_{1}},\ldots,A_{i_{k}})\looparrowleft(\alpha_{i_{1}},\ldots,\alpha_{i_{k}})$.
\begin{example}
In Example \ref{exa:JDC}, if $(A,B,C)\looparrowleft(\alpha,\beta,\gamma)$,
then $(A,C)\looparrowleft(\alpha,\gamma)$, because the JDC criterion
for $(A,B,C)\looparrowleft(\alpha,\beta,\gamma)$ implies that $(H_{1^{\alpha}},H_{2^{\alpha}},H_{1^{\gamma}},H_{2^{\gamma}},H_{3^{\gamma}},H_{4^{\gamma}})$
are jointly distributed, and that
\[
\begin{array}{l}
(H_{1^{\alpha}},H_{1^{\gamma}})\sim(A,C)\left(1^{\alpha},1^{\gamma}\right),\\
(H_{1},H_{3^{\gamma}})\sim(A,C)\left(1^{\alpha},3^{\gamma}\right),\\
(H_{2^{\alpha}},H_{2^{\gamma}})\sim(A,C)\left(2^{\alpha},2^{\gamma}\right),\\
(H_{2^{\alpha}},H_{4^{\gamma}})\sim(A,C)\left(2^{\alpha},4^{\gamma}\right).
\end{array}
\]
Analogously, $(A,B)\looparrowleft(\alpha,\beta)$ and $(B,C)\looparrowleft(\beta,\gamma)$.
Statements with $\looparrowleft$ involving a single variable merely
indicate the dependence of its distribution on the corresponding factor:
thus, $A\looparrowleft\alpha$ simply mean that the distribution of
$A\left(x^{\alpha},y^{\beta},z^{\gamma}\right)$ does not depend on
$y^{\beta},z^{\gamma}$.\qed 
\end{example}

\subsubsection{Property 2: Complete Marginal Selectivity}

For any subset $\left\{ i_{1},\ldots,i_{k}\right\} $ of $\left\{ 1,\ldots,n\right\} $,
if $(A_{1},\ldots,A_{n})\looparrowleft(\alpha_{1},\ldots,\alpha_{n})$
then the $k$-marginal distribution%
\footnote{$k$-marginal distribution is the distribution of a subset of $k$
random variables ($k\geq1$) in a set of $n\geq k$ variables. In
Townsend and Schweickert (1989) the property was formulated for 1-marginals
of a pair of random variables. The adjective {}``complete'' we use
with {}``marginal selectivity'' is to emphasize that we deal with
all possible marginals rather than with just 1-marginals. %
} of $(A_{i_{1}},\ldots,A_{i_{k}})(\phi)$ does not depend on points
of the factors outside $(\alpha_{i_{1}},\ldots,\alpha_{i_{k}})$.
In particular, the distribution of $A_{i}$ only depends on points
of $\alpha_{i}$, $i=1,\ldots,n$. 

This is, of course, a trivial consequence of the nestedness property,
but its importance lies in that it provides the easiest to check necessary
condition for selective influences. 
\begin{example}
Let the factors, factor points, and the set of treatments be as in
Example \ref{exa:JDC}. Let the distributions of $\left(A,B,C\right)$
at the five different treatments be as shown:

\

\begin{center}%
\begin{tabular}{ccc|ccc|c|}
\hline 
\multicolumn{1}{|c}{$\alpha$} & $\beta$ & $\gamma$ & $A$ & $B$ & $C$ & $\Pr$\tabularnewline
\hline 
\hline 
\multicolumn{1}{|c}{1} & 2 & 1 & 0 & 0 & 0 & $.2$\tabularnewline
\cline{1-3} 
 &  &  & 0 & 0 & 1 & $.1$\tabularnewline
 &  &  & 0 & 1 & 0 & $.1$\tabularnewline
 &  &  & 0 & 1 & 1 & $.1$\tabularnewline
 &  &  & 1 & 0 & 0 & $.1$\tabularnewline
 &  &  & 1 & 0 & 1 & $.1$\tabularnewline
 &  &  & 1 & 1 & 0 & $.1$\tabularnewline
 &  &  & 1 & 1 & 1 & $.2$\tabularnewline
\cline{4-7} 
\end{tabular}$\quad$%
\begin{tabular}{ccc|ccc|c|}
\hline 
\multicolumn{1}{|c}{$\alpha$} & $\beta$ & $\gamma$ & $A$ & $B$ & $C$ & $\Pr$\tabularnewline
\hline 
\hline 
\multicolumn{1}{|c}{1} & 2 & 3 & 0 & 0 & 0 & $0$\tabularnewline
\cline{1-3} 
 &  &  & 0 & 0 & 1 & $.3$\tabularnewline
 &  &  & 0 & 1 & 0 & $.2$\tabularnewline
 &  &  & 0 & 1 & 1 & $0$\tabularnewline
 &  &  & 1 & 0 & 0 & $.1$\tabularnewline
 &  &  & 1 & 0 & 1 & $.1$\tabularnewline
 &  &  & 1 & 1 & 0 & $.1$\tabularnewline
 &  &  & 1 & 1 & 1 & $.2$\tabularnewline
\cline{4-7} 
\end{tabular}$\quad$%
\begin{tabular}{ccc|ccc|c|}
\hline 
\multicolumn{1}{|c}{$\alpha$} & $\beta$ & $\gamma$ & $A$ & $B$ & $C$ & $\Pr$\tabularnewline
\hline 
\hline 
\multicolumn{1}{|c}{2} & 1 & 4 & 0 & 0 & 0 & $.3$\tabularnewline
\cline{1-3} 
 &  &  & 0 & 0 & 1 & $0$\tabularnewline
 &  &  & 0 & 1 & 0 & $.3$\tabularnewline
 &  &  & 0 & 1 & 1 & $0$\tabularnewline
 &  &  & 1 & 0 & 0 & $.3$\tabularnewline
 &  &  & 1 & 0 & 1 & $0$\tabularnewline
 &  &  & 1 & 1 & 0 & $0$\tabularnewline
 &  &  & 1 & 1 & 1 & $.1$\tabularnewline
\cline{4-7} 
\end{tabular}

\

\begin{tabular}{ccc|ccc|c|}
\hline 
\multicolumn{1}{|c}{$\alpha$} & $\beta$ & $\gamma$ & $A$ & $B$ & $C$ & $\Pr$\tabularnewline
\hline 
\hline 
\multicolumn{1}{|c}{1} & 3 & 1 & 0 & 0 & 0 & $.4$\tabularnewline
\cline{1-3} 
 &  &  & 0 & 0 & 1 & $.1$\tabularnewline
 &  &  & 0 & 1 & 0 & $0$\tabularnewline
 &  &  & 0 & 1 & 1 & $0$\tabularnewline
 &  &  & 1 & 0 & 0 & $0$\tabularnewline
 &  &  & 1 & 0 & 1 & $.2$\tabularnewline
 &  &  & 1 & 1 & 0 & $.1$\tabularnewline
 &  &  & 1 & 1 & 1 & $.2$\tabularnewline
\cline{4-7} 
\end{tabular}$\quad$%
\begin{tabular}{ccc|ccc|c|}
\hline 
\multicolumn{1}{|c}{$\alpha$} & $\beta$ & $\gamma$ & $A$ & $B$ & $C$ & $\Pr$\tabularnewline
\hline 
\hline 
\multicolumn{1}{|c}{2} & 3 & 2 & 0 & 0 & 0 & $.2$\tabularnewline
\cline{1-3} 
 &  &  & 0 & 0 & 1 & $.1$\tabularnewline
 &  &  & 0 & 1 & 0 & $.2$\tabularnewline
 &  &  & 0 & 1 & 1 & $.1$\tabularnewline
 &  &  & 1 & 0 & 0 & $.3$\tabularnewline
 &  &  & 1 & 0 & 1 & $.1$\tabularnewline
 &  &  & 1 & 1 & 0 & $0$\tabularnewline
 &  &  & 1 & 1 & 1 & $0$\tabularnewline
\cline{4-7} 
\end{tabular}\end{center}

\

\protect{\noindent}One can check that marginal selectivity holds
for all 1-marginals: thus, irrespective of other factor points,

\

\begin{center}%
\begin{tabular}{|c|c|c|}
\hline 
$\alpha$ & $A$ & $\Pr$\tabularnewline
\hline 
\hline 
1 & 0 & $.5$\tabularnewline
\cline{1-1} 
\multicolumn{1}{c|}{} & 1 & $.5$\tabularnewline
\cline{2-3} 
\end{tabular}$\quad$%
\begin{tabular}{|c|c|c|}
\hline 
$\alpha$ & $A$ & $\Pr$\tabularnewline
\hline 
\hline 
2 & 0 & $.6$\tabularnewline
\cline{1-1} 
\multicolumn{1}{c|}{} & 1 & $.4$\tabularnewline
\cline{2-3} 
\end{tabular}

\

\begin{tabular}{|c|c|c|}
\hline 
$\beta$ & $B$ & $\Pr$\tabularnewline
\hline 
\hline 
1 & 0 & $.6$\tabularnewline
\cline{1-1} 
\multicolumn{1}{c|}{} & 1 & $.4$\tabularnewline
\cline{2-3} 
\end{tabular}$\quad$%
\begin{tabular}{|c|c|c|}
\hline 
$\beta$ & $B$ & $\Pr$\tabularnewline
\hline 
\hline 
2 & 0 & $.5$\tabularnewline
\cline{1-1} 
\multicolumn{1}{c|}{} & 1 & $.5$\tabularnewline
\cline{2-3} 
\end{tabular}$\quad$%
\begin{tabular}{|c|c|c|}
\hline 
$\beta$ & $B$ & $\Pr$\tabularnewline
\hline 
\hline 
3 & 0 & $.7$\tabularnewline
\cline{1-1} 
\multicolumn{1}{c|}{} & 1 & $.3$\tabularnewline
\cline{2-3} 
\end{tabular}

\

\begin{tabular}{|c|c|c|}
\hline 
$\gamma$ & $A$ & $\Pr$\tabularnewline
\hline 
\hline 
1 & 0 & $.5$\tabularnewline
\cline{1-1} 
\multicolumn{1}{c|}{} & 1 & $.5$\tabularnewline
\cline{2-3} 
\end{tabular}$\quad$%
\begin{tabular}{|c|c|c|}
\hline 
$\gamma$ & $A$ & $\Pr$\tabularnewline
\hline 
\hline 
2 & 0 & $.7$\tabularnewline
\cline{1-1} 
\multicolumn{1}{c|}{} & 1 & $.3$\tabularnewline
\cline{2-3} 
\end{tabular}$\quad$%
\begin{tabular}{|c|c|c|}
\hline 
$\gamma$ & $A$ & $\Pr$\tabularnewline
\hline 
\hline 
3 & 0 & $.4$\tabularnewline
\cline{1-1} 
\multicolumn{1}{c|}{} & 1 & $.6$\tabularnewline
\cline{2-3} 
\end{tabular}$\quad$%
\begin{tabular}{|c|c|c|}
\hline 
$\gamma$ & $A$ & $\Pr$\tabularnewline
\hline 
\hline 
4 & 0 & $.9$\tabularnewline
\cline{1-1} 
\multicolumn{1}{c|}{} & 1 & $.1$\tabularnewline
\cline{2-3} 
\end{tabular}\end{center}

\

\protect{\noindent}One can also check that irrespective of the factor
point of $\gamma$, the 2-marginal $\left(A,B\right)$ only depends
on $\alpha$ and $\beta$:

\

\begin{center}%
\begin{tabular}{cc|cc|c|}
\hline 
\multicolumn{1}{|c}{$\alpha$} & $\beta$ & $A$ & $B$ & $\Pr$\tabularnewline
\hline 
\hline 
\multicolumn{1}{|c}{1} & 2 & 0 & 0 & $.3$\tabularnewline
\cline{1-2} 
 &  & 0 & 1 & $.2$\tabularnewline
 &  & 1 & 0 & $.2$\tabularnewline
 &  & 1 & 1 & $.3$\tabularnewline
\cline{3-5} 
\end{tabular}$\quad$%
\begin{tabular}{cc|cc|c|}
\hline 
\multicolumn{1}{|c}{$\alpha$} & $\beta$ & $A$ & $B$ & $\Pr$\tabularnewline
\hline 
\hline 
\multicolumn{1}{|c}{2} & 1 & 0 & 0 & $.3$\tabularnewline
\cline{1-2} 
 &  & 0 & 1 & $.3$\tabularnewline
 &  & 1 & 0 & $.3$\tabularnewline
 &  & 1 & 1 & $.1$\tabularnewline
\cline{3-5} 
\end{tabular}$\quad$%
\begin{tabular}{cc|cc|c|}
\hline 
\multicolumn{1}{|c}{$\alpha$} & $\beta$ & $A$ & $B$ & $\Pr$\tabularnewline
\hline 
\hline 
\multicolumn{1}{|c}{1} & 3 & 0 & 0 & $.5$\tabularnewline
\cline{1-2} 
 &  & 0 & 1 & $0$\tabularnewline
 &  & 1 & 0 & $.2$\tabularnewline
 &  & 1 & 1 & $.3$\tabularnewline
\cline{3-5} 
\end{tabular}$\quad$%
\begin{tabular}{cc|cc|c|}
\hline 
\multicolumn{1}{|c}{$\alpha$} & $\beta$ & $A$ & $B$ & $\Pr$\tabularnewline
\hline 
\hline 
\multicolumn{1}{|c}{2} & 3 & 0 & 0 & $.3$\tabularnewline
\cline{1-2} 
 &  & 0 & 1 & $.3$\tabularnewline
 &  & 1 & 0 & $.4$\tabularnewline
 &  & 1 & 1 & $0$\tabularnewline
\cline{3-5} 
\end{tabular}\end{center}

\

\protect{\noindent}Marginal selectivity, however, is violated for
the 2-marginal $\left(A,C\right)$: if the factor point of $\beta$
is $2^{\beta}$,

\

\begin{center}%
\begin{tabular}{cc|cc|c|}
\hline 
\multicolumn{1}{|c}{$\alpha$} & $\gamma$ & $A$ & $C$ & $\Pr$\tabularnewline
\hline 
\hline 
\multicolumn{1}{|c}{1} & 1 & 0 & 0 & $.3$\tabularnewline
\cline{1-2} 
 &  & 1 & 0 & $.2$\tabularnewline
 &  & 0 & 1 & $.2$\tabularnewline
 &  & 1 & 1 & $.3$\tabularnewline
\cline{3-5} 
\end{tabular}\end{center}

\

\protect{\noindent}but at $3^{\beta}$,

\

\begin{center}%
\begin{tabular}{cc|cc|c|}
\hline 
\multicolumn{1}{|c}{$\alpha$} & $\gamma$ & $A$ & $C$ & $\Pr$\tabularnewline
\hline 
\hline 
\multicolumn{1}{|c}{1} & 1 & 0 & 0 & $.4$\tabularnewline
\cline{1-2} 
 &  & 1 & 0 & $.1$\tabularnewline
 &  & 0 & 1 & $.1$\tabularnewline
 &  & 1 & 1 & $.4$\tabularnewline
\cline{3-5} 
\end{tabular}\end{center}

\

\protect{\noindent}This means that the diagram of selective influences
$(A,B,C)\looparrowleft(\alpha,\beta,\gamma)$ is ruled out.\qed
\end{example}
As pointed out in Section \ref{sec:Introduction}, the marginal selectivity
property alone is too weak to define selective influences. The example
below demonstrates that the property of marginal selectivity does
not allow one to treat each of the random variables as preserving
its identity in different combinations of {}``its'' factor with
other factors.
\begin{example}
\label{exa:Identity}Let $\alpha=\left\{ 1^{\alpha},2^{\alpha}\right\} $,
$\beta=\left\{ 1^{\beta},2^{\beta}\right\} $, and the set of allowable
treatments $T$ consist of all four possible combinations of the factor
points. Let $A$ and $B$ be be Bernoulli variables distributed as
shown:

\

\begin{center}%
\begin{tabular}{cc|cc|c|}
\hline 
\multicolumn{1}{|c}{$\alpha$} & $\beta$ & $A$ & $B$ & $\Pr$\tabularnewline
\hline 
\hline 
\multicolumn{1}{|c}{1} & 1 & 0 & 0 & $.1$\tabularnewline
\cline{1-2} 
 &  & 0 & 1 & $0$\tabularnewline
 &  & 1 & 0 & $0$\tabularnewline
 &  & 1 & 1 & $.9$\tabularnewline
\cline{3-5} 
\end{tabular}$\quad$%
\begin{tabular}{cc|cc|c|}
\hline 
\multicolumn{1}{|c}{$\alpha$} & $\beta$ & $A$ & $B$ & $\Pr$\tabularnewline
\hline 
\hline 
\multicolumn{1}{|c}{1} & 2 & 0 & 0 & $.09$\tabularnewline
\cline{1-2} 
 &  & 0 & 1 & $.01$\tabularnewline
 &  & 1 & 0 & $.81$\tabularnewline
 &  & 1 & 1 & $.09$\tabularnewline
\cline{3-5} 
\end{tabular}

\

\begin{tabular}{cc|cc|c|}
\hline 
\multicolumn{1}{|c}{$\alpha$} & $\beta$ & $A$ & $B$ & $\Pr$\tabularnewline
\hline 
\hline 
\multicolumn{1}{|c}{2} & 1 & 0 & 0 & $0$\tabularnewline
\cline{1-2} 
 &  & 0 & 1 & $.9$\tabularnewline
 &  & 1 & 0 & $.1$\tabularnewline
 &  & 1 & 1 & $0$\tabularnewline
\cline{3-5} 
\end{tabular}$\quad$%
\begin{tabular}{cc|cc|c|}
\hline 
\multicolumn{1}{|c}{$\alpha$} & $\beta$ & $A$ & $B$ & $\Pr$\tabularnewline
\hline 
\hline 
\multicolumn{1}{|c}{2} & 2 & 0 & 0 & $0$\tabularnewline
\cline{1-2} 
 &  & 0 & 1 & $.9$\tabularnewline
 &  & 1 & 0 & $.1$\tabularnewline
 &  & 1 & 1 & $0$\tabularnewline
\cline{3-5} 
\end{tabular}\end{center}

\

\protect{\noindent}Marginal selectivity is satisfied: $\Pr\left[A\left(1^{\alpha},\cdot\right)=0\right]=0.1$
and $\Pr\left[A\left(2^{\alpha},\cdot\right)=0\right]=0.9$ irrespective
of whether the placeholder is replaced with $1^{\beta}$ or $2^{\beta}$;
and analogously for $B$. If we assume, however, that this allows
us to write $A\left(1^{\alpha}\right),$ $A\left(2^{\alpha}\right)$,
$B\left(1^{\beta}\right)$, $B\left(2^{\beta}\right)$ instead of
$A\left(1^{\alpha},1^{\beta}\right)$, $A\left(1^{\alpha},2^{\beta}\right)$,
etc., we will run into a contradiction. From the tables for $\phi=\left\{ 1^{\alpha},1^{\beta}\right\} $,
$\left\{ 2^{\alpha},1^{\beta}\right\} $, and $\left\{ 2^{\alpha},2^{\beta}\right\} $,
we can successively conclude $A\left(1^{\alpha}\right)=B\left(1^{\beta}\right)$,
$A\left(2^{\alpha}\right)=1-B\left(1^{\beta}\right)$, and $A\left(2^{\alpha}\right)=1-B\left(2^{\beta}\right)$.
But then $A\left(1^{\alpha}\right)=B\left(2^{\beta}\right)$, which
contradicts the table for $\phi=\left\{ 1^{\alpha},2^{\beta}\right\} $,
where $A\left(1^{\alpha}\right)$ and $B\left(2^{\beta}\right)$ are
stochastically independent and nonsingular. This contradiction proves
that the diagram of selective influences $(A,B)\looparrowleft(\alpha,\beta)$
cannot be inferred from the compliance with marginal selectivity.\qed
\end{example}

\subsubsection{\label{sub:Invariance}Invariance under factor-point-specific transformations}

Let $(A_{1},\ldots,A_{n})\looparrowleft(\alpha_{1},\ldots,\alpha_{n})$
and 
\[
H=\left(H_{x_{1}^{\alpha_{1}}},\ldots,H_{x_{k_{1}}^{\alpha_{i}}},\ldots,H_{x_{1}^{\alpha_{n}}},\ldots,H_{x_{k_{n}}^{\alpha_{n}}}\right)
\]
be the JDC-vector for $(A_{1},\ldots,A_{n})(\phi)$. Let $F\left(H\right)$
be any function that applies to $H$ componentwise and produces a
corresponding vector of random variables
\[
F\left(H\right)=\left(\begin{array}{c}
F\left(x_{1}^{\alpha_{1}},H_{x_{1}^{\alpha_{1}}}\right),\ldots,F\left(x_{k_{1}}^{\alpha_{i}},H_{x_{k_{1}}^{\alpha_{i}}}\right),\\
\ldots,\\
F\left(x_{1}^{\alpha_{n}},H_{x_{1}^{\alpha_{n}}}\right),\ldots,F\left(x_{k_{n}}^{\alpha_{n}},H_{x_{k_{n}}^{\alpha_{n}}}\right)
\end{array}\right),
\]
where we denote by $F\left(x^{\alpha},\cdot\right)$ the application
of $F$ to the component labeled by $x^{\alpha}$. Clearly, $F\left(H\right)$
possesses a joint distribution and contains one component for each
factor point. If we now define a vector of random variables $B\left(\phi\right)$
for every treatment $\phi\in T$ as
\[
(B_{1},\ldots,B_{n})\left(\phi\right)=\left(F\left(\phi_{\left\{ \alpha_{1}\right\} },A_{1}\right),\ldots,F\left(\phi_{\left\{ \alpha_{n}\right\} },A_{n}\right)\right)\left(\phi\right),
\]
then
\[
(B_{1},\ldots,B_{n})\left(\phi\right)\sim\left(F\left(\phi_{\left\{ \alpha_{1}\right\} },A_{1}\right),\ldots,F\left(\phi_{\left\{ \alpha_{n}\right\} },A_{n}\right)\right)\left(\phi\right),
\]
and it follows from JDC that $(B_{1},\ldots,B_{n})\looparrowleft(\alpha_{1},\ldots,\alpha_{n})$.%
\footnote{Since it is possible that $F\left(x^{\alpha},H_{x^{\alpha}}\right)$
and $F\left(y^{\alpha},H_{y^{\alpha}}\right)$, with $x^{\alpha}\not=y^{\alpha}$,
have different sets of possible values, strictly speaking, one may
need to redefine the functions to ensure that the sets of possible
values for $B\left(\phi\right)$ is the same for different $\phi$.
This is, however, not essential (see footnote \ref{fn:Invariance}).%
} A function $F\left(x^{\alpha_{i}},\cdot\right)$ can be referred
to as a \emph{factor-point-specific transformation} of the random
variable $A_{i}$, because the random variable is generally transformed
differently for different points of the factor assumed to selectively
influence it. We can formulate the property in question by saying
that a diagram of selective influences is invariant under all factor-point-specific
transformations of the random variables. Note that this includes as
a special case transformations which are not factor-point-specific,
with 
\[
F\left(x_{1}^{\alpha_{i}},\cdot\right)\equiv\ldots\equiv F\left(x_{k_{i}}^{\alpha_{i}},\cdot\right)\equiv F\left(\alpha_{i},\cdot\right).
\]

\begin{example}
\label{exa:renaming}Let the set-up be the same as in Example \ref{exa:Identity},
except for the distributions of $(A,B)$ at the four treatments: we
now assume that these distributions are such that $(A,B)\looparrowleft(\alpha,\beta)$.
The tables below show all factor-point-specific transformations $A\rightarrow A^{*}$
and $B\rightarrow B^{*}$ at the four treatments, provided that the
sets of possible values of $A^{*}$ and $B^{*}$ are respectively,
$\left\{ \star,\bullet\right\} $ and $\left\{ \triangleright,\circ\right\} $,
and that at the treatment $\left\{ 1^{\alpha},1^{\beta}\right\} $
the value 0 of $A$ is mapped into $\star$ and the value 0 of $B$
is mapped into $\triangleright$. 

\

\begin{center}%
\begin{tabular}{|c|c|c|c|}
\hline 
$\alpha$ & $\beta$ & $A\rightarrow A^{*}$ & $B\rightarrow B^{*}$\tabularnewline
\hline 
\hline 
1 & 1 & $\begin{array}{c}
0\rightarrow\star\\
1\rightarrow\bullet
\end{array}$ & $\begin{array}{c}
0\rightarrow\triangleright\\
1\rightarrow\circ
\end{array}$\tabularnewline
\hline 
1 & 2 & $\begin{array}{c}
0\rightarrow\star\\
1\rightarrow\bullet
\end{array}$ & $\begin{array}{c}
0\rightarrow\circ\\
1\rightarrow\triangleright
\end{array}$\tabularnewline
\hline 
2 & 1 & $\begin{array}{c}
0\rightarrow\bullet\\
1\rightarrow\star
\end{array}$ & $\begin{array}{c}
0\rightarrow\triangleright\\
1\rightarrow\circ
\end{array}$\tabularnewline
\hline 
2 & 2 & $\begin{array}{c}
0\rightarrow\bullet\\
1\rightarrow\star
\end{array}$ & $\begin{array}{c}
0\rightarrow\circ\\
1\rightarrow\triangleright
\end{array}$\tabularnewline
\hline 
\end{tabular}$\quad$%
\begin{tabular}{|c|c|c|c|}
\hline 
$\alpha$ & $\beta$ & $A\rightarrow A^{*}$ & $B\rightarrow B^{*}$\tabularnewline
\hline 
\hline 
1 & 1 & $\begin{array}{c}
0\rightarrow\star\\
1\rightarrow\bullet
\end{array}$ & $\begin{array}{c}
0\rightarrow\triangleright\\
1\rightarrow\circ
\end{array}$\tabularnewline
\hline 
1 & 2 & $\begin{array}{c}
0\rightarrow\star\\
1\rightarrow\bullet
\end{array}$ & $\begin{array}{c}
0\rightarrow\triangleright\\
1\rightarrow\circ
\end{array}$\tabularnewline
\hline 
2 & 1 & $\begin{array}{c}
0\rightarrow\bullet\\
1\rightarrow\star
\end{array}$ & $\begin{array}{c}
0\rightarrow\triangleright\\
1\rightarrow\circ
\end{array}$\tabularnewline
\hline 
2 & 2 & $\begin{array}{c}
0\rightarrow\bullet\\
1\rightarrow\star
\end{array}$ & $\begin{array}{c}
0\rightarrow\triangleright\\
1\rightarrow\circ
\end{array}$\tabularnewline
\hline 
\end{tabular}\bigskip{}

\begin{tabular}{|c|c|c|c|}
\hline 
$\alpha$ & $\beta$ & $A\rightarrow A^{*}$ & $B\rightarrow B^{*}$\tabularnewline
\hline 
\hline 
1 & 1 & $\begin{array}{c}
0\rightarrow\star\\
1\rightarrow\bullet
\end{array}$ & $\begin{array}{c}
0\rightarrow\triangleright\\
1\rightarrow\circ
\end{array}$\tabularnewline
\hline 
1 & 2 & $\begin{array}{c}
0\rightarrow\star\\
1\rightarrow\bullet
\end{array}$ & $\begin{array}{c}
0\rightarrow\circ\\
1\rightarrow\triangleright
\end{array}$\tabularnewline
\hline 
2 & 1 & $\begin{array}{c}
0\rightarrow\star\\
1\rightarrow\bullet
\end{array}$ & $\begin{array}{c}
0\rightarrow\triangleright\\
1\rightarrow\circ
\end{array}$\tabularnewline
\hline 
2 & 2 & $\begin{array}{c}
0\rightarrow\star\\
1\rightarrow\bullet
\end{array}$ & $\begin{array}{c}
0\rightarrow\circ\\
1\rightarrow\triangleright
\end{array}$\tabularnewline
\hline 
\end{tabular}$\quad$%
\begin{tabular}{|c|c|c|c|}
\hline 
$\alpha$ & $\beta$ & $A\rightarrow A^{*}$ & $B\rightarrow B^{*}$\tabularnewline
\hline 
\hline 
1 & 1 & $\begin{array}{c}
0\rightarrow\star\\
1\rightarrow\bullet
\end{array}$ & $\begin{array}{c}
0\rightarrow\triangleright\\
1\rightarrow\circ
\end{array}$\tabularnewline
\hline 
1 & 2 & $\begin{array}{c}
0\rightarrow\star\\
1\rightarrow\bullet
\end{array}$ & $\begin{array}{c}
0\rightarrow\triangleright\\
1\rightarrow\circ
\end{array}$\tabularnewline
\hline 
2 & 1 & $\begin{array}{c}
0\rightarrow\star\\
1\rightarrow\bullet
\end{array}$ & $\begin{array}{c}
0\rightarrow\triangleright\\
1\rightarrow\circ
\end{array}$\tabularnewline
\hline 
2 & 2 & $\begin{array}{c}
0\rightarrow\star\\
1\rightarrow\bullet
\end{array}$ & $\begin{array}{c}
0\rightarrow\triangleright\\
1\rightarrow\circ
\end{array}$\tabularnewline
\hline 
\end{tabular}\end{center}

\

\protect{\noindent}The possible transformations are restricted to
these four because we adhere to our convention that $A$ has the same
set of values at all treatments, and the same is true for $B$. This
convention, however, is not essential, and nothing else in the theory
prevents one from thinking of $A$ at different treatments as arbitrarily
different random variables. With this {}``relaxed'' approach, the
following table gives an example of a factor-point-specific transformation: 

\

\begin{center}%
\begin{tabular}{|c|c|c|c|}
\hline 
$\alpha$ & $\beta$ & $A\rightarrow A^{*}$ & $B\rightarrow B^{*}$\tabularnewline
\hline 
\hline 
1 & 1 & $\begin{array}{c}
0\rightarrow0\\
1\rightarrow1
\end{array}$ & $\begin{array}{c}
0\rightarrow0\\
1\rightarrow1
\end{array}$\tabularnewline
\hline 
1 & 2 & $\begin{array}{c}
0\rightarrow0\\
1\rightarrow1
\end{array}$ & $\begin{array}{c}
0\rightarrow-2\\
1\rightarrow3
\end{array}$\tabularnewline
\hline 
2 & 1 & $\begin{array}{c}
0\rightarrow10\\
1\rightarrow-20
\end{array}$ & $\begin{array}{c}
0\rightarrow0\\
1\rightarrow1
\end{array}$\tabularnewline
\hline 
2 & 2 & $\begin{array}{c}
0\rightarrow10\\
1\rightarrow-20
\end{array}$ & $\begin{array}{c}
0\rightarrow-2\\
1\rightarrow3
\end{array}$\tabularnewline
\hline 
\end{tabular}\end{center}

If this is considered undesirable, the variables $\left(A^{*},B^{*}\right)$
can be redefined to have $\left\{ -20,0,1,10\right\} $ and $\left\{ -2,0,1,3\right\} $
and the respective sets of their possible values, assigning zero probabilities
to the values that cannot be attained at a given factor point.\qed

This property is of critical importance for construction and use of
tests for selective influences, as defined in the next section. A
test, generally, lacks the invariance property just formulated: e.g.,
if the transformation consists in grouping of the original values
of random variables, different groupings may result in different outcomes
of certain tests, fail or pass. Such a test then can be profitably
applied to various factor-point-specific transformations of an original
set of random variables, creating thereby in place of a single test
a multitude of tests with potentially different outcomes (a single
negative outcome ruling out the hypothesis of selective influences). 
\end{example}

\subsection{\label{sub:General-principles}General principles for constructing
tests for selective influences}

\subsubsection{\label{sub:Population-level-tests}Population level tests}

\label{sub:General logic}Given a set of factors $\left\{ \alpha_{1},\ldots,\alpha_{n}\right\} $,
a vector of random variables depending on treatments, $(A_{1},\ldots,A_{n})(\phi)$,
and the hypothesis $(A_{1},\ldots,A_{n})\looparrowleft(\alpha_{1},\ldots,\alpha_{n}$),
a \emph{test} for this hypothesis is a statement $\mathfrak{S}$ relating
to each other $(A_{1},\ldots,A_{n})(\phi)$ for different treatments
$\phi\in T$ which (a) holds true if $(A_{1},\ldots,A_{n})\looparrowleft(\alpha_{1},\ldots,\alpha_{n})$,
and (b) does not always hold true if this hypothesis is false. A test
for a diagram of selective influences therefore is a necessary condition:
if the variables $\left\{ (A_{1},\ldots,A_{n})(\phi):\phi\in T\right\} $
fail it (i.e., if $\mathfrak{S}$ is false for this set of random
variables), we know that the hypothesis $(A_{1},\ldots,A_{n})\looparrowleft(\alpha_{1},\ldots,\alpha_{n})$
is false. If the statement $\mathfrak{S}$ is always false when $(A_{1},\ldots,A_{n})\not\looparrowleft(\alpha_{1},\ldots,\alpha_{n})$,
the test becomes a \emph{criterion} for selective influences. A test
or criterion can be restricted to special classes of random variables
(e.g., random variables with finite numbers of values, or multivariate
normally distributed at every treatment) and/or factor sets (e.g.,
$2\times2$ experimental designs). 

The JDC provides a general logic for constructing such tests: we ask
whether the hypothetical JDC-vector $H=\left(H_{x_{1}^{\alpha_{1}}},\ldots,H_{x_{k_{1}}^{\alpha_{i}}},\ldots,H_{x_{1}^{\alpha_{n}}},\ldots,H_{x_{k_{n}}^{\alpha_{n}}}\right)$,
containing one variable for each factor point of each factor, can
be assigned a joint distribution such that its marginals corresponding
to the subsets of factor points that form treatments $\phi\in T$
are distributed as $(A_{1},\ldots,A_{n})(\phi)$. Put more succinctly:
is there a joint distribution of $\left(H_{x_{1}^{\alpha_{1}}},\ldots,H_{x_{k_{1}}^{\alpha_{i}}},\ldots,H_{x_{1}^{\alpha_{n}}},\ldots,H_{x_{k_{n}}^{\alpha_{n}}}\right)$
with given marginal distributions of the vectors 
\[
H_{\phi}=\left(H_{\phi_{\left\{ \alpha_{1}\right\} }},\ldots,H_{\phi_{\left\{ \alpha_{n}\right\} }}\right)
\]
for all $\phi\in T$?%
\footnote{\label{fn:Surprisingly,-at-least}Surprisingly, at least for the authors,
a slightly less general version of the same problem (the existence
of a joint distributions compatible with observable marginals) plays
a prominent role in quantum mechanics, in dealing with the quantum
entanglement problem (Fine, 1982a-b). We are grateful to Jerome Busemeyer
for bringing this fact to our attention. The parallels with quantum
mechanisms will be discussed in a separate publication. %
} 

Thus, in a study of random variables $(A,B)$ in a $2\times2$ factorial
design, with $\alpha=\left\{ 1^{\alpha},2^{\alpha}\right\} $, $\beta=\left\{ 1^{\beta},2^{\beta}\right\} $,
and $T$ containing all four logically possible treatments, we consider
a hypothetical JDC-vector $\left(H_{1^{\alpha}},H_{2^{\alpha}},H_{1^{\beta}},H_{2^{\beta}}\right)$
of which we know the four 2-marginal distributions corresponding to
treatments:
\[
\begin{array}{c}
H_{1^{\alpha}1^{\beta}}=(H_{1^{\alpha}},H_{1^{\beta}})\sim(A,B)\left(1^{\alpha},1^{\beta}\right),\\
\\
H_{1^{\alpha}2^{\beta}}=(H_{1^{\alpha}},H_{2^{\beta}})\sim(A,B)\left(1^{\alpha},2^{\beta}\right),\\
\\
\text{\emph{etc.}}
\end{array}
\]
Of course, we also know the lower-level marginals, in this case the
marginal distributions of $H_{1^{\alpha}}$, $H_{2^{\alpha}}$, $H_{1^{\beta}}$,
and $H_{2^{\beta}}$, but they need not be considered separately as
they are determined by the higher-order marginals. The question one
poses within the logic of JDC is: can one assign probability densities
to different values of $H=\left(H_{1^{\alpha}},H_{2^{\alpha}},H_{1^{\beta}},H_{2^{\beta}}\right)$
so that the computed marginal distributions of $(H_{1^{\alpha}},H_{1^{\beta}}$),
$(H_{1^{\alpha}},H_{2^{\beta}})$, etc., coincide with the known ones?

If the vector $A=(A_{1},\ldots,A_{n})$ has a finite number of possible
values (we may state this without mentioning $\phi$ because, by our
convention, the set of values does not depend on $\phi$), then so
does the vector $H=\left(H_{x_{1}^{\alpha_{1}}},\ldots,H_{x_{k_{1}}^{\alpha_{i}}},\ldots,H_{x_{1}^{\alpha_{n}}},\ldots,H_{x_{k_{n}}^{\alpha_{n}}}\right)$,
and the logic of JDC is directly implemented in the \emph{Linear Feasibility
Test} introduced in the next section. When the set of values for $A$
is infinite or too large to be handled by the Linear Feasibility Test,
one may have to use an indirect approach: computing from the distribution
of each $H_{\phi}$ certain functionals%
\footnote{A functional $g\left(X\right)$ is a function mapping each random
variable $X$ from some set of random variables into, typically, a
real or complex number (more generally, an element of a certain {}``standard''
set). A typical example of a functional is the expected value $\mathrm{E}\left[X\right]$.%
} $g_{1}\left(H_{\phi}\right),\ldots,g_{m}\left(H_{\phi}\right)$ and
constructing a statement 
\[
\mathfrak{S}\left(g_{1}(H_{\phi}),\ldots,g_{m}(H_{\phi}):\phi\in T\right)
\]
relating to each other these functionals for all $\phi\in T$. The
statement should be chosen so that it holds true if $H$ possesses
a joint distribution, but may be (or, better still, always is) false
otherwise.

We illustrate this logic on a simple distance test of the variety
introduced in Kujala and Dzhafarov (2008). Assuming that all random
variables in $(A_{1},\ldots,A_{n})$ take their values in the set
of reals, for each pair of factor points $\left\{ x^{\alpha},y^{\beta}\right\} $
define
\[
Mx^{\alpha}y^{\beta}=\mathrm{E}\left[\left|H_{x^{\alpha}}-H_{y^{\beta}}\right|\right],
\]
where, for convenience, we write $Mx^{\alpha}y^{\beta}$ in place
of $M\left(x^{\alpha},y^{\beta}\right)$. It can be easily shown that
$M$ is a metric on the set $H$ if $H$ possesses a joint distribution
for its components. For each treatment $\phi$, define the functional
\[
g_{\alpha,\beta}\left(H_{\phi}\right)=M\phi_{\left\{ \alpha\right\} }\phi_{\left\{ \beta\right\} },
\]
whose value can be computed from the known distributions:
\begin{equation}
M\phi_{\left\{ \alpha\right\} }\phi_{\left\{ \beta\right\} }=\mathrm{E}\left[\left|A_{\left\{ \alpha\right\} }\left(\phi\right)-A_{\left\{ \beta\right\} }\left(\phi\right)\right|\right],\label{eq:M-metric}
\end{equation}
where $A_{\left\{ \alpha\right\} }(\phi)$ and $A_{\left\{ \beta\right\} }(\phi)$
are the random variables in $(A_{1},\ldots,A_{n})(\phi)$ which are
supposed to be selectively influenced by $\alpha$ and $\beta$, respectively.
Due to the marginal selectivity (which we assume to hold because otherwise
selective influences have already been ruled out), this quantity is
the same for all treatments $\phi$ which contain the same factor
points $x^{\alpha},y^{\beta}$ of factors $\alpha,\beta$. The statement
$\mathfrak{S}$ is then as follows: for any (not necessarily pairwise
distinct) treatments $\phi^{1},\ldots,\phi^{l}$$\in T$ and any factors
$\alpha^{1},\ldots,\alpha^{l}\in\Phi$ ($l\geq$3) such that
\begin{equation}
\alpha_{1}\not=\alpha_{2}\not=\ldots\not=\alpha_{l-1}\not=\alpha_{l}\not=\alpha_{1},\label{eq:pairwise}
\end{equation}
and

\begin{equation}
\phi_{\left\{ \alpha^{1}\right\} }^{1}=\phi_{\left\{ \alpha^{1}\right\} }^{2},\ldots,\quad\phi_{\left\{ \alpha^{l-1}\right\} }^{l-1}=\phi_{\left\{ \alpha^{l-1}\right\} }^{l},\phi{}_{\left\{ \alpha^{l}\right\} }^{l}=\phi_{\left\{ \alpha^{l}\right\} }^{1},\label{eq:dist_cond_intro}
\end{equation}
we should have 
\begin{equation}
\begin{array}{r}
g_{\alpha^{1},\alpha^{l}}\left(H_{\phi^{1}}\right)\leq g_{\alpha^{1},\alpha^{2}}\left(H_{\phi^{2}}\right)+\ldots+g_{\alpha^{l-1},\alpha^{l}}\left(H_{\phi^{l}}\right).\end{array}\label{eq:dist test intro}
\end{equation}
The truth of $\mathfrak{S}$ for $H$ with jointly distributed components
follows from the triangle inequality for $M$. The inequality may
very well be violated when the components of $H$ do not possess a
joint distribution (i.e., when the hypothesis of selective influences
is false).
\begin{example}
To apply this test to Example \ref{exa:Identity}, we make use of
the property that if $(A,B)\looparrowleft(\alpha,\beta)$ then $(A^{*},B^{*})\looparrowleft(\alpha,\beta)$
for any factor-point-specific transformations $(A^{*},B^{*})$ of
$(A,B)$. Let us put $B^{*}=B$ and
\[
A^{*}=\begin{cases}
A & \textnormal{if }\phi_{\left\{ \alpha\right\} }=1^{\alpha},\\
1-A & \textnormal{if }\phi_{\left\{ \alpha\right\} }=2^{\alpha}.
\end{cases}
\]
This yields the distributions

\

\begin{center}%
\begin{tabular}{cc|cc|c|}
\hline 
\multicolumn{1}{|c}{$\alpha$} & $\beta$ & $A^{*}$ & $B^{*}$ & $\Pr$\tabularnewline
\hline 
\hline 
\multicolumn{1}{|c}{1} & 1 & 0 & 0 & $.1$\tabularnewline
\cline{1-2} 
 &  & 0 & 1 & $0$\tabularnewline
 &  & 1 & 0 & $0$\tabularnewline
 &  & 1 & 1 & $.9$\tabularnewline
\cline{3-5} 
\end{tabular}$\quad$%
\begin{tabular}{cc|cc|c|}
\hline 
\multicolumn{1}{|c}{$\alpha$} & $\beta$ & $A^{*}$ & $B^{*}$ & $\Pr$\tabularnewline
\hline 
\hline 
\multicolumn{1}{|c}{1} & 2 & 0 & 0 & $.09$\tabularnewline
\cline{1-2} 
 &  & 0 & 1 & $.01$\tabularnewline
 &  & 1 & 0 & $.81$\tabularnewline
 &  & 1 & 1 & $.09$\tabularnewline
\cline{3-5} 
\end{tabular}

\

\begin{tabular}{cc|cc|c|}
\hline 
\multicolumn{1}{|c}{$\alpha$} & $\beta$ & $A^{*}$ & $B^{*}$ & $\Pr$\tabularnewline
\hline 
\hline 
\multicolumn{1}{|c}{2} & 1 & 1 & 0 & $0$\tabularnewline
\cline{1-2} 
 &  & 1 & 1 & $.9$\tabularnewline
 &  & 0 & 0 & $.1$\tabularnewline
 &  & 0 & 1 & $0$\tabularnewline
\cline{3-5} 
\end{tabular}$\quad$%
\begin{tabular}{cc|cc|c|}
\hline 
\multicolumn{1}{|c}{$\alpha$} & $\beta$ & $A^{*}$ & $B^{*}$ & $\Pr$\tabularnewline
\hline 
\hline 
\multicolumn{1}{|c}{2} & 2 & 1 & 0 & $0$\tabularnewline
\cline{1-2} 
 &  & 1 & 1 & $.9$\tabularnewline
 &  & 0 & 0 & $.1$\tabularnewline
 &  & 0 & 1 & $0$\tabularnewline
\cline{3-5} 
\end{tabular}\end{center}

\

\protect{\noindent}It is easy to check that
\[
\begin{array}{l}
M1^{\alpha}1^{\beta}=\mathrm{E}\left[\left|A\left(1^{\alpha},1^{\beta}\right)-B\left(1^{\alpha},1^{\beta}\right)\right|\right]=0,\\
\\
M1^{\alpha}2^{\beta}=\mathrm{E}\left[\left|A\left(1^{\alpha},2^{\beta}\right)-B\left(1^{\alpha},2^{\beta}\right)\right|\right]=0.82,\\
\\
M2^{\alpha}1^{\beta}=\mathrm{E}\left[\left|A\left(2^{\alpha},1^{\beta}\right)-B\left(2^{\alpha},1^{\beta}\right)\right|\right]=0,\\
\\
M2^{\alpha}2^{\beta}=\mathrm{E}\left[\left|A\left(2^{\alpha},2^{\beta}\right)-B\left(2^{\alpha},2^{\beta}\right)\right|\right]=0.
\end{array}
\]
Since
\[
0.82=M1^{\alpha}2^{\beta}>M1^{\alpha}1^{\beta}+M2^{\alpha}1^{\beta}+M2^{\alpha}2^{\beta}=0,
\]
the triangle inequality is violated, rejecting thereby the hypothesis
$(A^{*},B^{*})\looparrowleft(\alpha,\beta)$, hence also the hypothesis
$(A,B)\looparrowleft(\alpha,\beta)$.\qed 
\end{example}

\subsubsection{Sample-level tests\label{sub:Sample-level-tests}}

\label{sub:stats}Although this paper is not concerned with statistical
questions, it may be useful to outline the general logic of constructing
a sample-level test corresponding to a population-level one. Analytic
procedures and asymptotic approximations have to be different for
different tests, but if the population-level test can be computed
efficiently, the following Monte-Carlo procedure is always applicable. 
\begin{enumerate}
\item For each of the random variables $A_{1},\ldots,A_{n}$, if it has
more than a finite number of values (or has too many values, even
if finite), we discretize it in the conventional way, by forming successive
adjacent intervals and replacing each of them with its midpoint. Continue
to denote the discretized random variables $A_{1},\ldots,A_{n}$.
\item We now have sample proportions $\hat{\Pr}\left[\left(A_{1}=a_{1},\ldots,A_{n}=a_{n}\right)\left(x_{1}^{\alpha_{1}},\ldots,x_{n}^{\alpha_{n}}\right)\right]$,
where $a_{1},\ldots,a_{n}$ are possible values of the corresponding
random variables $A_{1},\ldots,A_{n}$.
\item For each treatment, we form a confidence region of possible probabilities
$\Pr\left[\left(A_{1}=a_{1},\ldots,A_{n}=a_{n}\right)\left(x_{1}^{\alpha_{1}}\ldots x_{n}^{\alpha_{n}}\right)\right]$
for a given set of estimates, at a given level of a familywise confidence
level for the Cartesian product of these confidence regions, with
an appropriately adopted convention on how this familywise confidence
is computed (glossing over a controversial issue). 
\item The hypothesis of selective influences is retained or rejected according
as the combined confidence region contains or does not contain a point
(a set of joint probabilities) which passes the population test in
question. (Gradualized versions of this procedure are possible, when
each point in the space of population-level probabilities is taken
with the weight proportional to its likelihood.)
\end{enumerate}
Instead of a confidence region of multivariate distributions based
on a discretization, one can also generate confidence regions of distributions
belonging to a specified class, say, multivariate normal ones. 

Resampling techniques is another obvious approach, although the results
will generally depend on one's often arbitrary choice of the resampling
procedure. One simple choice is the permutation test in which the
joint sample proportions $\hat{\Pr}\left[A_{1}=a_{1},\ldots,A_{n}=a_{n}\right]$
obtained at different treatments (and treated as probabilities) are
randomly assigned to the treatments $\phi$. If the initial, observed
assignment passes a test, while the proportion of the permuted assignments
which pass the test is sufficiently small, the hypothesis of selective
influences is considered supported.

\section{\label{sec:Linear-Feasibility-Test}Linear Feasibility Test}

In this section we assume that each random variable $A_{i}(\phi)$
in $(A_{1},\ldots,A_{n})(\phi)$ has a finite number $m_{i}$ of possible
values $a_{i1},\ldots,a_{im_{i}}$. It is arguably the most important
special case both because it is ubiquitous in psychological theories
and because in all other cases random variables can be discretized
into finite number of categories. We are interested in establishing
the truth or falsity of the diagram of selective influences (\ref{diag:bijective DSI}),
where each factor $\alpha_{i}$ in $(\alpha_{1},\ldots,\alpha_{n})$
contains $k_{i}$ factor points. The \emph{Linear Feasibility Test}
to be described is a direct application of JDC to this situation,%
\footnote{\label{fn:In-reference-to}In reference to footnote \ref{fn:Surprisingly,-at-least},
this test has been proposed in the context of dealing with multiple-particle
multiple-measurement quantum entanglement situations by Werner \&
Wolf (2001a, b) and Basoalto \& Percival (2003). %
} furnishing both a necessary and sufficient condition for the diagram
of selective influences $(A_{1},\ldots,A_{n})\looparrowleft(\alpha_{1},\ldots,\alpha_{n})$. 

In the hypothetical JDC-vector
\[
H=\left(H_{x_{1}^{\alpha_{1}}},\ldots,H_{x_{k_{1}}^{\alpha_{1}}},\ldots,H_{x_{1}^{\alpha_{n}}},\ldots,H_{x_{k_{n}}^{\alpha_{n}}}\right),
\]
since we assume that 
\[
H_{x_{j}^{\alpha_{i}}}\sim A_{i}\left(\phi\right)
\]
for any $x_{j}^{\alpha_{i}}$ and any treatment $\phi$ containing
$x_{j}^{\alpha_{i}}$, we know that the set of possible values for
the random variable $H_{x_{j}^{\alpha_{i}}}$ is $\left\{ a_{i1},\ldots,a_{im_{i}}\right\} $,
irrespective of $x_{j}$. Denote 
\begin{equation}
\begin{array}{r}
\begin{array}{l}
\Pr\left[\left(A_{1}=a_{1l_{1}},\ldots,A_{n}=a_{nl_{n}}\right)\left(x_{\lambda_{1}}^{\alpha_{1}},\ldots,x_{\lambda_{n}}^{\alpha_{n}}\right)\right]\\
\\
=P\left(\stackrel{\textnormal{for r.v.s}}{\overbrace{l_{1},\ldots,l_{n}}}\,;\,\stackrel{\textnormal{for factor points}}{\overbrace{\lambda_{1},\ldots,\lambda_{n}}}\right),
\end{array}\end{array}\label{eq:p's}
\end{equation}
where $l_{i}\in\left\{ 1,\ldots,m_{i}\right\} $ and $\lambda_{i}\in\left\{ 1,\ldots,k_{i}\right\} $
for $i=1,\ldots,n$ ({}``r.v.s'' abbreviates {}``random variables'').
Denote 
\begin{equation}
\begin{array}{l}
\Pr\left[\begin{array}{c}
H_{x_{1}^{\alpha_{1}}}=a_{1l_{11}},\ldots,H_{x_{k_{1}}^{\alpha_{1}}}=a_{1l_{1k_{1}}},\\
\ldots,\\
H_{x_{1}^{\alpha_{n}}}=a_{nl_{n1}},\ldots,H_{x_{k_{n}}^{\alpha_{n}}}=a_{nl_{nk_{n}}}
\end{array}\right]\\
\\
=Q\left(\stackrel{\textnormal{for }A_{1}}{\overbrace{l_{11},\ldots,l_{1k_{1}}}},\ldots,\stackrel{\textnormal{for }A_{n}}{\overbrace{l_{n1},\ldots,l_{nk_{n}}}}\right),
\end{array}\label{eq:q's}
\end{equation}
where $l_{ij}\in\left\{ 1,\ldots,m_{i}\right\} $ for $i=1,\ldots,n$.
This gives us $m_{1}^{k_{1}}\times\ldots\times m_{n}^{k_{n}}$ $Q$-probabilities.
A required joint distribution for the JDC-vector $H$ exists if and
only if these probabilities can be found subject to $m_{1}^{k_{1}}\times\ldots\times m_{n}^{k_{n}}$
nonnegativity constraints

\begin{equation}
Q\left(l_{11},\ldots,l_{1k_{1}},\ldots,l_{n1},\ldots,l_{nk_{n}}\right)\geq0,\label{eq:nonegativity}
\end{equation}
and (denoting by $n_{T}$ the number of treatments in $T$) $n_{T}\times m_{1}\times\ldots\times m_{n}$
linear equations
\begin{equation}
\begin{array}{r}
\sum Q\left(l_{11},\ldots,l_{1k_{1}},\ldots,l_{n1},\ldots,l_{nk_{n}}\right)\\
\\
=P\left(l_{1},\ldots,l_{n};\lambda_{1},\ldots,\lambda_{n}\right),
\end{array}\label{eq:linear equations}
\end{equation}
where the summation is across all possible values of the set 
\[
\left\{ l_{11},\ldots,l_{1k_{1}},\ldots,l_{n1},\ldots,l_{nk_{n}}\right\} -\left\{ l_{1\lambda_{1}},\ldots,l_{n\lambda_{n}}\right\} ,
\]
while
\[
l_{1\lambda_{1}}=l_{1},\ldots,l_{n\lambda_{n}}=l_{n}.
\]
Selective influences hold if and only if the system of these linear
equalities with the nonnegativity constraints is \emph{feasible} (i.e.,
has a solution). This is a typical \emph{linear programming} problem
(see, e.g., Webster, 1994, Ch. 4).%
\footnote{More precisely, this is a linear programming task in the standard
form and with a dummy objective function (e.g., a linear combination
with zero coefficients).%
} Many standard statistical and mathematical packages can handle this
problem. 

Note that the maximal value for $n_{T}$ is $n_{T}=k_{1}\times\ldots\times k_{n}$,
whence the maximal number of linear equations is $\left(m_{1}k_{1}\right)\times\ldots\times\left(m_{n}k_{n}\right)$.
Since $m_{i}k_{i}\leq m_{i}^{k_{i}}$ (assuming $m_{i},k_{i}\ge2)$,
with the equality only achieved at $k_{i}=m_{i}=2$, the system of
linear equations is always underdetermined. In fact, the system of
equations is underdetermined even if $k_{i}=m_{i}=2$ for all $i=1,\ldots,n$,
because of the obvious linear dependences among the equations. 
\begin{example}
Let $\alpha=\left\{ 1^{\alpha},2^{\alpha}\right\} $, $\beta=\left\{ 1^{\beta},2^{\beta}\right\} $,
and the set of allowable treatments $T$ consist of all four possible
combinations of the factor points. Let $A$ and $B$ be Bernoulli
variables distributed as shown:

\

\begin{center}%
\begin{tabular}{cc|cc|c|}
\hline 
\multicolumn{1}{|c}{$\alpha$} & $\beta$ & $A$ & $B$ & $\Pr$\tabularnewline
\hline 
\hline 
\multicolumn{1}{|c}{1} & 1 & 0 & 0 & $.140$\tabularnewline
\cline{1-2} 
 &  & 0 & 1 & $.360$\tabularnewline
 &  & 1 & 0 & $.360$\tabularnewline
 &  & 1 & 1 & $.140$\tabularnewline
\cline{3-5} 
\end{tabular}$\quad$%
\begin{tabular}{cc|cc|c|}
\hline 
\multicolumn{1}{|c}{$\alpha$} & $\beta$ & $A$ & $B$ & $\Pr$\tabularnewline
\hline 
\hline 
\multicolumn{1}{|c}{1} & 2 & 0 & 0 & $.198$\tabularnewline
\cline{1-2} 
 &  & 0 & 1 & $.302$\tabularnewline
 &  & 1 & 0 & $.302$\tabularnewline
 &  & 1 & 1 & $.198$\tabularnewline
\cline{3-5} 
\end{tabular}

\

\begin{tabular}{cc|cc|c|}
\hline 
\multicolumn{1}{|c}{$\alpha$} & $\beta$ & $A$ & $B$ & $\Pr$\tabularnewline
\hline 
\hline 
\multicolumn{1}{|c}{2} & 1 & 0 & 0 & $.189$\tabularnewline
\cline{1-2} 
 &  & 0 & 1 & $.311$\tabularnewline
 &  & 1 & 0 & $.311$\tabularnewline
 &  & 1 & 1 & $.189$\tabularnewline
\cline{3-5} 
\end{tabular}$\quad$%
\begin{tabular}{cc|cc|c|}
\hline 
\multicolumn{1}{|c}{$\alpha$} & $\beta$ & $A$ & $B$ & $\Pr$\tabularnewline
\hline 
\hline 
\multicolumn{1}{|c}{2} & 2 & 0 & 0 & $.460$\tabularnewline
\cline{1-2} 
 &  & 0 & 1 & $.040$\tabularnewline
 &  & 1 & 0 & $.040$\tabularnewline
 &  & 1 & 1 & $.460$\tabularnewline
\cline{3-5} 
\end{tabular}\end{center}

\

\protect{\noindent}Marginal selectivity here is satisfied trivially:
all marginal probabilities are equal 0.5, for all treatments. The
linear programing routine of Mathematica\texttrademark (using the
interior point algorithm) shows that the linear equations (\ref{eq:linear equations})
have nonnegative solutions corresponding to the JDC-vector

\

\protect{\noindent}%
\begin{tabular}{|cccc|c|}
\hline 
$H_{1^{\alpha}}$ & $H_{2^{\alpha}}$ & $H_{1^{\beta}}$ & $H_{2^{\beta}}$ & $\Pr$\tabularnewline
\hline 
\hline 
0 & 0 & 0 & 0 & $.02708610$\tabularnewline
0 & 0 & 0 & 1 & $.00239295$\tabularnewline
0 & 0 & 1 & 0 & $.16689300$\tabularnewline
0 & 0 & 1 & 1 & $.03358610$\tabularnewline
0 & 1 & 0 & 0 & $.00197965$\tabularnewline
0 & 1 & 0 & 1 & $.10854100$\tabularnewline
0 & 1 & 1 & 0 & $.00204128$\tabularnewline
0 & 1 & 1 & 1 & $.15748000$\tabularnewline
\hline 
\end{tabular}$\quad$%
\begin{tabular}{|cccc|c|}
\hline 
$H_{1^{\alpha}}$ & $H_{2^{\alpha}}$ & $H_{1^{\beta}}$ & $H_{2^{\beta}}$ & $\Pr$\tabularnewline
\hline 
\hline 
1 & 0 & 0 & 0 & $.15748000$\tabularnewline
1 & 0 & 0 & 1 & $.00204128$\tabularnewline
1 & 0 & 1 & 0 & $.10854100$\tabularnewline
1 & 0 & 1 & 1 & $.00197965$\tabularnewline
1 & 1 & 0 & 0 & $.03358610$\tabularnewline
1 & 1 & 0 & 1 & $.16689300$\tabularnewline
1 & 1 & 1 & 0 & $.00239295$\tabularnewline
1 & 1 & 1 & 1 & $.02708610$\tabularnewline
\hline 
\end{tabular}

\

\protect{\noindent}This proves that in this case we do have $(A,B)\looparrowleft(\alpha,\beta)$.\qed
\end{example}

\begin{example}
In the previous example, let us change the distributions of $(A,B)$
to the following:

\

\begin{center}%
\begin{tabular}{cc|cc|c|}
\hline 
\multicolumn{1}{|c}{$\alpha$} & $\beta$ & $A$ & $B$ & $\Pr$\tabularnewline
\hline 
\hline 
\multicolumn{1}{|c}{1} & 1 & 0 & 0 & $.450$\tabularnewline
\cline{1-2} 
 &  & 0 & 1 & $.050$\tabularnewline
 &  & 1 & 0 & $.050$\tabularnewline
 &  & 1 & 1 & $.450$\tabularnewline
\cline{3-5} 
\end{tabular}$\quad$%
\begin{tabular}{cc|cc|c|}
\hline 
\multicolumn{1}{|c}{$\alpha$} & $\beta$ & $A$ & $B$ & $\Pr$\tabularnewline
\hline 
\hline 
\multicolumn{1}{|c}{1} & 2 & 0 & 0 & $.105$\tabularnewline
\cline{1-2} 
 &  & 0 & 1 & $.395$\tabularnewline
 &  & 1 & 0 & $.395$\tabularnewline
 &  & 1 & 1 & $.105$\tabularnewline
\cline{3-5} 
\end{tabular}

\

\begin{tabular}{cc|cc|c|}
\hline 
\multicolumn{1}{|c}{$\alpha$} & $\beta$ & $A$ & $B$ & $\Pr$\tabularnewline
\hline 
\hline 
\multicolumn{1}{|c}{2} & 1 & 0 & 0 & $.170$\tabularnewline
\cline{1-2} 
 &  & 0 & 1 & $.330$\tabularnewline
 &  & 1 & 0 & $.330$\tabularnewline
 &  & 1 & 1 & $.170$\tabularnewline
\cline{3-5} 
\end{tabular}$\quad$%
\begin{tabular}{cc|cc|c|}
\hline 
\multicolumn{1}{|c}{$\alpha$} & $\beta$ & $A$ & $B$ & $\Pr$\tabularnewline
\hline 
\hline 
\multicolumn{1}{|c}{2} & 2 & 0 & 0 & $.110$\tabularnewline
\cline{1-2} 
 &  & 0 & 1 & $.390$\tabularnewline
 &  & 1 & 0 & $.390$\tabularnewline
 &  & 1 & 1 & $.110$\tabularnewline
\cline{3-5} 
\end{tabular}\end{center}

\

\protect{\noindent}Once again, marginal selectivity is satisfied
trivially, as all marginal probabilities are 0.5, for all treatments.
The linear programing routine of Mathematica\texttrademark, however,
shows that the linear equations (\ref{eq:linear equations}) have
no nonnegative solutions. This excludes the existence of a JDC-vector
for this situations, ruling out thereby the possibility of $(A,B)\looparrowleft(\alpha,\beta)$.\qed
\end{example}
Since the Linear Feasibility Test is both a necessary and sufficient
condition for selective influences, if it is passed for $(A_{1},\ldots,A_{n})(\phi)$,
it is guaranteed to be passed following any factor-point-specific
transformations of these random outputs. All such transformations
in the case of discrete random variables can be described as combinations
of renamings (factor-point specific ones) and augmentations (grouping
of some values together). In fact, a result of the Linear Feasibility
Test simply does not depend on the values of the random variables
involved, only their probabilities matter. Therefore a renaming, such
as in Example \ref{exa:renaming}, will not change anything in the
system of linear equations and inequalities (\ref{eq:nonegativity})-(\ref{eq:linear equations}).
An example of augmentation (or {}``coarsening'') will be redefining
$A$ and $B$, each having possible values $1,2,3,4$, into binary
variables
\[
A^{*}\left(\phi\right)=\begin{cases}
0 & \textnormal{if }A\left(\phi\right)=1,2,\\
1 & \textnormal{if }A\left(\phi\right)=3,4,
\end{cases}\quad B^{*}\left(\phi\right)=\begin{cases}
0 & \textnormal{if }B\left(\phi\right)=1,2,3,\\
1 & \textnormal{if }B\left(\phi\right)=4.
\end{cases}
\]
It is clear that any such an augmentation amounts to replacing some
of the equations in (\ref{eq:linear equations}) with their sums.
Therefore, if the original system has a solution, so will also the
system after such replacements. 

The same reasoning applies to one's redefining the factors by grouping
together some of the factor points: e.g., redefining $\alpha=\left\{ 1^{\alpha},2^{\alpha},3^{\alpha}\right\} $
into 
\[
\alpha^{*}=\left\{ \left\{ 1^{\alpha},2^{\alpha}\right\} ^{\alpha^{*}},\left\{ 3^{\alpha}\right\} ^{\alpha^{*}}\right\} =\left\{ 1^{\alpha^{*}},2^{\alpha^{*}}\right\} .
\]
This change will amount to replacing by their sum any two equations
whose right hand sides correspond to identical vectors $\left(l_{1},\ldots,l_{n};\lambda_{1},\ldots,\lambda_{n}\right)$
except for the factor point for $\alpha$ being 1 in one of them and
2 in another. 

Summarizing, the Linear Feasibility Test cannot reject selective influences
on a coarser level of representation (for random variables and/or
factors) and uphold it on a finer level (although the reverse, obviously,
can happen).

If the random variables involved have more than finite number of values
and/or the factors consist of more than finite number of factor points,
or if these numbers, though finite, are too large to handle the ensuing
linear programming problem, then the Linear Feasibility Test can still
be used after the values of the random variables and/or factors have
been appropriately grouped. The Linear Feasibility Test then becomes
only a necessary condition for selective influences, and its results
will generally be different for different (non-nested) groupings. 
\begin{example}
Consider the hypothesis $(A,B)\looparrowleft(\alpha,\beta)$ with
the factors having a finite number of factor points each, and $A$
and $B$ being response times. To use the Linear Feasibility Test,
one can transform the random variable $A$ as, say,
\[
A^{*}\left(\phi\right)=\begin{cases}
1 & \textnormal{if }A\left(\phi\right)\leq a_{1/4}\left(\phi\right),\\
2 & \textnormal{if }a_{1/4}\left(\phi\right)<A\left(\phi\right)\leq a_{1/2}\left(\phi\right),\\
3 & \textnormal{if }a_{1/2}\left(\phi\right)<A\left(\phi\right)\leq a_{3/4}\left(\phi\right),\\
4 & \textnormal{if }A\left(\phi\right)>a_{3/4}\left(\phi\right),
\end{cases}
\]
and transform $B$ as
\[
B^{*}\left(\phi\right)=\begin{cases}
1 & \textnormal{if }B\left(\phi\right)\leq b_{1/2}\left(\phi\right),\\
2 & \textnormal{if }B\left(\phi\right)>b_{1/2}\left(\phi\right),
\end{cases}
\]
where $a_{p}\left(\phi\right)$ and $b_{p}\left(\phi\right)$ designate
the $p$th quantiles of, respectively $A\left(\phi\right)$ and $B\left(\phi\right)$.
The initial hypothesis now is reformulated as $(A^{*},B^{*})\looparrowleft(\alpha,\beta)$,
with the understanding that if it is rejected then the initial hypothesis
will be rejected too (a necessary condition only). The Linear Feasibility
test will now be applied to distributions of the form

\

\begin{center}%
\begin{tabular}{cc|cc|c|}
\hline 
\multicolumn{1}{|c}{$\alpha$} & $\beta$ & $A$ & $B$ & $\Pr$\tabularnewline
\hline 
\hline 
\multicolumn{1}{|c}{$x$} & y & 1 & 1 & $p_{11}$\tabularnewline
\cline{1-2} 
 &  & 1 & 2 & $p_{12}$\tabularnewline
 &  & $\vdots$ & $\vdots$ & $\vdots$\tabularnewline
 &  & 4 & 1 & $p_{41}$\tabularnewline
 &  & 4 & 2 & $p_{42}$\tabularnewline
\cline{3-5} 
\end{tabular}\end{center}

\

\protect{\noindent}where the marginals for $A$ are constrained to
0.25 and the marginals for $B$ to 0.5, for all treatments $\left\{ x^{\alpha},y^{\beta}\right\} $,
yielding a trivial compliance with marginal selectivity. Note that
the test may very well uphold $(A^{*},B^{*})\looparrowleft(\alpha,\beta)$
even if marginal selectivity is violated for $(A,B)(\phi)$ (e.g.,
if the quantiles $a_{p}\left(x^{\alpha},y^{\beta}\right)$ change
as a function of $y^{\beta}$). \qed

Sample level problems do not seem to present a serious difficulty.
The general approach mentioned in Section \ref{sub:stats} is facilitated
by the following consideration. If a system of linear equations and
inequalities has an {}``interior'' solution (one for which all inequalities
are satisfied in the strict form, which in our case means that the
solution contains no zeros), then the solution is stable with respect
to sufficiently small perturbations of its coefficients. In our case,
this means that if an interior solution exists for population-level
values of $P\left(l_{1},\ldots,l_{n};\lambda_{1},\ldots,\lambda_{n}\right)$,
and if the sample estimates of the latter are sufficiently close to
the population values, then the system will also have a solution for
sample estimates. By the same token, if no solution exists for the
population-level values of $P\left(l_{1},\ldots,l_{n};\lambda_{1},\ldots,\lambda_{n}\right)$,
then no solution will be found for sufficiently close to them sample
estimates. The only unstable situation exists if solutions exists
on the hypothetical population level (i.e., the selectiveness of influences
is satisfied), but they are all non-interior (contain zeros).\end{example}
\begin{rem}
The question arises: how restrictive is the condition of selective
influences within the class of distributions satisfying marginal selectivity?
We do not know anything close to a complete answer to this question,
but simulations show that selectivity of influence is not overly restrictive
with respect to marginal selectivity. Thus, if $k_{i}=m_{i}=2$ for
$i=1,2$, and if we constrain all marginal probabilities to 0.5 and
pick $P\left(1,1;1,1\right),P\left(1,1;1,2\right),P\left(1,1;2,1\right),P\left(1,1;2,2\right)$
from four independent uniform distributions between 0 and 0.5, the
probability of {}``randomly'' obtaining selective influences is
about 0.67. If $k_{i}=m_{i}=2$ for $i=1,2,3$, and we constrain all
2-marginal probabilities to 0.25, the analogous probability is about
0.10. 
\end{rem}

\section{\label{sec:Distance-type-tests}Distance-type tests}

\subsection{General theory\label{sub:General-theory}}

First, we establish the general terminology related to distance-type
functions. Given a set $\mathcal{R}$, a function $d:\mathcal{R}\times\mathcal{R}\rightarrow\left[0,\infty\right]$
is a \emph{premetric} if $d\left(x,x\right)=0$. The inclusion of
the possibility $d\left(x,y\right)=\infty$ usually adds the qualifier
{}``extended'' (in this case, extended premetric), but we will omit
it for brevity. A premetric that satisfies the triangle inequality,
\[
d\left(x,z\right)\le d\left(x,y\right)+d\left(y,z\right),
\]
for any $x,y,z\in\mathcal{R}$, is a \emph{pseudo-quasi-metric} (\emph{p.q.-metric},
for short). A p.q.-metric which is symmetric,
\[
d\left(x,y\right)=d\left(y,x\right),
\]
for all $x,y\in\mathcal{R}$, is a \emph{pseudometric}. A p.q.-metric
such that
\[
x\neq y\Longrightarrow d\left(x,y\right)>0
\]
(equivalently, $d\left(x,y\right)=0$ if and only if $x=y$) is a
\emph{quasimetric}. A p.q.-metric which is simultaneously a quasimetric
and a pseudometric is a \emph{conventional} (symmetric) \emph{metric}.
The words {}``metric'' and {}``distance'' can be used interchangeably:
so one can speak of conventional (symmetric) distances, pseudodistances,
quasidistances, and p.q.-distances. %
\footnote{The terminology adopted in this paper is conventional but not universal.
In particular, the term {}``metric'' or {}``distance'' is sometimes
used to mean pseudometric. In the context of Finsler geometry and
the dissimilarity cumulation theory (Dzhafarov, 2010) the term {}``metric''
is used to designate quasimetric with an additional property of being
{}``symmetric in the small.'' %
} 

We are interested in the situation when $\mathcal{R}$ is a set of
jointly distributed random variables (discreet, continuous, or mixed),
with the intent to apply a distance-type function definable on such
an $\mathcal{R}$ to the JDC-vector $H$ of random variables for the
diagram of selective influences (\ref{diag:bijective DSI}). The random
variables $A(\phi)=(A_{1},\ldots,A_{n})(\phi)$, the factors $\Phi=\left\{ \alpha_{1},\ldots,\alpha_{n}\right\} $,
and the set of treatments $T$ are defined as above. The main property
we are concerned with is the triangle inequality, that is, it is typically
sufficient to know that the distance-type function we are dealing
with is a p.q.-metric.

The function (\ref{eq:M-metric}) considered in Section \ref{sub:General logic}
serves as an introductory example of a metric on which one can base
a test for selective influences. As a simple example of using a p.q.-metric
which is not a conventional metric (in fact, not even a pseudometric
or quasimetric), consider the following. Let the elements of $\mathcal{R}$
be binary random variables, with values $\left\{ 1,2\right\} $. Define,
for any $A_{1},\ldots,A_{p},B_{1},\ldots,B_{q}\in\mathcal{R}$,
\[
P^{\left(2\right)}\left[\left(A_{1},\ldots,A_{p}\right)\left(B_{1},\ldots,B_{q}\right)\right]=\Pr\left[\begin{array}{c}
A_{i}=1\textnormal{ for }i=1,\ldots,p,\\
B_{j}=2\textnormal{ for }j=1,\ldots,q
\end{array}\right].
\]
The parentheses may be dropped around singletons, in particular,
\[
\Pr\left[A=1,B=2\right]=P^{\left(2\right)}\left[\left(A\right)\left(B\right)\right]=P^{\left(2\right)}\left[AB\right].
\]
The latter is clearly a premetric: $P^{\left(2\right)}$ is nonnegative,
and $P^{\left(2\right)}\left[RR\right]=0$, for any $R\in\mathcal{R}$.
To prove the triangle inequality,
\[
P^{\left(2\right)}\left[R_{1}R_{2}\right]\leq P^{\left(2\right)}\left[RR_{2}\right]+P^{\left(2\right)}\left[R_{1}R\right],
\]
for any $R_{1},R_{2},R\in\mathcal{R}$, observe that
\[
P^{\left(2\right)}\left[R_{1}R_{2}\right]=P^{\left(2\right)}\left[\left(R_{1},R\right)R_{2}\right]+P^{\left(2\right)}\left[R_{1}\left(R_{2},R\right)\right],
\]
\[
P^{\left(2\right)}\left[RR_{2}\right]=P^{\left(2\right)}\left[\left(R_{1},R\right)R_{2}\right]+P^{\left(2\right)}\left[R\left(R_{1},R_{2}\right)\right],
\]
\[
P^{\left(2\right)}\left[R_{1}R\right]=P^{\left(2\right)}\left[\left(R_{1},R_{2}\right)R\right]+P^{\left(2\right)}\left[R_{1}\left(R_{2},R\right)\right],
\]
whence
\[
\begin{array}{c}
P^{\left(2\right)}\left[RR_{2}\right]+P^{\left(2\right)}\left[R_{1}R\right]-P^{\left(2\right)}\left[R_{1}R_{2}\right]\\
\\
=P^{\left(2\right)}\left[R\left(R_{1},R_{2}\right)\right]+P^{\left(2\right)}\left[\left(R_{1},R_{2}\right)R\right]\geq0.
\end{array}
\]
Note that $P^{\left(2\right)}$ is not a pseudometric because generally
\[
\begin{array}{l}
P^{\left(2\right)}\left[R_{1}R_{2}\right]=\Pr\left[R_{1}=1,R_{2}=2\right]\\
\not=\Pr\left[R_{2}=1,R_{1}=2\right]=P^{\left(2\right)}\left[R_{2}R_{1}\right].
\end{array}
\]
Nor is $P^{\left(2\right)}$ a quasimetric because it may very well
happen that $R_{1}\not=R_{2}$ but
\[
P^{\left(2\right)}\left[R_{1}R_{2}\right]=\Pr\left[R_{1}=1,R_{2}=2\right]=0.
\]

To use this p.q.-metric for our purposes: each random variable $H_{x^{\alpha}}$
in the hypothetical JDC-vector $H$ has a set of possible values $\mathcal{A}_{\alpha}$,
in which we choose and fix a measurable subset $\mathcal{A}_{x^{\alpha}}^{+}$
and its complement $\mathcal{A}_{x^{\alpha}}^{-}$. Note that $\mathcal{A}_{\alpha}$
is the same for all factor points of the factor $\alpha$ (and coincides
with the spectrum of the random variable in the diagram (\ref{eq:DSI})
which is supposed to be selectively influenced by $\alpha$). Transform
each $H_{x^{\alpha}}$ as
\begin{equation}
R_{x^{\alpha}}=\left\{ \begin{array}{ccc}
1 & \textnormal{if} & H_{x^{\alpha}}\in\mathcal{A}_{x^{\alpha}}^{-},\\
2 & \textnormal{\textnormal{i}f} & H_{x^{\alpha}}\in\mathcal{A}_{x^{\alpha}}^{+},
\end{array}\right.\label{eq:plus/minus trans}
\end{equation}
and define, for each pair of factor points $x^{\alpha},y^{\beta}$,
\begin{equation}
Dx^{\alpha}y^{\beta}=P^{\left(2\right)}\left[R_{x^{\alpha}}R_{y^{\beta}}\right].\label{eq:plus/minus}
\end{equation}
Here, once again (see Section \ref{sub:Population-level-tests}),
we write $x^{\alpha}y^{\beta}$ in place of $\left(x^{\alpha},y^{\beta}\right)$.
This time we are going to formalize this notation as part the following
general convention: any \emph{chain} (a finite sequence) of factor
points will be written as a \emph{string of symbols}, without commas
and parentheses, such as $x_{1}^{\alpha_{1}}\ldots x_{l}^{\alpha_{l}}$,
$x^{\alpha}y^{\beta}z^{\gamma}$, etc.

The value of $Dx^{\alpha}y^{\beta}$ is computable for any $x^{\alpha}y^{\beta}$
which is part of a treatment $\phi\in T$. The test therefore consists
in checking whether
\begin{equation}
Dx_{1}^{\alpha_{1}}x_{l}^{\alpha_{l}}\leq Dx_{1}^{\alpha_{1}}x_{2}^{\alpha_{2}}+Dx_{2}^{\alpha_{2}}x_{3}^{\alpha_{3}}+\ldots+Dx_{l-1}^{\alpha_{l-1}}x_{l}^{\alpha_{l}}\label{eq:distance test}
\end{equation}
for any chain of factor points $x_{1}^{\alpha_{1}}\ldots x_{l}^{\alpha_{l}}$
($l\geq$3) satisfying (\ref{eq:pairwise}) and such that for some
treatments $\phi^{\left(1\right)},\ldots,\phi^{\left(l\right)}\in T$
(not necessarily pairwise distinct),
\begin{equation}
\left\{ x_{1}^{\alpha_{1}},x_{l}^{\alpha_{l}}\right\} \subset\phi^{\left(1\right)},\left\{ x_{1}^{\alpha_{1}},x_{2}^{\alpha_{2}}\right\} \subset\phi^{\left(2\right)},\ldots,\left\{ x_{l-1}^{\alpha_{l-1}},x_{l}^{\alpha_{l}}\right\} \subset\phi^{\left(l\right)}.\label{eq:dist cond}
\end{equation}
Note that this is just another way of writing (\ref{eq:dist_cond_intro})-(\ref{eq:dist test intro}).
If the test is failed (i.e., the inequality is violated) for at least
one such sequence of factor points, then the hypothesis $(A_{1},\ldots,A_{n})\looparrowleft(\alpha_{1},\ldots,\alpha_{n})$
is rejected. In the following we will refer to any sequence of factor
points $x_{1}^{\alpha_{1}}\ldots x_{l}^{\alpha_{l}}$ ($l\geq3$)
subject to (\ref{eq:pairwise}) and (\ref{eq:dist cond}) as a \emph{treatment-realizable}
chain.
\begin{example}
\label{exa:classification test}Let $\alpha=\left\{ 1^{\alpha},2^{\alpha}\right\} $,
$\beta=\left\{ 1^{\beta},2^{\beta}\right\} $, and the set of allowable
treatments $T$ consist of all four possible combinations of the factor
points. Let $(A,B)$ be bivariate normally distributed at every treatment
$\phi$, with standard normal marginals and with correlations
\[
\rho\left(x^{\alpha},y^{\beta}\right)=\begin{cases}
-.9 & \textnormal{at }\left\{ x^{\alpha},y^{\beta}\right\} =\left\{ 1^{\alpha},1^{\beta}\right\} ,\\
+.9 & \textnormal{at }\left\{ x^{\alpha},y^{\beta}\right\} =\left\{ 1^{\alpha},2^{\beta}\right\} ,\\
+.9 & \textnormal{at }\left\{ x^{\alpha},y^{\beta}\right\} =\left\{ 2^{\alpha},1^{\beta}\right\} ,\\
-.1 & \textnormal{at }\left\{ x^{\alpha},y^{\beta}\right\} =\left\{ 2^{\alpha},2^{\beta}\right\} .
\end{cases}
\]
We form variables
\[
A^{*}\left(\phi\right)=\begin{cases}
1 & \textnormal{if }A\left(\phi\right)\leq0,\\
2 & \textnormal{if }A\left(\phi\right)>0,
\end{cases}\quad B^{*}\left(\phi\right)=\begin{cases}
1 & \textnormal{if }B\left(\phi\right)\leq0,\\
2 & \textnormal{if }B\left(\phi\right)>0,
\end{cases}
\]
with all marginals obviously constrained to 0.5, for all treatments.
The joint distributions are computed to be

\

\begin{center}%
\begin{tabular}{cc|cc|c|}
\hline 
\multicolumn{1}{|c}{$\alpha$} & $\beta$ & $A^{*}$ & $B^{*}$ & $\Pr$\tabularnewline
\hline 
\hline 
\multicolumn{1}{|c}{1} & 1 & 1 & 1 & $\cdots$\tabularnewline
\cline{1-2} 
 &  & 1 & 2 & $.428217$\tabularnewline
 &  & 2 & 1 & $\cdots$\tabularnewline
 &  & 2 & 2 & $\cdots$\tabularnewline
\cline{3-5} 
\end{tabular}$\quad$%
\begin{tabular}{cc|cc|c|}
\hline 
\multicolumn{1}{|c}{$\alpha$} & $\beta$ & $A^{*}$ & $B^{*}$ & $\Pr$\tabularnewline
\hline 
\hline 
\multicolumn{1}{|c}{1} & 2 & 1 & 1 & $\cdots$\tabularnewline
\cline{1-2} 
 &  & 1 & 2 & $.0717831$\tabularnewline
 &  & 2 & 1 & $\cdots$\tabularnewline
 &  & 2 & 2 & $\cdots$\tabularnewline
\cline{3-5} 
\end{tabular}

\

\begin{tabular}{cc|cc|c|}
\hline 
\multicolumn{1}{|c}{$\alpha$} & $\beta$ & $A^{*}$ & $B^{*}$ & $\Pr$\tabularnewline
\hline 
\hline 
\multicolumn{1}{|c}{2} & 1 & 1 & 1 & $\cdots$\tabularnewline
\cline{1-2} 
 &  & 1 & 2 & $.0717831$\tabularnewline
 &  & 2 & 1 & $\cdots$\tabularnewline
 &  & 2 & 2 & $\cdots$\tabularnewline
\cline{3-5} 
\end{tabular}$\quad$%
\begin{tabular}{cc|cc|c|}
\hline 
\multicolumn{1}{|c}{$\alpha$} & $\beta$ & $A^{*}$ & $B^{*}$ & $\Pr$\tabularnewline
\hline 
\hline 
\multicolumn{1}{|c}{2} & 2 & 1 & 1 & $\cdots$\tabularnewline
\cline{1-2} 
 &  & 1 & 2 & $.265942$\tabularnewline
 &  & 2 & 1 & $\cdots$\tabularnewline
 &  & 2 & 2 & $\cdots$\tabularnewline
\cline{3-5} 
\end{tabular}\end{center}

\

\protect{\noindent}where for each treatment $\phi$ we only show
the probabilities $\Pr\left[A^{*}=1,B^{*}=2\right]=P^{\left(2\right)}\left[A^{*}B^{*}\right]$,
other probabilities being irrelevant for our computations. Since $\left\{ 1^{\alpha},1^{\beta}\right\} $,
$\left\{ 1^{\alpha},2^{\beta}\right\} $,$\left\{ 2^{\alpha},2^{\beta}\right\} $,
and $\left\{ 2^{\alpha},1^{\beta}\right\} $ are all allowable treatment,
1$^{\alpha}2^{\beta}2^{\alpha}1^{\beta}$ is a treatment-realizable
chain. We can put therefore 
\[
Dx^{\alpha}y^{\beta}=P^{\left(2\right)}\left[A^{*}\left(x^{\alpha},y^{\beta}\right)B^{*}\left(x^{\alpha},y^{\beta}\right)\right]
\]
and observe that
\[
.428217=D1^{\alpha}1^{\beta}>D1^{\alpha}2^{\beta}+D2^{\alpha}2^{\beta}+D2^{\alpha}1^{\beta}=0.409508.
\]
This violation of the chain inequality rules out $\left(A,B\right)\looparrowleft\left(\alpha,\beta\right)$.\qed
\end{example}
The formulation of the test (\ref{eq:distance test}), subject to
(\ref{eq:pairwise}) and (\ref{eq:dist cond}), is valid for any p.q.-metric
$D$ imposed on the hypothetical JDC-vector $H$ for the diagram (\ref{diag:bijective DSI}).
It turns out, however, that using all possible treatment-realizable
chains $x_{1}^{\alpha_{1}}\ldots x_{l}^{\alpha_{l}}$ of factor points
would be redundant, in view of the lemma below. For its formulation
we need an additional concept. A treatment-realizable chain $x_{1}^{\alpha_{1}}\ldots x_{l}^{\alpha_{l}}$
($l\geq$3) is called \emph{irreducible} if 
\begin{enumerate}
\item the only nonempty subsets thereof that are subsets of treatments are
the pairs listed in (\ref{eq:dist cond}), and
\item no factor point in it occurs more than once.
\end{enumerate}
Thus, a triadic treatment-realizable chain $x^{\alpha}y^{\beta}z^{\gamma}$
is irreducible if and only if there is no treatment $\phi$ that includes
$\left\{ x^{\alpha},y^{\beta},z^{\gamma}\right\} $. Tetradic treatment-realizable
chains of the form $x^{\alpha}y^{\beta}u^{\alpha}v^{\beta}$ are irreducible
if and only if $x^{\alpha}\neq u^{\alpha}$ and $y^{\beta}\not=v^{\beta}$.
\begin{thm}[Distance-type Tests]
\label{thm:irreducible}Given a p.q.-metric $D$ on the hypothetical
JDC-vector $H$ for the diagram (\ref{diag:bijective DSI}), the inequality
(\ref{eq:distance test}) is satisfied for all treatment-realizable
chains if and only if this inequality holds for all irreducible chains.
\end{thm}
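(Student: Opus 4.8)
The plan is to recast the combinatorics of treatments as a graph and read the chain inequality off that graph. Let $G$ be the graph whose vertices are the factor points $\bigcup_{i=1}^{n}\alpha_{i}$, with an edge joining $x^{\alpha}$ and $y^{\beta}$ whenever $\{x^{\alpha},y^{\beta}\}$ is contained in some treatment $\phi\in T$; attach to this edge the nonnegative number $Dx^{\alpha}y^{\beta}$ (well defined because marginal selectivity is assumed, else the diagram is already rejected), and for any walk $W$ in $G$ let $\ell(W)$ be the sum of these numbers over the edges traversed by $W$. Since a treatment has exactly one point of each factor, an edge always joins points of distinct factors, and the points of any single treatment span a clique of $G$ on which $D$ is a genuine p.q.-metric (the corresponding components of the hypothetical $H$ are honestly jointly distributed, being a subvector of $A(\phi)$); hence the inequality (\ref{eq:distance test}) holds \emph{automatically} for every chain all of whose factor points lie in one treatment. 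In this language a treatment-realizable chain $x_{1}^{\alpha_{1}}\ldots x_{l}^{\alpha_{l}}$ amounts to an edge $e=\{x_{1}^{\alpha_{1}},x_{l}^{\alpha_{l}}\}$ together with a walk from $x_{1}^{\alpha_{1}}$ to $x_{l}^{\alpha_{l}}$ along its other listed pairs, and (\ref{eq:distance test}) asserts that $Dx_{1}^{\alpha_{1}}x_{l}^{\alpha_{l}}$ does not exceed the length of that walk; an irreducible chain of length $\ge 4$ is precisely a chordless (induced) cycle of $G$, and an irreducible triadic chain is a triangle of $G$ lying in no single treatment. The ``only if'' direction is then trivial, irreducible chains being treatment-realizable.

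For ``if'', the crux is the claim: assuming (\ref{eq:distance test}) holds for all irreducible chains, one has $Dx^{\alpha}y^{\beta}\le\ell(P)$ for every edge $e=\{x^{\alpha},y^{\beta}\}$ of $G$ and every simple path $P$ in $G$ joining $x^{\alpha}$ to $y^{\beta}$. I would prove this by induction on the number $r$ of edges of $P$. For $r=1$ we have $P=e$ and there is nothing to prove; for $r=2$ the endpoints of $P$ and its middle vertex span a triangle of $G$, so the required inequality is either an irreducible triadic-chain inequality or an automatic within-treatment one. For $r\ge 3$, adjoin $e$ to $P$ to obtain a cycle $C$ of length $r+1\ge 4$. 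If $C$ is induced, then $C$ is an irreducible chain whose instance of (\ref{eq:distance test}) is exactly the desired inequality. If instead $C$ has a chord $f=\{y_{a},y_{b}\}$ (writing $P$ as $x^{\alpha}=y_{0}\to\cdots\to y_{r}=y^{\beta}$, with $2\le b-a\le r-1$), then $f$ cuts $P$ into two strictly shorter simple paths: the subpath $P_{1}$ of $P$ from $y_{a}$ to $y_{b}$, and the path $P_{2}$ from $x^{\alpha}$ to $y^{\beta}$ that follows $P$ to $y_{a}$, uses $f$, and follows $P$ from $y_{b}$ on. The inductive hypothesis, applied to $f$ with $P_{1}$ and to $e$ with $P_{2}$, gives $\ell(f)\le\ell(P_{1})$ and $Dx^{\alpha}y^{\beta}\le\ell(P_{2})$; since $\ell(P_{2})$ is $\ell(f)$ plus the lengths of the edges of $P_{2}$ other than $f$, and those edges together with the edges of $P_{1}$ are exactly the edges of $P$, one gets $Dx^{\alpha}y^{\beta}\le\ell(P_{2})=\ell(f)+\bigl(\ell(P)-\ell(P_{1})\bigr)\le\ell(P)$.

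Granting the claim, an arbitrary treatment-realizable chain $x_{1}^{\alpha_{1}}\ldots x_{l}^{\alpha_{l}}$ is finished off by noting that its ``path part'' is a walk in $G$ from $x_{1}^{\alpha_{1}}$ to $x_{l}^{\alpha_{l}}$ of length $\sum_{i=1}^{l-1}Dx_{i}^{\alpha_{i}}x_{i+1}^{\alpha_{i+1}}$; deleting loops from this walk leaves a simple path $P$ between the same endpoints with $\ell(P)\le\sum_{i=1}^{l-1}Dx_{i}^{\alpha_{i}}x_{i+1}^{\alpha_{i+1}}$ (edge lengths being nonnegative), and the claim applied to $e=\{x_{1}^{\alpha_{1}},x_{l}^{\alpha_{l}}\}$ and $P$ then gives $Dx_{1}^{\alpha_{1}}x_{l}^{\alpha_{l}}\le\ell(P)\le\sum_{i=1}^{l-1}Dx_{i}^{\alpha_{i}}x_{i+1}^{\alpha_{i+1}}$, which is (\ref{eq:distance test}).

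The main obstacle is the claim of the second paragraph — in effect, strengthening ``the long edge is beaten by its induced-cycle path'' to ``the long edge is beaten by every simple path between its endpoints.'' The chord-splitting induction accomplishes this, and the delicate points are (a) checking that each chord yields subpaths that are both \emph{simple} and \emph{strictly shorter}, so the induction is well founded, and (b) making sure the reducible triadic case is absorbed by the free within-treatment triangle inequality rather than requiring an irreducible-chain hypothesis. The surrounding reductions (closed walk to simple path, and keeping track of which pair plays the role of the long edge) are routine once the graph picture is in place.
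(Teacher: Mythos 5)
Your proof is correct and is in essence the paper's own argument: the appendix establishes the nontrivial direction by the same two reductions --- excising repeated points (your loop-deletion on the walk) and splitting at a ``chord,'' i.e., a non-consecutive pair lying in a common treatment (Lemmas \ref{lem:dist1} and \ref{lem:dist2}) --- with the within-treatment triangle inequality absorbing the reducible triadic case exactly as you do, the only difference being that you run the chord-splitting as a direct induction on path length while the paper runs it as a descent on a contravening counterexample. One cosmetic caution: since $D$ is only a p.q.-metric and need not be symmetric, the edge weights of your graph must be read as ordered-pair weights ($Dx^{\alpha}y^{\beta}$ and $Dy^{\beta}x^{\alpha}$ may differ), but every step of your induction goes through verbatim under that reading.
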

This theorem is an immediate consequence of Lemma \ref{lem:dist2}
in the appendix, where it is proved for a general set-up involving
arbitrary sets of random entities and factors. 

Note that if $T$ includes all possible combinations of factor points,
$T=\alpha_{1}\times\ldots\times\alpha_{m}$ ({}``completely crossed
design''), then the condition of treatment-realizability is equivalent
to (\ref{eq:pairwise}). In this situation any set of factor points
belonging to pairwise different factors (e.g., $\left\{ x^{\alpha},y^{\beta}\right\} $,
or $\left\{ x^{\alpha},y^{\beta},z^{\gamma}\right\} $ with $\alpha\not=\beta\not=\gamma\not=\alpha$)
belongs to some treatment, whence an irreducible chain cannot contain
factor points of more than two distinct factors: they must all be
of the form $x_{1}^{\alpha}x_{2}^{\beta}x_{3}^{\alpha}x_{4}^{\beta}...x_{2k-1}^{\alpha}x_{2k}^{\beta}$
($\alpha\not=\beta$). It is easy to see, however, that if $k>2$,
each of the subsets $\left\{ x_{1}^{\alpha},x_{4}^{\beta}\right\} $
and $\left\{ x_{2}^{\beta},x_{5}^{\alpha}\right\} $ belongs to a
treatment. It follows that that all irreducible chains in a completely
crossed design are of the form $x^{\alpha}y^{\beta}u^{\alpha}v^{\beta}$,
with $\alpha\not=\beta$, $x^{\alpha}\neq u^{\alpha}$ and $y^{\beta}\not=v^{\beta}$. 
\begin{thm}[Distance-type Tests for Completely Crossed Designs]
If the set of treatments $T$ consists of all possible combinations
of factor points, then the inequality (\ref{eq:distance test}) is
satisfied for all treatment-realizable sequences of factor points
if and only if this inequality holds for all tetradic sequences of
the form $x^{\alpha}y^{\beta}u^{\alpha}v^{\beta}$, with $\alpha\not=\beta$,
$x^{\alpha}\neq u^{\alpha}$ and $y^{\beta}\not=v^{\beta}$.
\end{thm}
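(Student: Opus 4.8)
The plan is to derive this statement directly from Theorem \ref{thm:irreducible} together with an explicit description of the irreducible treatment-realizable chains when the design is completely crossed. By Theorem \ref{thm:irreducible}, the chain inequality (\ref{eq:distance test}) holds for all treatment-realizable chains if and only if it holds for all irreducible ones, so it suffices to show that, when $T=\alpha_{1}\times\ldots\times\alpha_{m}$, the irreducible chains are exactly the tetradic chains $x^{\alpha}y^{\beta}u^{\alpha}v^{\beta}$ with $\alpha\neq\beta$, $x^{\alpha}\neq u^{\alpha}$, $y^{\beta}\neq v^{\beta}$. I would first record the elementary fact (already noted in the paragraph preceding the theorem) that in a completely crossed design every set of factor points belonging to pairwise distinct factors is contained in some treatment; consequently any chain obeying (\ref{eq:pairwise}) is automatically treatment-realizable, and a subset of the chain lies inside a treatment precisely when it contains at most one point of each factor.

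For the easy direction, I would check that each tetradic chain $x^{\alpha}y^{\beta}u^{\alpha}v^{\beta}$ of the stated form is irreducible: it satisfies (\ref{eq:pairwise}) since $\alpha\neq\beta$ and hence is treatment-realizable; its only nonempty subsets contained in a treatment are (besides singletons) the four ``mixed'' pairs $\{x^{\alpha},y^{\beta}\}$, $\{y^{\beta},u^{\alpha}\}$, $\{u^{\alpha},v^{\beta}\}$, $\{v^{\beta},x^{\alpha}\}$, which are exactly the pairs listed in (\ref{eq:dist cond}); and since $x^{\alpha}\neq u^{\alpha}$ and $y^{\beta}\neq v^{\beta}$ no factor point repeats, so both clauses of the definition of irreducibility hold.

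For the converse, let $x_{1}^{\alpha_{1}}\ldots x_{l}^{\alpha_{l}}$ ($l\geq 3$) be irreducible. If it involved points of three pairwise distinct factors, those three points would form a three-element subset of the chain contained in some treatment (completely crossed design) yet not among the adjacent/wrap pairs of (\ref{eq:dist cond}), contradicting clause (1) of irreducibility; hence only two distinct factors occur. Together with (\ref{eq:pairwise}) this forces strict alternation between two factors $\alpha\neq\beta$, and the requirement $\alpha_{l}\neq\alpha_{1}$ forces $l$ to be even. If $l\geq 6$, then $x_{1}^{\alpha_{1}}$ and $x_{4}^{\alpha_{4}}$ belong to distinct factors, so $\{x_{1}^{\alpha_{1}},x_{4}^{\alpha_{4}}\}$ is contained in some treatment but is not one of the pairs in (\ref{eq:dist cond}) for this chain, again contradicting clause (1). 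Therefore $l=4$, the chain is $x^{\alpha}y^{\beta}u^{\alpha}v^{\beta}$ with $\alpha\neq\beta$, and clause (2) gives $x^{\alpha}\neq u^{\alpha}$ and $y^{\beta}\neq v^{\beta}$. Combining the two directions with Theorem \ref{thm:irreducible} yields the statement.

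The argument is essentially a combinatorial unwinding of the definitions of ``treatment-realizable'' and ``irreducible,'' so there is no deep obstacle; the only step needing a moment's care is the ``no long alternating chains'' part of the converse, namely the observation that in a completely crossed design a chain that alternates between two factors and has length $\geq 6$ always admits a ``shortcut'' pair (such as $\{x_{1}^{\alpha_{1}},x_{4}^{\alpha_{4}}\}$) that lies in some treatment but is not adjacent in the chain, which is exactly what irreducibility rules out, together with the small parity remark that an alternating chain closing up with $\alpha_{l}\neq\alpha_{1}$ must have even length.
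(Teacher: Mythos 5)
Your proof is correct and follows essentially the same route as the paper: invoke Theorem \ref{thm:irreducible} to reduce to irreducible chains, then show that in a completely crossed design the irreducible chains are exactly the tetradic alternating ones $x^{\alpha}y^{\beta}u^{\alpha}v^{\beta}$, ruling out three distinct factors and alternating chains of length $\geq 6$ by exhibiting a ``shortcut'' pair contained in a treatment. Your write-up is somewhat more explicit than the paper's (which compresses this into the paragraph preceding the theorem), but the argument is the same.
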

This formulation is given in Dzhafarov and Kujala (2010), although
there it is unnecessarily confined to metrics of a special kind, denoted
$M^{\left(p\right)}$ below.

\subsection{Classes of p.q.-metrics}

Let us consider some classes of p.q.-metrics that can be used for
distance-type tests. We do not attempt a systematization or maximal
generality, our goals being to show the reader how broad the spectrum
of the usable p.q.-metrics is, and how easy it is to generate new
ones.

\subsubsection{Minkowki-type metrics\label{sub:Minkowki-type-metrics}}

These are (conventional, symmetric) metrics of the type

\begin{equation}
M^{\left(p\right)}\left(A,B\right)=\begin{cases}
\sqrt[p]{\mathrm{E}\left[\left|A-B\right|^{p}\right]} & \textnormal{for }1\leq p<\infty,\\
\mathrm{ess}\sup\left|A-B\right| & \textnormal{for }p=\infty,
\end{cases}\label{eq:M^p}
\end{equation}
where 
\[
\mathrm{ess}\sup\left|A-B\right|=\inf\left\{ v:\Pr\left[\left|A-B\right|\leq v\right]=1\right\} .
\]
In the context of selective influences these metrics have been introduced
in Kujala and Dzhafarov (2008) and further analyzed in Dzhafarov and
Kujala (2010). The metric $M$ discussed in Section \ref{sub:Population-level-tests}
is a special case ($p=1$). An important property of $M^{\left(p\right)}$
is that the result of an $M^{\left(p\right)}$-based distance-type
test is not invariant with respect to factor-point-specific transformations
of the random variables. This allows one to conduct an infinity of
different tests on one and the same $A(\phi)=(A_{1},\ldots,A_{n})(\phi)$.
For numerous examples of how the test works see Kujala and Dzhafarov
(2008) and Dzhafarov and Kujala (2010).

\subsubsection{Classification p.q.-metrics\label{sub:Classification-p.q.-metrics}}

Classification p.q.-metrics are the p.q.-metrics defined through the
p.q.-metric $P^{\left(2\right)}$ by (\ref{eq:plus/minus}), following
a transformation (\ref{eq:plus/minus trans}). The general definition
is that for each random variable $X$ in a set of jointly distributed
random variables $\mathcal{R}$ we designate two complementary events
$E_{X}^{-}$ and $E_{X}^{+}$, and put 
\[
D_{C}\left(A,B\right)=\Pr\left[E_{A}^{-}\&E_{B}^{+}\right].
\]
The results of a $D_{C}$-based distance-type test for selective influences
depend on the choice of the events $E_{X}^{+}$, so different choices
would lead to different tests for one and the same $A\left(\phi\right)=(A_{1},\ldots,A_{n})(\phi)$.
See Example \ref{exa:classification test} for an illustration. 

To the best of our knowledge this interesting p.q.-metric was not
previously considered in mathematics. One standard way to generalize
it (see the principles of constructing derivative metrics in Section
\ref{sub:Generating-p.q.-metrics} below) is to make the set of events
$\left\{ E_{X}^{+}:X\in\mathcal{R}\right\} $ a random entity. In
the special case when all random variables in $\mathcal{R}$ take
their values in the set of real numbers, and $E_{X}^{+}$ for each
$X\in\mathcal{R}$ is defined by $X\geq v$, the {}``randomization''
of $\left\{ E_{X}^{+}:X\in\mathcal{R}\right\} $ reduces to that of
$v$. The p.q.-metric then becomes
\[
D_{S}\left(A,B\right)=\Pr\left[A\leq V<B\right]
\]
where $V$ is a random variable. An additively symmetrized (i.e.,
pseudometric) version of this p.q.-metric, $D_{S}\left(A,B\right)+D_{S}\left(B,A\right)$,
was introduced in Taylor (1984, 1985) under the name {}``separation
(pseudo)metric,'' and shown to be a conventional metric if $V$ is
chosen stochastically independent of all random variables in $\mathcal{R}$.

\subsubsection{Information-based p.q.-metric}

Let the jointly distributed random variables constituting the set
$\mathcal{R}$ be all discrete. Perhaps the simplest information-based
p.q.-metric is
\[
h\left(A|B\right)=-\sum_{a,b}p_{AB}\left(a,b\right)\log\frac{p_{AB}\left(a,b\right)}{p_{B}\left(b\right)},
\]
with the conventions $0\log\frac{0}{0}=0\log0=0$. is This function
is called \emph{conditional entropy}. The identity $h\left(A|A\right)=0$
is obvious, and the triangle inequality,
\[
h\left(A|B\right)\leq h\left(A|C\right)+h\left(C|B\right),
\]
follows from the standard information theory (in)equalities,
\[
h\left(A|B\right)\leq h\left(A,C|B\right),
\]
\[
h\left(A,C|B\right)=h\left(A|C,B\right)+h\left(C|B\right),
\]
and
\[
h\left(A|C,B\right)\leq h\left(A|C\right).
\]
Note that the test of selectiveness based on $h\left(A,B\right)$
(and any other information-based measure) is invariant with respect
to all bijective transformations of the variables.

The additively symmetrized (i.e., pseudometric) version of this p.q.-metric,
$h\left(A|B\right)+h\left(B|A\right)$ is well-known (Cover \& Thomas,
1990). Normalized versions of $h\left(A|B\right)$ are also of interest,
for instance,
\[
h_{N}\left(A|B\right)=\frac{2h\left(A|B\right)}{h\left(A,B\right)},
\]
where 
\[
h\left(A,B\right)=-\sum_{a,b}p_{AB}\left(a,b\right)\log p_{AB}\left(a,b\right),
\]
the \emph{joint entropy} of $A$ and $B$; $h_{N}\left(A|B\right)$
is bound between $0$ (attained when $A$ is a bijective transformation
of $B$) and $1$ (when $A$ and $B$ are independent). A proof of
the triangle inequality for $h_{N}$ can be found in Kraskov et al.
(2003), as part of their proof that $\frac{1}{2}\left[h_{N}\left(A|B\right)+h_{N}\left(B|A\right)\right]$
is a pseudometric.

\subsubsection{\label{sub:Generating-p.q.-metrics}Constructing p.q.-metrics from
other p.q.-metrics}

There are numerous ways of creating new p.q.-metrics from the ones
mentioned above, or from ones taken from outside probabilistic context.
Thus, if $d$ is a p.q.-metric on a set $S$, then, for any space
$\mathcal{R}$ of jointly distributed random variables taking their
values in $S$,
\[
D\left(A,B\right)=\mathrm{E}\left[d\left(A,B\right)\right],\quad A,B\in\mathcal{R},
\]
is a p.q.-metric on $\mathcal{R}$. This follows from the fact that
expectation $\mathrm{E}$ preserves inequalities and equalities identically
satisfied for all possible realizations of the arguments. Thus, the
distance $M\left(A,B\right)=\mathrm{E}\left[\left|A-B\right|\right]$
of Section \ref{sub:General logic} trivially obtains from the metric
$d\left(a,b\right)=\left|a-b\right|$ on reals. In the same way one
obtains the well-known Fréchet distance
\[
F\left(A,B\right)=\mathrm{E}\left[\frac{\left|A-B\right|}{1+\left|A-B\right|}\right].
\]

Below we present an incomplete list of transformations which, given
a p.q.-metric (quasimetric, pseudometric, conventional metric) $D$
on a space $\mathcal{R}$ of jointly distributed random variables
produces a new p.q.-metric (respectively, quasimetric, pseudometric,
or conventional metric) on the same space. The proofs are trivial
or well-known, so we omit them. The arrows $\Longrightarrow$ should
be read {}``can be transformed into.''
\begin{enumerate}
\item $D\Longrightarrow D^{q}$ ($q<1$). In this way, for example, we can
obtain metrics 
\[
M^{\left(p,q\right)}\left(A,B\right)=\begin{cases}
\left(\mathrm{E}\left[\left|A-B\right|^{p}\right]\right)^{q/p} & \textnormal{for }1\leq p<\infty,q\leq1\\
\left(\mathrm{ess}\sup\left|A-B\right|\right)^{q} & \textnormal{for }p=\infty,q\leq1
\end{cases}
\]
from the metrics $M^{\left(p\right)}$ in (\ref{eq:M^p}).
\item $D\Longrightarrow D/\left(1+D\right)$. This is a standard way of
creating a bounded p.q.-metric. 
\item $D_{1},D_{2}\Longrightarrow\max\left\{ D_{1},D_{2}\right\} $ or $D_{1},D_{2}\Longrightarrow D_{1}+D_{2}$.
This transformations can be used to symmetrize p.q.-metrics: $D\left(A,B\right)+D\left(B,A\right)$
or $\max\left\{ D\left(A,B\right),D\left(B,A\right)\right\} $.
\item A generalization of the previous: $\left\{ D_{\upsilon}:\upsilon\in\Upsilon\right\} \Longrightarrow\sup\left\{ D_{\upsilon}\right\} $
and $\left\{ D_{\upsilon}:\upsilon\in\Upsilon\right\} \Longrightarrow\mathrm{E}\left[D_{V}\right]$,
where $\left\{ D_{\upsilon}:\upsilon\in\Upsilon\right\} $ is a family
of p.q.-metrics, and $V$ designates a random entity distributed as
$\left(\Upsilon,\Sigma_{\Upsilon},m\right)$, so that
\[
D\left(A,B\right)=\int_{\upsilon\in\Upsilon}D_{\upsilon}\left(A,B\right)\mathrm{dm\left(\upsilon\right)}.
\]
We have discussed in Section \ref{sub:Classification-p.q.-metrics}
how such a procedure leads from our {}``classification'' p.q.-metrics
$D_{C}$ to {}``separation'' p.q.-metrics $D_{S}$. 
\end{enumerate}

\section{\label{sec:Non-distance-tests}Non-distance tests}

The general principle of constructing tests for selective influences
presented in Section \ref{sub:General logic} does not only lead to
distance-type tests. In this section we will consider two examples,
one proposed previously and one new, of tests in which the functionals
$g\left(H_{\phi}\right)$ mentioned in Section \ref{sub:General logic}
are, respectively, two-argument but not distance-type, and multiple-argument
ones. Recall that the tests in question are only necessary conditions
for selective influences (in the form of the diagram \ref{diag:bijective DSI}).

\subsection{Cosphericity test\label{sub:Cosphericity-test}}

Given a hypothetical JDC-vector 
\[
H=\left(H_{x_{1}^{\alpha_{1}}},\ldots,H_{x_{k_{1}}^{\alpha_{i}}},\ldots,H_{x_{1}^{\alpha_{n}}},\ldots,H_{x_{k_{n}}^{\alpha_{n}}}\right)
\]
with real-valued random variables, the following statement $\mathfrak{S}$
should be satisfied: for any quadruple of factor points $\left\{ x^{\alpha},y^{\beta},u^{\alpha},v^{\beta}\right\} $
with $\alpha\not=\beta$ such that for some treatments $\phi_{1},\phi_{2},\phi_{3},\phi_{4}\in T$,
\[
\left\{ x^{\alpha},y^{\beta}\right\} \subset\phi_{1},\left\{ x^{\alpha},v^{\beta}\right\} \subset\phi_{2},\left\{ u^{\alpha},y^{\beta}\right\} \subset\phi_{3},\left\{ u^{\alpha},v^{\beta}\right\} \subset\phi_{4},
\]
we have
\[
\begin{array}{l}
\left\vert \rho_{x^{\alpha}y^{\beta}}\rho_{x^{\alpha}v^{\beta}}-\rho_{u^{\alpha}y^{\beta}}\rho_{u^{\alpha}v^{\beta}}\right\vert \\
\\
\leq\sqrt{1-\rho_{x^{\alpha}y^{\beta}}^{2}}\sqrt{1-\rho_{x^{\alpha}v^{\beta}}^{2}}+\sqrt{1-\rho_{u^{\alpha}y^{\beta}}^{2}}\sqrt{1-\rho_{u^{\alpha}v^{\beta}}^{2}},
\end{array}
\]
where $\rho_{x^{\alpha}y^{\beta}}$ denotes the correlation between
$H_{x^{\alpha}}$ and $H_{y^{\beta}}$, $\rho_{x^{\alpha}u^{\beta}}$
denotes the correlation between $H_{x^{\alpha}}$ and $H_{u^{\beta}}$,
etc. Ergo, if the inequality is violated for at least one such a quadruple
of factor points, the JDC-vector cannot exist, and the diagram of
selective influences \ref{diag:bijective DSI} should be rejected.
For numerous illustrations see Kujala and Dzhafarov (2008), where
this test has been proposed, and where it is also shown that for two
bivariate normally distributed variables in a $2\times2$ factorial
design this test is both a necessary and sufficient condition for
selective influences.

\subsection{Diversity Test\label{sub:Diversity-Test}}

The p.q.-metric $P^{\left(2\right)}$ introduced in Section \ref{sec:Distance-type-tests}
lends itself to an interesting generalization. Let $\mathcal{R}$
be a set of jointly distributed random variables, each having $\left\{ 1,2,\ldots,s\right\} $
as its set of possible values. Define
\[
\begin{array}{l}
P^{\left(s\right)}\left[\left(R_{1}^{1},\ldots,R_{1}^{k_{1}}\right)\ldots\left(R_{i}^{1},\ldots,R_{i}^{k_{i}}\right)\ldots\left(R_{s}^{1},\ldots,R_{s}^{k_{s}}\right)\right]\\
=\Pr\left[\begin{array}{c}
R_{i}^{j}=i,\textnormal{ for }j=1,\ldots,k_{i}\textnormal{ and }i=1,\ldots,s\end{array}\right].
\end{array}
\]
In particular,
\[
\Pr\left[\begin{array}{c}
R_{1}=1,\ldots,R_{s}=s\end{array}\right]=P^{\left(s\right)}\left[\left(R_{1}\right)\ldots\left(R_{s}\right)\right].
\]
It is easy to show that the latter is a generalized p.q.-distance,
in the sense of satisfying the following two properties: for any $R_{1},\ldots,R_{s},R\in\mathcal{R}$,
\begin{enumerate}
\item (generalized premetric) $P^{\left(s\right)}\left[\left(R_{1}\right)\ldots\left(R_{s}\right)\right]$
is nonnegative, and it is zero if any two of $R_{1},\ldots,R_{s}$
are identical.
\item (simplicial inequality): 
\[
\begin{array}{l}
P^{\left(s\right)}\left[\left(R_{1}\right)\ldots\left(R_{s}\right)\right]\leq\sum_{i=1}^{s}P^{\left(s\right)}\left[\left(R_{1}\right)\ldots\left(R\right)\ldots\left(R_{s}\right)\right],\end{array}
\]
where in the $i$th summand on the right, $R_{i}$ in the sequence
$\left(R_{1}\right)\ldots\left(R_{i}\right)\ldots\left(R_{s}\right)$
is replaced with $R$ ($i=1,\ldots,s$), the rest of the sequence
remaining intact.%
\footnote{With the addition of permutation-invariance, functions $\mathcal{R}$$^{s}\rightarrow\mathbb{R}$
(with $\mathcal{R}$ an arbitrary set) satisfying these properties
are sometimes called $\left(s-1\right)$-semimetrics (Deza \& Rosenberg,
2000); with the addition of the property that $P^{\left(s\right)}>0$
if no two arguments thereof are equal, they become $\left(s-1\right)$-metrics.%
}
\end{enumerate}
The generalized premetric property is obvious. To avoid cumbersome
notation, let us prove the simplicial inequality for $s=3$, the generalization
to arbitrary $s$ being straightforward. We drop in $P^{\left(3\right)}$
the parentheses around singletons: $P^{\left(3\right)}\left[R_{1}R_{2}R_{3}\right]$,
$P^{\left(3\right)}\left[R_{1}\left(R_{2},R\right)R_{3}\right]$,
etc. The simplicial inequality in question is

\[
P^{\left(3\right)}\left[R_{1}R_{2}R_{3}\right]\leq P^{\left(3\right)}\left[RR_{2}R_{3}\right]+P^{\left(3\right)}\left[R_{1}RR_{3}\right]+P^{\left(3\right)}\left[R_{1}R_{2}R\right].
\]
We have
\[
\begin{array}{l}
P^{\left(3\right)}\left[R_{1}R_{2}R_{3}\right]\\
\\
=P^{\left(3\right)}\left[\left(R_{1},R\right)R_{2}R_{3}\right]+P^{\left(3\right)}\left[R_{1}\left(R_{2},R\right)R_{3}\right]+P^{\left(3\right)}\left[R_{1}R_{2}\left(R_{3},R\right)\right],
\end{array}
\]
\[
\begin{array}{l}
P^{\left(3\right)}\left[RR_{2}R_{3}\right]\\
\\
=P^{\left(3\right)}\left[\left(R_{1},R\right)R_{2}R_{3}\right]+P^{\left(3\right)}\left[R\left(R_{1},R_{2}\right)R_{3}\right]+P^{\left(3\right)}\left[RR_{2}\left(R_{1},R_{3}\right)\right],
\end{array}
\]
and analogously for $P^{\left(3\right)}\left[R_{1}RR_{3}\right]$
and $P^{\left(3\right)}\left[R_{1}R_{2}R\right]$. Then
\[
\begin{array}{r}
P^{\left(3\right)}\left[RR_{2}R_{3}\right]+P^{\left(3\right)}\left[R_{1}RR_{3}\right]+P^{\left(3\right)}\left[R_{1}R_{2}R\right]-P^{\left(3\right)}\left[R_{1}R_{2}R_{3}\right]\\
\\
=P^{\left(3\right)}\left[R\left(R_{1},R_{2}\right)R_{3}\right]+P^{\left(3\right)}\left[RR_{2}\left(R_{1},R_{3}\right)\right]\\
\\
+P^{\left(3\right)}\left[\left(R_{1},R_{2}\right)RR_{3}\right]+P^{\left(3\right)}\left[R_{1}R\left(R_{2},R_{3}\right)\right]\\
\\
+P^{\left(3\right)}\left[\left(R_{1},R_{3}\right)R_{2}R\right]+P^{\left(3\right)}\left[R_{1}\left(R_{2},R_{3}\right)R\right]\geq0.
\end{array}
\]

We call $P^{\left(s\right)}$ a \emph{diversity }function. To use
this function for a test of selective influences, for each random
variable $H_{x^{\alpha}}$ in the hypothetical JDC-vector $H$ we
partition the set of its possible values $\mathcal{A}_{x^{\alpha}}$
into $s$ pairwise disjoint subsets $\mathcal{A}_{x^{\alpha}}^{1},\ldots,\mathcal{A}_{x^{\alpha}}^{s}$,
and we transform $H_{x^{\alpha}}$ as
\[
R_{x^{\alpha}}=\left\{ \begin{array}{ccc}
1 & \textnormal{if} & H_{x^{\alpha}}\in\mathcal{A}_{x^{\alpha}}^{1},\\
\vdots & \vdots & \vdots\\
s & \textnormal{if} & H_{x^{\alpha}}\in\mathcal{A}_{x^{\alpha}}^{s}.
\end{array}\right.
\]
Define
\[
Dx_{1}^{\mu_{1}}\ldots x_{s}^{\mu_{s}}=P^{\left(s\right)}\left[R_{x_{1}^{\mu_{1}}}\ldots R_{x_{s}^{\mu_{s}}}\right].
\]
Let us restrict the consideration to $s=3$ again. Assuming all factor
points mentioned below belong to $\bigcup\Phi$, and given a triadic
chain of factor points $t=x^{\alpha}y^{\beta}z^{\gamma}$ (with the
elements pairwise distinct), we define a certain set of triadic chains
referred to as a \emph{polyhedral set} over $t$. 
\begin{enumerate}
\item For any triadic chain $t=x^{\alpha}y^{\beta}z^{\gamma}$ ($x^{\alpha}\not=y^{\beta}\not=z^{\gamma}\not=x^{\alpha}$)
and any $u^{\mu}\notin\left\{ x^{\alpha},y^{\beta},z^{\gamma}\right\} $,
the set $\left\{ u^{\mu}y^{\beta}z^{\gamma},x^{\alpha}u^{\mu}z^{\gamma},x^{\alpha}y^{\beta}u^{\mu}\right\} $
is a polyhedral set over $t$;
\item For any triadic chains $t$ and $t'$, if $\mathcal{\mathfrak{P}}$
is a polyhedral set over $t$, and $\mathfrak{\mathcal{\mathfrak{P}}}'$
is a polyhedral set over any $t'\in\mathcal{\mathfrak{P}}$, then
the set $\left(\mathcal{\mathfrak{P}}-\left\{ t'\right\} \right)\cup\mathfrak{P'}$
is a polyhedral set over $t$.
\item Any polyhedral set over any triadic chain $t$ is obtained by a finite
number of applications of 1 and 2 above.
\end{enumerate}
We call such a set polyhedral because if one interprets each element
of it as a list of vertices forming a (triangular) face, then the
whole set, combined with the root face $t$, forms a complete polyhedron.

A polyhedral set $\mathcal{\mathfrak{P}}$ over $t=x^{\alpha}y^{\beta}z^{\gamma}$
is called \emph{treatment-realizable} if each element (triadic chain)
that belongs to $\mathcal{\mathfrak{P}}$$\cup\left\{ t\right\} $
consists of elements of some treatment $\phi\in T$ (which implies,
in particular, $\alpha\not=\beta\not=\gamma\not=\alpha$). The diversity
test for selective influences consists in checking the compliance
of the hypothetical JDC-vector with the following statement: for any
treatment-realizable polyhedral set $\mathcal{\mathfrak{P}}$ over
$x_{1}^{\mu_{1}}x_{2}^{\mu_{2}}x_{3}^{\mu_{3}}$,

\begin{equation}
Dx_{1}^{\mu_{1}}x_{2}^{\mu_{2}}x_{3}^{\mu_{3}}\leq\sum_{x_{i}^{\mu_{i}}x_{j}^{\mu_{j}}x_{k}^{\mu_{k}}\in\mathcal{\mathfrak{P}}}Dx_{i}^{\mu_{i}}x_{j}^{\mu_{j}}x_{k}^{\mu_{k}}.\label{eq:simplicial chain}
\end{equation}
The inequality trivially follows from the simplicial inequality and
the definition of $\mathfrak{P}$.

The classification p.q.-metric tests considered earlier form a special
case of the diversity tests. For complete analogy one should replace
chains in the formulation of the $P^{\left(2\right)}$-based tests
with a \emph{polygonal set} $\mathcal{\mathfrak{P}}$ of pairs of
factor points (\emph{dipoles}) over a given dipole $d=x^{\alpha}y^{\beta}$
($x^{\alpha}\not=y^{\beta}$). This set is defined as a set obtainable
by repeated applications of the following two rules: 
\begin{enumerate}
\item for any $d=x^{\alpha}y^{\beta}$ ($x^{\alpha}\not=y^{\beta}$) and
any $u^{\mu}\notin\left\{ x^{\alpha},y^{\beta}\right\} $, the set
$\left\{ u^{\mu}y^{\beta},x^{\alpha}u^{\mu}\right\} $ is a polygonal
set over $d$; 
\item if $\mathcal{\mathfrak{P}}$ is a polygonal set over $d$, and $\mathfrak{P'}$
is a polygonal set over any $d'\in\mathcal{\mathfrak{P}}$, then the
set $\left(\mathcal{\mathfrak{P}}-\left\{ d'\right\} \right)\cup\mathfrak{P'}$
is a polygonal set over $d$. 
\end{enumerate}
The generalization to $s>3$ involves \emph{polytopal sets} of $s$-element
chains and is conceptually straightforward. The notion of an irreducible
chain is also generalizable to polytopal sets, but we are not going
to discuss this and related issues here: the diversity function and
diversity-based tests form a rich topic that deserves a special investigation.
\begin{example}
Let $\alpha,\beta,\gamma,\delta$ be binary (1/2) factors, and let
the set of allowable treatments $T$ consist of all combinations of
the factor points subject to the following constraint: $\left\{ 1^{\alpha},1^{\beta},2^{\gamma},1^{\delta}\right\} $
is the only treatment in $T$ of the forms $\left\{ 1^{\alpha},1^{\beta},2^{\gamma},v^{\delta}\right\} $,
$\left\{ 1^{\alpha},1^{\beta},v^{\gamma},1^{\delta}\right\} $, $\left\{ 1^{\alpha},v^{\beta},2^{\gamma},1^{\delta}\right\} $,
and $\left\{ v^{\alpha},1^{\beta},2^{\gamma},1^{\delta}\right\} $.
Let the random variables $A,B,C,D$ in the hypothetical diagram $\left(A,B,C,D\right)\looparrowleft\left(\alpha,\beta,\gamma,\delta\right)$
each have three values, denoted 1,2,3, and let the distributions of
$\left(A,B,C,D\right)$ be as shown in the tables, with all omitted
joint probabilities being zero:

\

\begin{center}%
\begin{tabular}{cccc|cccc|c|}
\hline 
\multicolumn{1}{|c}{$\alpha$} & $\beta$ & $\gamma$ & $\delta$ & $A$ & $B$ & $C$ & $D$ & $\Pr$\tabularnewline
\hline 
\hline 
\multicolumn{1}{|c}{$x$} & $y$ & $z$ & $u$ & $\vdots$ & $\vdots$ & $\vdots$ & $\vdots$ & $\vdots$\tabularnewline
\cline{1-4} 
 &  &  &  & 1 & 2 & 3 & 1 & 1/3\tabularnewline
 &  &  &  & 1 & 2 & 3 & 2 & 1/3\tabularnewline
 &  &  &  & 1 & 2 & 3 & 3 & 1/3\tabularnewline
 &  &  &  & $\vdots$ & $\vdots$ & $\vdots$ & $\vdots$ & $\vdots$\tabularnewline
\cline{5-9} 
\end{tabular}$\quad$%
\begin{tabular}{cccc|cccc|c|}
\hline 
\multicolumn{1}{|c}{$\alpha$} & $\beta$ & $\gamma$ & $\delta$ & $A$ & $B$ & $C$ & $D$ & $\Pr$\tabularnewline
\hline 
\hline 
\multicolumn{1}{|c}{$1$} & $1$ & $2$ & $1$ & $\vdots$ & $\vdots$ & $\vdots$ & $\vdots$ & $\vdots$\tabularnewline
\cline{1-4} 
 &  &  &  & 1 & 2 & 3 & 1 & 1/2\tabularnewline
 &  &  &  & 1 & 2 & 3 & 2 & 1/2\tabularnewline
 &  &  &  & 1 & 2 & 3 & 3 & 0\tabularnewline
 &  &  &  & $\vdots$ & $\vdots$ & $\vdots$ & $\vdots$ & $\vdots$\tabularnewline
\cline{5-9} 
\end{tabular}\end{center}

\

\protect{\noindent}where $\left\{ x^{\alpha},y^{\beta},z^{\delta},u^{\gamma}\right\} $
is any treatment in $T$ other than $\left\{ 1^{\alpha},1^{\beta},2^{\delta},1^{\gamma}\right\} $.
It is easy to check that the 3-marginals (hence also all lower-order
marginals) of the distributions satisfy marginal selectivity. One
can also check that $\left\{ 1^{\alpha}1^{\beta}1^{\delta},1^{\alpha}1^{\gamma}1^{\delta},1^{\beta}1^{\gamma}1^{\delta}\right\} $
is a polyhedral set (in fact, the simplest one, forming a tetrahedron
with vertices $1^{\alpha},1^{\beta},1^{\gamma},1^{\delta}$). This
polyhedral set is treatment-realizable, because
\[
\begin{array}{c}
\left\{ 1^{\alpha},1^{\beta},1^{\gamma}\right\} \subset\left\{ 1^{\alpha},1^{\beta},1^{\gamma},2^{\delta}\right\} ,\quad\left\{ 1^{\alpha},1^{\beta},1^{\delta}\right\} \subset\left\{ 1^{\alpha},1^{\beta},2^{\gamma},1^{\delta}\right\} ,\\
\\
\left\{ 1^{\alpha},1^{\gamma},1^{\delta}\right\} \subset\left\{ 1^{\alpha},2^{\beta},1^{\gamma},1^{\delta}\right\} ,\quad\left\{ 1^{\beta},1^{\gamma},1^{\delta}\right\} \subset\left\{ 2^{\alpha},1^{\beta},1^{\gamma},1^{\delta}\right\} .
\end{array}
\]
Putting 
\[
\begin{array}{l}
D1^{\alpha}1^{\beta}1^{\gamma}=P^{\left(3\right)}\left[H_{1^{\alpha}}H_{1^{\beta}}H_{1^{\gamma}}\right]\\
=\Pr\left[\left\{ A=1,B=2,C=3\right\} \left(1^{\alpha},1^{\beta},1^{\gamma},2^{\delta}\right)\right]=1,\\
\\
D1^{\alpha}1^{\beta}1^{\delta}=P^{\left(3\right)}\left[H_{1^{\alpha}}H_{1^{\beta}}H_{1^{\delta}}\right]\\
=\Pr\left[\left\{ A=1,B=2,D=3\right\} \left(1^{\alpha},1^{\beta},2^{\gamma},1^{\delta}\right)\right]=0,\\
\\
D1^{\alpha}1^{\delta}1^{\gamma}=P^{\left(3\right)}\left[H_{1^{\alpha}}H_{1^{\delta}}H_{1^{\gamma}}\right]\\
=\Pr\left[\left\{ A=1,D=2,C=3\right\} \left(1^{\alpha},2^{\beta},1^{\gamma},1^{\delta}\right)\right]=\frac{1}{3},\\
\\
D1^{\delta}1^{\beta}1^{\gamma}=P^{\left(3\right)}\left[H_{1^{\delta}}H_{1^{\beta}}H_{1^{\gamma}}\right]\\
=\Pr\left[\left\{ D=1,B=2,C=3\right\} \left(2^{\alpha},1^{\beta},1^{\gamma},1^{\delta}\right)\right]=\frac{1}{3},
\end{array}
\]
where $H_{x^{\mu}}$ are elements of the hypothetical JDC-vector,
we see that the simplicial inequality is violated:
\[
1=D1^{\alpha}1^{\beta}1^{\gamma}>D1^{\alpha}1^{\beta}1^{\delta}+D1^{\alpha}1^{\delta}1^{\gamma}+D1^{\delta}1^{\beta}1^{\gamma}=\frac{2}{3}.
\]
This rules out the possibility of $\left(A,B,C,D\right)\looparrowleft\left(\alpha,\beta,\gamma,\delta\right)$.
\qed
\end{example}

\section{Conclusion}

Selectiveness in the influences exerted by a set of inputs upon a
set of random and stochastically interdependent outputs is a critical
feature of many psychological models, often built into the very language
of these models. We speak of an internal representation of a given
stimulus, as separate from an internal representation of another stimulus,
even if these representations are considered random entities and they
are not independent. We speak of decompositions of response time into
signal-dependent and signal-independent components, or into a perceptual
stage (influenced by stimuli) and a memory-search stage (influenced
by the number of memorized items), without necessarily assuming that
the two components or stages are stochastically independent. Moreover,
the same as with theory of measurement and model selection studies,
the issue of selective probabilistic influences, while born within
psychology and motivated by psychological theorizing, pertains in
fact to any area of empirical science dealing with inputs and random
outputs. 

In this paper, we have described the fundamental Joint Distribution
Criterion for selective influences, and proposed a direct application
of this criterion to random variables with finite numbers of values,
the Linear Feasibility Test for selective influences. This test can
be performed by means of standard linear programming. Due to the fact
that any random output can be discretized, the Linear Feasibility
Test is universally applicable, although one should keep in mind that
if a diagram of selective influences is upheld by the test at some
discretization, it may be rejected at a finer or non-nested discretization
(but not at a coarser one). 

Based on the Joint Distribution Criterion we have also formulated
a general scheme for constructing various necessary conditions (tests)
for selective influences. Among the tests thus generated is a wide
spectrum of distance-type tests and some other tests described in
the paper. The results of some of these tests (e.g., all those involving
expected values) are not invariant with respect to factor-point-specific
transformations of the random outputs, which allows one to expand
each of such tests into an infinity of different tests for different
transformations. 

The abundance of different tests which we now have at our disposal
poses new problems. The Linear Feasibility Test is superior to other
tests as it allows one to prove (rather than only disprove) the adherence
of a system of inputs and outputs to a given diagram of selective
influences (for a given discretization, if one is involved). It is
possible, however, that discretization is not desirable, or the size
of the problem is too large to be handled by available computational
methods. In these cases one faces the problem of devising an optimal,
or at least systematic way of applying a sequence of different necessary
conditions, such as distance-type tests. Let us call a test $T_{1}$
stronger than test $T_{2}$ with respect to a given diagram of selective
influences if the latter cannot be upheld by $T_{1}$ and rejected
by $T_{2}$, while the reverse is possible. Thus, in Kujala and Dzhafarov
(2008) it is shown that the cosphericity test (Section \ref{sub:Cosphericity-test})
is stronger than the Minkowski distance test with $p=2$ (Section
\ref{sub:Minkowki-type-metrics}). We know very little, however, about
the comparative strengths of different tests on a broader scale. 

The problem of devising optimal strategies of sequential testing arises
also within the confines a particular class of tests. Thus, the classification
test (Sections \ref{sub:General-theory} and \ref{sub:Classification-p.q.-metrics})
and the diversity test (Section \ref{sub:Diversity-Test}) can be
used repeatedly, each time with a different choice of the partitions
of the random outputs' domains. We do not know at present how to organize
the sequences of these choices optimally. In the case of the Minkowski
distance test we do not know in which order one should use different
values of $p$ and different factor-point-specific transformations
of the random variables. The latter also applies to the nonlinear
transformations in the cosphericity test.

Finally, adaptation of the population-level tests to data analysis
is another problem to be addressed by future research. Although sample-level
procedures corresponding to our tests seem conceptually straightforward
(Section \ref{sub:Sample-level-tests}), the issues of statistical
power and statistical interdependence compound the problems of comparative
strength of the tests and optimal strategy of sequential testing. 

\appendix

\onecolumngrid

{\centering\section{GENERALIZATIONS TO ARBITRARY SETS}

}

\twocolumngrid\setlength{\parindent}{10pt}

\subsubsection*{Random Entities and Variables}

For the purposes of this paper it is convenient to view a \emph{random
entity} $A$ as a quadruple $\left(`A\textnormal{'},\mathcal{A},\Sigma,\mu\right)$,
where $`A\textnormal{'}$ is a unique \emph{name}, $\mathcal{A}$
is a nonempty set (of values of $A$), $\Sigma$ is a sigma algebra
of subsets of $\mathcal{A}$ (called \emph{measurable} subsets), and
$\mu$ is a probability measure on $\Sigma$ with the interpretation
that $\mu\left(\mathfrak{a}\right)$ for any $\mathcal{\mathfrak{a}}\in\Sigma$
is the probability with which $A$ falls within $\mathcal{\mathfrak{a}\subset\mathcal{A}}$.
$\left(\mathcal{A},\Sigma\right)$ is referred to as the \emph{observation
space} for $A$\emph{. }We call the probability space $\left(\mathcal{A},\Sigma,\mu\right)$
the \emph{distribution} for $A$ and say that $A$ is \emph{distributed
as} $\left(\mathcal{A},\Sigma,\mu\right)$. The inclusion of the label
$`A\textnormal{'}$ is needed to ensure an unlimited collection of
distinct random entities with the same distribution. If two random
entities $A$ and $A'$ have the same distribution, we write $A\sim A'$.
If $A$ and $B$ are distributed as, respectively, $\left(\mathcal{A},\Sigma_{\mathcal{A}},\mu\right)$
and $\left(\mathcal{B},\Sigma_{\mathcal{B}},\nu\right)$, then we
say $B\sim f(A)$ if $f:\mathcal{A}\rightarrow\mathcal{B}$ is such
that $\mathcal{\mathfrak{b}}\in\Sigma_{\mathcal{B}}$ implies $f^{-1}\left(\mathfrak{b}\right)\in\Sigma_{\mathcal{A}}$
and $\nu\left(\mathfrak{b}\right)=\mu\left(f^{-1}\left(\mathfrak{b}\right)\right)$,
$\nu$ being referred to as the \emph{induced measure} (with respect
to $\mu,f$), and the function $f$ being said to be $\left(\mathcal{A},\Sigma_{\mathcal{A}},\mu\right)-\left(\mathcal{B},\Sigma_{\mathcal{B}},\nu\right)$-\emph{measurable}.

With any indexed set of random entities $\left\{ A_{\omega}\right\} _{\omega\in\Omega}$
each of which is distributed as $\left(\mathcal{A}_{\omega},\Sigma_{\omega},\mu_{\omega}\right)$,
$\omega\in\Omega$, we associate its {}``natural'' observation space
$\left(\mathcal{A},\Sigma\right)$, with $\mathcal{\mathcal{A}=\prod_{\omega\in\Omega}\mathcal{A}_{\omega}}$
(Cartesian product) and $\Sigma=\bigotimes_{\omega\in\Omega}\Sigma_{\omega}$
being the smallest sigma algebra containing all sets of the form $\mathfrak{a}_{\omega}\times\prod_{\iota\in\Omega-\left\{ \omega\right\} }\mathcal{A}_{\iota}$,
$\mathfrak{a}_{\omega}\in\Sigma_{\omega}$. We say that the random
entities in $\left\{ A_{\omega}\right\} _{\omega\in\Omega}$ \emph{possess
a joint distribution} if $\left\{ A_{\omega}\right\} _{\omega\in\Omega}$
is a random entity distributed as $\left(\mathcal{A},\Sigma,\mu\right)$
with $\mu\left(\mathfrak{a}_{\omega}\times\prod_{\iota\in\Omega-\left\{ \omega\right\} }\mathcal{A}_{\iota}\right)=\mu_{\omega}\left(\mathfrak{a}_{\omega}\right).$
Every subset $\Omega'\subset\Omega$ possesses a\emph{ marginal distribution}
$\left(\prod_{\omega\in\Omega'}\mathcal{A}_{\omega},\bigotimes_{\omega\in\Omega'}\Sigma_{\omega},\mu'\right)$,
where $\mu'\left(\mathfrak{a}\right)=\mu\left(\mathfrak{a}\times\prod_{\iota\in\Omega-\Omega'}\mathcal{A}_{\iota}\right)$,
for all $\mathfrak{a}\in\bigotimes_{\omega\in\Omega'}\Sigma_{\omega}$.%
\footnote{The standard definition of a random entity (also called {}``random
element'' or simply {}``random variable'') is a measurable function
from a sample space to an observation space. The present terminology
can be reconciled with this view by considering $\left(\left\{ `A\textnormal{'}\right\} \times\mathcal{A},\left\{ \left\{ `A\textnormal{'}\right\} \times\mathfrak{a}:\mathfrak{a}\in\Sigma\right\} ,\nu\right)$
a sample space, $\left(\mathcal{A},\Sigma\right)$ an observation
space, and $A$ the projection function $\left\{ `A\textnormal{'}\right\} \times\mathcal{A}\rightarrow\mathcal{A}$.
In the case of jointly distributed random entities, $A=\left\{ A_{\omega}\right\} _{\omega\in\Omega}$,
each of them, with an observation space $\left(\mathcal{A}_{\omega},\Sigma_{\omega}\right)$,
can be defined as the projection function $\left\{ `A\textnormal{'}\right\} \times\mathcal{A}\rightarrow\mathcal{A}_{\omega}$.
We do not, however, assume a common sample space for all random entities
being considered. The notion of a sample space is a source of conceptual
confusions, the chief one being the notion that there is only one
sample space {}``in this universe,'' so that any set of random entities
possesses a joint distribution.%
} 
\begin{rem}
Note that the elements of the Cartesian product $\prod_{\omega\in\Omega}\mathcal{A}_{\omega}$
are \emph{choice functions} $\Omega\rightarrow\bigcup_{\omega\in\Omega}\mathcal{A}_{\omega}$,
that is, they are sets of pairs of the form $\left(\omega,a\right)$,
$\omega\in\Omega$, $a\in\mathcal{A}_{\omega}$. This means that the
indexation of $\left\{ A_{\omega}\right\} _{\omega\in\Omega}$ is
part of the identity of $\mathcal{\mathcal{A}=\prod_{\omega\in\Omega}\mathcal{A}_{\omega}}$,
hence also of the distribution of $A=\left\{ A_{\omega}\right\} _{\omega\in\Omega}$.
Ideally, only the {}``ordinal structure'' of the indexing set $\Omega$
should matter, and this can be ensured by agreeing that $\Omega$
is always an initial segment of the class of ordinal numbers. With
these conventions in mind, $\left\{ A_{\omega}\right\} _{\omega\in\Omega}$
can be viewed as generalizing the notion of a finite vector (although
it is convenient not to complicate notation to reflect this fact).
For sets of jointly distributed and identically indexed random entities,
the relation $\left\{ A_{\omega}\right\} _{\omega\in\Omega}\sim\left\{ B_{\omega}\right\} _{\omega\in\Omega}$
should always be understood in the sense of {}``corresponding indices,''
implying, in particular, $\left\{ A_{\omega}\right\} _{\omega\in\Omega'}\sim\left\{ B_{\omega}\right\} _{\omega\in\Omega'}$
for any subset $\Omega'$ of $\Omega$.
\end{rem}
The equality $A_{1}=A_{2}$ in the present context means that the
two random entities have a common observation space $\left(\mathcal{A},\Sigma\right)$,
and that $\left\{ A_{1},A_{2}\right\} $ is a jointly distributed
random entity with measure $\mu$ such that $\mu\left(\left\{ \left(a_{1},a_{2}\right)\in\mathcal{A}\times\mathcal{A}:a_{1}=a_{2}\right\} \right)=1$
(this corresponds to the equality {}``almost surely'' in the traditional
terminology). We also follow the common practice of using equality
to replace {}``is'' or {}``denotes'' in definitions and abbreviations,
such as $A=\left\{ A_{\omega}\right\} _{\omega\in\Omega}$. The two
meanings of equality are easily distinguished by context.

A \emph{random variable} is a special case of random entity. Its definition
can be given as follows: (i) if $\mathcal{A}$ is countable, $\Sigma$
is the power set of $\mathcal{A}$, then a random entity distributed
as $\left(\mathcal{A},\Sigma,\mu\right)$ is a random variable; (ii)
if $\mathcal{A}$ is an interval of reals, $\Sigma$ is the Lebesgue
sigma-algebra on $\mathcal{A}$, then a random entity distributed
as $\left(\mathcal{A},\Sigma,\mu\right)$ is a random variable; (iii)
any jointly distributed vector $\left(A_{1},\ldots,A_{n}\right)$
with all components random variables is a random variable. The notion
thus defined is more general than in the main text, but the theory
presented there applies with no modifications. 
\begin{lem}
\label{lem:basic}A set $\left\{ A_{\omega}\right\} _{\omega\in\Omega}$
of random entities possesses a joint distribution if and only if there
is a random entity $R$ distributed as a probability space $\left(\mathcal{R},\Sigma_{\mathcal{R}},\nu\right)$
and some functions $\left\{ f_{\omega}:\mathcal{R}\rightarrow\mathcal{A}_{\omega}\right\} _{\omega\in\Omega}$,
such that $\left\{ A_{\omega}\right\} _{\omega\in\Omega}=\left\{ f_{\omega}\left(R\right)\right\} _{\omega\in\Omega}$. \end{lem}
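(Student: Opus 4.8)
The plan is to prove both directions straight from the definitions of joint distribution, induced measure, and product-space measurability recorded above; no construction of $R$ from scratch is needed, since in one direction we take $R$ to be the joint entity itself and the $f_\omega$ to be its coordinate projections, and in the other we assemble the given $f_\omega(R)$ into a single measurable map into the product space.

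For the ``only if'' direction, suppose $\{A_\omega\}_{\omega\in\Omega}$ possesses a joint distribution, so that it is a random entity distributed as $(\mathcal{A},\Sigma,\mu)$ with $\mathcal{A}=\prod_{\omega\in\Omega}\mathcal{A}_\omega$, $\Sigma=\bigotimes_{\omega\in\Omega}\Sigma_\omega$, and $\mu(\mathfrak{a}_\omega\times\prod_{\iota\in\Omega-\{\omega\}}\mathcal{A}_\iota)=\mu_\omega(\mathfrak{a}_\omega)$. I would set $R=\{A_\omega\}_{\omega\in\Omega}$, i.e.\ $(\mathcal{R},\Sigma_{\mathcal{R}},\nu)=(\mathcal{A},\Sigma,\mu)$, and let $f_\omega:\mathcal{R}\to\mathcal{A}_\omega$ be the $\omega$-th coordinate projection $r\mapsto r(\omega)$. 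Each $f_\omega$ is measurable because $f_\omega^{-1}(\mathfrak{a}_\omega)=\mathfrak{a}_\omega\times\prod_{\iota\in\Omega-\{\omega\}}\mathcal{A}_\iota\in\Sigma$ for every $\mathfrak{a}_\omega\in\Sigma_\omega$, and the marginal condition shows that the measure induced on $\mathcal{A}_\omega$ by $\nu$ and $f_\omega$ is exactly $\mu_\omega$, whence $f_\omega(R)\sim A_\omega$. Read with corresponding indices, $\{f_\omega(R)\}_{\omega\in\Omega}$ and $\{A_\omega\}_{\omega\in\Omega}$ then have the same joint law $\mu$, concentrated on the diagonal of $\mathcal{A}\times\mathcal{A}$, which is the required equality of random entities.

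For the ``if'' direction, suppose $R$ distributed as $(\mathcal{R},\Sigma_{\mathcal{R}},\nu)$ and functions $\{f_\omega\}_{\omega\in\Omega}$ are given with $\{A_\omega\}_{\omega\in\Omega}=\{f_\omega(R)\}_{\omega\in\Omega}$; in particular each $f_\omega$ is $(\mathcal{R},\Sigma_{\mathcal{R}},\nu)$--$(\mathcal{A}_\omega,\Sigma_\omega,\mu_\omega)$-measurable with induced measure $\mu_\omega$ the distribution of $A_\omega$. I would define the single map $F:\mathcal{R}\to\prod_{\omega\in\Omega}\mathcal{A}_\omega$ by $F(r)=\{f_\omega(r)\}_{\omega\in\Omega}$ and check $(\mathcal{R},\Sigma_{\mathcal{R}})$--$(\mathcal{A},\Sigma)$-measurability: since $\Sigma$ is generated by the one-coordinate cylinders $\mathfrak{a}_\omega\times\prod_{\iota\in\Omega-\{\omega\}}\mathcal{A}_\iota$, it suffices to note $F^{-1}(\mathfrak{a}_\omega\times\prod_{\iota\in\Omega-\{\omega\}}\mathcal{A}_\iota)=f_\omega^{-1}(\mathfrak{a}_\omega)\in\Sigma_{\mathcal{R}}$. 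The measure $\mu$ induced on $(\mathcal{A},\Sigma)$ by $\nu$ and $F$ then satisfies $\mu(\mathfrak{a}_\omega\times\prod_{\iota\in\Omega-\{\omega\}}\mathcal{A}_\iota)=\nu(f_\omega^{-1}(\mathfrak{a}_\omega))=\mu_\omega(\mathfrak{a}_\omega)$, so $\{A_\omega\}_{\omega\in\Omega}=F(R)$ is a random entity distributed as $(\mathcal{A},\Sigma,\mu)$ with the prescribed marginals, i.e.\ it possesses a joint distribution.

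The whole argument is a matter of unwinding definitions, so there is no genuinely hard step; the only thing requiring care is the bookkeeping: seeing that the equality $\{A_\omega\}=\{f_\omega(R)\}$ of random entities, interpreted ``with corresponding indices'' as in the Remark, already encodes all of the joint-distributional content, and that measurability into the product space reduces to coordinatewise measurability because the product $\sigma$-algebra is generated by the one-coordinate cylinders. In particular, no extension theorem (Kolmogorov-type) is invoked, since $R$ is supplied in both directions rather than built.
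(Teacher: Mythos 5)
Your proposal is correct and follows essentially the same route as the paper's proof: for necessity you take $R$ to be the joint entity itself with the coordinate projections as the $f_{\omega}$, and for sufficiency you push $\nu$ forward to the product space, which is exactly the paper's ``induced measure on cylinder sets, uniquely extended'' argument phrased as a single measurable map $F$. The only cosmetic difference is that you package the extension step as the pushforward under $F$ rather than invoking uniqueness of extension from the cylinders, but the content is identical.
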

\begin{proof}
(Note that the formulation implies that all the functions involved
are appropriately measurable.) To show sufficiency, observe that the
induced measure $\mu$ of any set of the form $\prod_{\omega\in N}\mathfrak{a}_{\omega}\times\prod_{\iota\in\Omega-N}\mathcal{A}_{\iota}$,
where $N$ is a finite subset of $\Omega$ and $\mathfrak{a}$$_{\omega}\in\Sigma_{\omega}$
for $\omega\in N$, is $\nu\left(\bigcap_{\omega\in N}f_{\omega}^{-1}\left(\mathfrak{a}_{\omega}\right)\right)$,
and this measure is uniquely extended to $\bigotimes_{\omega\in\Omega}\Sigma_{\omega}$.
To show necessity, put $R=\left\{ A_{\omega}:\omega\in\Omega\right\} $
and, for every $\omega\in\Omega$, define $f_{\omega}:\mathcal{R}\rightarrow\mathcal{A}_{\omega}$
to be the (obviously measurable) projection $f_{\omega}:\prod_{\iota\in\Omega}\mathcal{A}_{\iota}\rightarrow\mathcal{A}_{\omega}$. \end{proof}
\begin{cor}
\label{cor:rvs}If $\Omega$ is finite and $\left\{ A_{\omega}\right\} _{\omega\in\Omega}$
is a set of random variables, then $R$ in Lemma \ref{lem:basic}
can be chosen to be a random variable. Moreover, $R$ can be chosen
arbitrarily, as any continuously (atomlessly) distributed random variable
(e.g., uniformly distributed between 0 and 1).\end{cor}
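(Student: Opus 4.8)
The plan is to obtain the first assertion essentially for free from the proof of Lemma~\ref{lem:basic}, and then to upgrade the generic $R$ produced there to an arbitrary atomless one by a standard change-of-variables argument.

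First I would note that in the necessity part of Lemma~\ref{lem:basic} the random entity exhibited is $R=\left\{A_{\omega}:\omega\in\Omega\right\}$, with the $f_{\omega}$ taken to be the coordinate projections $\pi_{\omega}$. When $\Omega$ is finite and every $A_{\omega}$ is a random variable, this $R$ is a finite jointly distributed vector of random variables, hence itself a random variable by clause~(iii) of the definition of a random variable given above, and the projections $\pi_{\omega}$ are measurable. This already proves the first sentence of the corollary. For the ``moreover'' part the key observation is that it suffices to treat the case in which $R$ is uniformly distributed on $[0,1]$. Indeed, if $R'$ is any atomless random variable, then $R'$ can have no countable-valued component carrying positive mass, so its observation space is of the Euclidean type and $\left(\mathcal{R}',\Sigma_{\mathcal{R}'},\nu'\right)$ is an atomless standard Borel probability space; by the Borel isomorphism theorem it is isomorphic, modulo null sets, to $\left([0,1],\mathrm{Leb},\lambda\right)$. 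Fixing such an isomorphism $\ell$ (so that $\ell(R')$ is uniform on $[0,1]$) and assuming measurable functions $f_{\omega}$ have been found with $\left\{f_{\omega}(U)\right\}_{\omega\in\Omega}\sim\left\{A_{\omega}\right\}_{\omega\in\Omega}$ for $U$ uniform on $[0,1]$, the functions $f_{\omega}\circ\ell$ give $\left\{(f_{\omega}\circ\ell)(R')\right\}_{\omega\in\Omega}\sim\left\{A_{\omega}\right\}_{\omega\in\Omega}$; since only the joint distribution matters for the subsequent development, one may rename these images as the $A_{\omega}$.

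It then remains to realise the finite random variable $A=\left\{A_{\omega}\right\}_{\omega\in\Omega}=(A_{1},\ldots,A_{n})$ as a measurable image of a single uniform variable. Its observation space $\mathcal{A}$ is a finite product of copies of $\mathbb{R}$ and of countable discrete spaces, hence a standard Borel space, and the distribution of $A$ is a Borel probability measure $\mu$ on $\mathcal{A}$. By the classical fact that every Borel probability measure on a standard Borel space is the image of Lebesgue measure on $[0,1]$ under some measurable map $g:[0,1]\rightarrow\mathcal{A}$, I would set $f_{i}=\pi_{i}\circ g$, so that $(f_{1}(U),\ldots,f_{n}(U))=g(U)\sim\mu$. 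If an explicit construction is preferred, one splits $U$ into a sequence of independent uniforms by interleaving binary digits and applies successive conditional quantile transforms (the randomised quantile transform handling discrete and mixed components, regular conditional distributions on the standard Borel factors handling the joint structure); this is routine. Together with the reduction of the previous paragraph, this shows $R$ may be taken to be any prescribed atomless random variable, in particular one uniform on $[0,1]$.

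The main obstacle, such as it is, lies entirely in the measure-theoretic inputs just invoked --- that every standard Borel probability space is a pushforward of $\left([0,1],\mathrm{Leb}\right)$ and that atomless standard Borel probability spaces are all isomorphic modulo null sets; both are classical. The only point needing a little care is the handling of ``mixed'' observation spaces and of infinite discrete components, but these are subsumed under ``standard Borel space'' and present no genuine difficulty.
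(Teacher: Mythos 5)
Your proposal is correct and follows essentially the same route as the paper: the first assertion is obtained exactly as in the paper (the finite jointly distributed vector $\left\{ A_{\omega}\right\} _{\omega\in\Omega}$ is itself a random variable by clause (iii), with the $f_{\omega}$ the coordinate projections), and for the ``moreover'' part the paper simply cites Theorem 1 of Dzhafarov and Gluhovsky (2006), which rests on the same standard-Borel-space facts you spell out (Kechris, 1995, p.~116). One small slip: an atomless joint distribution can perfectly well have a discrete component carrying positive marginal mass (e.g., a Bernoulli variable paired independently with a uniform one), so your parenthetical justification there is wrong, but it is also unnecessary --- the observation space is standard Borel in any case, and atomlessness of the measure is all the Borel isomorphism theorem requires.
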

\begin{proof}
The first statement follows from the fact that $R=\left\{ A_{\omega}\right\} _{\omega\in\Omega}$
in the necessity part of Lemma \ref{lem:basic} is then a random variable.
The second statement follows from Theorem 1 in Dzhafarov \& Gluhovsky,
2006, based on a general result for standard Borel spaces (e.g., in
Kechris, 1995, p. 116).
\end{proof}

\subsubsection*{Selective influences and JDC}

A \emph{factor} is defined as a nonempty set of \emph{factor points}
with a unique name: the notation used is $x^{\alpha}=\left\{ x,`\alpha\textnormal{'}\right\} $.
Let $\Phi$ be a nonempty set of factors, and let $T\subset\prod\Phi$
be a nonempty set of \emph{treatments}. Note that any treatment $\phi\in T$
is a function $\phi:\Phi\rightarrow\bigcup\Phi$, so $\phi\left(\alpha\right)$
denotes the factor point $x^{\alpha}$ of the factor $\alpha$ which
belongs to the treatment $\phi$. (The notation for $\phi\left(\alpha\right)$
used in the main text is $\phi_{\left\{ \alpha\right\} }$.)

Let $\Omega$ be an indexing set for a set of random entities $\left\{ R_{\omega}\right\} _{\omega\in\Omega}$.
A diagram of selective influences is a mapping $M:\Omega\rightarrow2^{\Phi}$.
For any such a diagram one can redefine the set of factors and the
set of treatments in the following way. For every $\omega\in\Omega$,
put 
\[
\omega^{*}=\left\{ s^{\omega^{*}}:s\in\prod M\left(\omega\right)\right\} ,
\]
if $M\left(\omega\right)$ is nonempty; if it is empty, put $\omega^{*}=\left\{ \textrm{Ø}^{\omega^{*}}\right\} $.
This establishes the bijective mapping $M^{*}:\Omega\rightarrow2^{\Phi^{*}}$,
where $\Phi^{*}=\left\{ \omega^{*}\right\} _{\omega\in\Omega}$. For
each treatment $\phi\in T$ we define the corresponding treatment
$\phi^{*}$ as $\left\{ s^{\omega^{*}}:s\subset\phi\wedge s\in\prod M{}^{*}\left(\omega\right),\omega\in\Omega\right\} $.
The set of all such treatments $\phi^{*}$ is denoted $T^{*}$. (In
the main text the procedure just described is called \emph{canonical
rearrangement}.) In the following we omit asterisks and simply put
$\Phi=\Omega$, replacing $M:\Omega\rightarrow2^{\Phi}$ with the
identity map $M:\Omega\rightarrow\Phi$.

Among several equivalent definitions of selective influences we choose
here the one most immediately prompting the Joint Distribution Criterion
(JDC). 
\begin{defn}
\label{def:SI general}Let $\mathbb{A}=\left\{ A_{\phi}\right\} _{\phi\in T}$,
and $A_{\phi}=\left\{ A_{\phi,\alpha}\right\} _{\alpha\in\Phi}$ for
every $\phi\in T$. Let $T$ be a set of treatments associated with
a set of factors $\Phi$. Let $A_{\phi,\alpha}$ for each $\alpha,\phi$
be distributed as $\left(\mathcal{A}_{\phi\left(\alpha\right)},\Sigma_{\phi\left(\alpha\right)},\mu_{\phi,\alpha}\right)$.
We say that each $A_{\phi,\alpha}$ is \emph{selectively influenced}
\emph{by} $\alpha$ ($\alpha\in\Phi,\phi\in T$), and write schematically
$\mathbb{A}\looparrowleft\Phi$, if there is a random entity $R$
distributed as $\left(\mathcal{R},\Sigma_{\mathcal{R}},\nu\right)$
and some functions $\left\{ f_{x^{\alpha}}:\mathcal{R}\rightarrow\mathcal{A}_{x^{\alpha}}\right\} _{x^{\alpha}\in\bigcup\Phi}$
such that $A_{\phi}=\left\{ A_{\phi,\alpha}\right\} _{\alpha\in\Phi}\sim\left\{ f_{\phi\left(\alpha\right)}\left(R\right)\right\} _{\alpha\in\Phi}$,
for all $\phi\in T$. \end{defn}
\begin{rem}
Note that the formulation implies that all the functions involved
are appropriately measurable. Also, in $\left\{ f_{x^{\alpha}}:\mathcal{R}\rightarrow\mathcal{A}_{x^{\alpha}}\right\} _{x^{\alpha}\in\bigcup\Phi}$
the set $\bigcup\Phi$ can be replaced with $\bigcup_{\phi\in T,\alpha\in\Phi}\phi\left(\alpha\right)$
if the latter is a proper subset of $\bigcup\Phi$ (and the same applies
to the definition of $H$ in the theorem below). We assume, however,
that factor points never used in treatments can simply be deleted
from the factors. 
\end{rem}

\begin{rem}
\label{rem:Invariance}In the main text we assume that $\left(\mathcal{A}_{\phi\left(\alpha\right)},\Sigma_{\phi\left(\alpha\right)}\right)=\left(\mathcal{A}_{\alpha},\Sigma_{\alpha}\right)$,
that is, the observation space $\left(\mathcal{A}_{\alpha},\Sigma_{\alpha}\right)$
of the entity $A_{\phi,}{}_{\alpha}$ is the same across different
treatments $\phi\in T$. In footnote \ref{fn:Invariance} we mention
that this constraint is not essential, as the random entities $A_{\phi,\alpha}$
can always be redefined to force $\left(\mathcal{A}_{\phi\left(\alpha\right)},\Sigma_{\phi\left(\alpha\right)}\right)=\left(\mathcal{A}_{\alpha},\Sigma_{\alpha}\right)$
without affecting selective influence. This redefinition can be done
in a variety of ways, the simplest one being to put 
\[
\mathcal{A}_{\alpha}=\bigcup_{\phi\in T}\left\{ \phi(\alpha)\right\} \times\mathcal{A}_{\phi(\alpha)},
\]
and let $\Sigma_{\alpha}$ be the smallest sigma-algebra containing
$\left\{ \left\{ \phi(\alpha)\right\} \times\mathfrak{a}:\mathfrak{a}\in\Sigma_{\phi\left(\alpha\right)},\ \phi\in T\right\} $.
Define $g_{\phi\left(\alpha\right)}:\mathcal{A}_{\phi\left(\alpha\right)}\rightarrow\mathcal{A}_{\alpha}$
by $g_{\phi\left(\alpha\right)}\left(a\right)=\left(\phi(\alpha),a\right)$,
for $a\in\mathcal{A}_{\phi\left(\alpha\right)},\phi\in T,\alpha\in\Phi$.
Then $A_{\phi,}^{*}{}_{\alpha}=g_{\phi\left(\alpha\right)}\left(A_{\phi,\alpha}\right)$
and $A_{\phi}^{*}=\left\{ A_{\phi,\alpha}^{*}\right\} _{\alpha\in\Phi}$
are the redefined random entities sought. Note that if $\mathbb{A}\looparrowleft\Phi$,
then $\mathbb{A}^{*}=\left\{ A_{\phi}^{*}\right\} _{\phi\in T}\looparrowleft\Phi$,
because Definition \ref{def:SI general} applies to $\mathbb{A}^{*}$
with the same $R$ and with the composite functions $g_{x^{\alpha}}\circ f_{x^{\alpha}}$
replacing $f_{x^{\alpha}}$, for all $x^{\alpha}\in\bigcup\Phi$.
(In the terminology of the main text, $g_{x^{\alpha}}$ are factor-point-specific
transformations.)\end{rem}
\begin{thm}[JDC]
A necessary and sufficient condition for $\mathbb{A}\looparrowleft\Phi$
in Definition \ref{def:SI general} is the existence of a set of jointly
distributed random entities
\[
H=\left\{ H_{x^{\alpha}}\right\} _{x^{\alpha}\in\bigcup\Phi}
\]
(one random entity for each factor point of each factor), such that
\[
\left\{ H_{x^{\alpha}}\right\} _{x^{\alpha}\in\phi}\sim A_{\phi}
\]
for every treatment $\phi\in T$\textup{\emph{.}}\end{thm}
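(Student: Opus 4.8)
The plan is to obtain both directions of the equivalence directly from Lemma~\ref{lem:basic}, which already characterizes possession of a joint distribution by a family of random entities as that family being the componentwise image of a single random entity $R$. The theorem is then essentially a relabeling of that lemma, the dictionary being: one component $H_{x^{\alpha}}$ of the JDC-vector per factor point $x^{\alpha}\in\bigcup\Phi$ $\longleftrightarrow$ one function $f_{x^{\alpha}}$ per factor point in Definition~\ref{def:SI general}. The canonical-rearrangement convention adopted just before Definition~\ref{def:SI general} (put $\Phi=\Omega$ and let $M$ be the identity) is exactly what makes this correspondence bijective.

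For sufficiency I would assume a jointly distributed $H=\{H_{x^{\alpha}}\}_{x^{\alpha}\in\bigcup\Phi}$ with $\{H_{x^{\alpha}}\}_{x^{\alpha}\in\phi}\sim A_{\phi}$ for every $\phi\in T$. Applying the ``only if'' part of Lemma~\ref{lem:basic} to the family $H$ yields a random entity $R$ distributed as some $(\mathcal{R},\Sigma_{\mathcal{R}},\nu)$ together with measurable functions $\{f_{x^{\alpha}}:\mathcal{R}\rightarrow\mathcal{A}_{x^{\alpha}}\}_{x^{\alpha}\in\bigcup\Phi}$ such that $H_{x^{\alpha}}=f_{x^{\alpha}}(R)$ for every factor point. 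Restricting to the factor points occurring in a treatment $\phi$ gives $A_{\phi}\sim\{H_{x^{\alpha}}\}_{x^{\alpha}\in\phi}=\{f_{\phi(\alpha)}(R)\}_{\alpha\in\Phi}$, which is precisely the condition in Definition~\ref{def:SI general}; hence $\mathbb{A}\looparrowleft\Phi$. The only care needed here is the bookkeeping of indices: as in the remark following Lemma~\ref{lem:basic}, the restriction of a jointly distributed family to a subfamily is taken on corresponding indices, so that $\{H_{x^{\alpha}}\}_{x^{\alpha}\in\phi}$ is unambiguous and its distribution is the appropriate marginal of that of $H$, matching $A_{\phi}$ component by component.

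For necessity I would start from $\mathbb{A}\looparrowleft\Phi$, i.e. from the $R$ and the functions $\{f_{x^{\alpha}}\}_{x^{\alpha}\in\bigcup\Phi}$ supplied by Definition~\ref{def:SI general}, with $A_{\phi}\sim\{f_{\phi(\alpha)}(R)\}_{\alpha\in\Phi}$ for all $\phi\in T$, and simply set $H_{x^{\alpha}}:=f_{x^{\alpha}}(R)$ for every $x^{\alpha}\in\bigcup\Phi$. Since all the $H_{x^{\alpha}}$ are functions of one and the same $R$, the ``if'' part of Lemma~\ref{lem:basic} guarantees that $H=\{H_{x^{\alpha}}\}_{x^{\alpha}\in\bigcup\Phi}$ possesses a joint distribution (the image of $\nu$ under $r\mapsto\{f_{x^{\alpha}}(r)\}_{x^{\alpha}\in\bigcup\Phi}$). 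Restricting to a treatment again gives $\{H_{x^{\alpha}}\}_{x^{\alpha}\in\phi}=\{f_{\phi(\alpha)}(R)\}_{\alpha\in\Phi}\sim A_{\phi}$, as required.

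I do not expect a genuine obstacle: once Lemma~\ref{lem:basic} is available the argument is purely formal. The closest thing to a subtlety worth stating explicitly is the measurability/marginalization point above, together with the observation (see the remarks following Definition~\ref{def:SI general}) that factor points not occurring in any treatment may be deleted from the factors, so $\bigcup\Phi$ may be replaced by $\bigcup_{\phi\in T,\,\alpha\in\Phi}\phi(\alpha)$; this merely prevents $H$ from carrying irrelevant components and does not affect the equivalence. Finally, I would remark that if one wants $R$ and hence each $H_{x^{\alpha}}$ to be a random variable (rather than an arbitrary random entity) when $A(\phi)$ consists of random variables, this is delivered by Corollary~\ref{cor:rvs} applied in the necessity step, and in fact $R$ may then be taken unit-uniform.
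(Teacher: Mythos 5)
Your proposal is correct and matches the paper's argument: the paper proves this theorem in one line ("Immediately follows from the definition and Lemma \ref{lem:basic}"), and your two directions are exactly the intended unpacking — the necessity part of Lemma \ref{lem:basic} applied to $H$ gives the $R$ and the $f_{x^{\alpha}}$ of Definition \ref{def:SI general}, while setting $H_{x^{\alpha}}:=f_{x^{\alpha}}(R)$ and invoking the sufficiency part gives the joint distribution. The added remarks on marginalization by corresponding indices and on Corollary \ref{cor:rvs} are consistent with the paper's own remarks and do not change the argument.
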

\begin{proof}
Immediately follows from the definition and Lemma \ref{lem:basic}.\end{proof}
\begin{thm}
If $\:\bigcup\Phi$ in Definition \ref{def:SI general} is a finite
set and $A_{\phi\left(\alpha\right)}$ is a random variable for every
\textup{$\alpha,\phi$, then }$R$ can always be chosen to be a random
variable. Moreover, $R$ can be chosen arbitrarily, as any continuously
(atomlessly) distributed random variable. \end{thm}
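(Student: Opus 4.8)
The plan is to obtain this as an immediate consequence of the JDC theorem above together with Corollary~\ref{cor:rvs}. Suppose $\mathbb{A}\looparrowleft\Phi$. By JDC there is a jointly distributed family $H=\left\{H_{x^{\alpha}}\right\}_{x^{\alpha}\in\bigcup\Phi}$ with $\left\{H_{x^{\alpha}}\right\}_{x^{\alpha}\in\phi}\sim A_{\phi}$ for every $\phi\in T$. Since factor points occurring in no treatment may be deleted, each $x^{\alpha}$ lies in some $\phi$, so $H_{x^{\alpha}}\sim A_{\phi,\alpha}$, a random variable by hypothesis. As $\bigcup\Phi$ is finite, $H$ is a finite jointly distributed vector of random variables, hence itself a random variable by clause (iii) of the definition of random variable given in this appendix.

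Next I would apply Corollary~\ref{cor:rvs} to this random variable $H$, whose index set $\bigcup\Phi$ is finite: there exist a random variable $R$---which may be prescribed to be any atomlessly distributed random variable, e.g.\ uniform on $[0,1]$---and measurable functions $\left\{f_{x^{\alpha}}:\mathcal{R}\rightarrow\mathcal{A}_{x^{\alpha}}\right\}_{x^{\alpha}\in\bigcup\Phi}$ with $\left\{H_{x^{\alpha}}\right\}_{x^{\alpha}\in\bigcup\Phi}=\left\{f_{x^{\alpha}}(R)\right\}_{x^{\alpha}\in\bigcup\Phi}$. It then remains only to verify that this $R$ and these $f_{x^{\alpha}}$ satisfy Definition~\ref{def:SI general}: for any $\phi\in T$, restricting the equality of identically indexed jointly distributed families to the sub-index set $\left\{x^{\alpha}:x^{\alpha}\in\phi\right\}$ (as in the first Remark of this appendix) yields $\left\{f_{\phi(\alpha)}(R)\right\}_{\alpha\in\Phi}\sim\left\{H_{x^{\alpha}}\right\}_{x^{\alpha}\in\phi}\sim A_{\phi}$, which is exactly the required relation; the ``moreover'' clause is then precisely the second assertion of Corollary~\ref{cor:rvs}.

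There is no real obstacle here, the statement being a repackaging of results already in hand; the only two points deserving explicit mention are that the finiteness of $\bigcup\Phi$ is what makes $H$ a random variable (so that Corollary~\ref{cor:rvs}, and through it the standard Borel-space fact cited there, becomes applicable), and that passing from the single global identity $H=\left\{f_{x^{\alpha}}(R)\right\}$ to the per-treatment distributional identities is legitimate because $\sim$ between identically indexed jointly distributed families automatically restricts to sub-indices.
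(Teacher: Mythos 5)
Your proposal is correct and follows exactly the route the paper intends: the paper's proof is the one-line ``Immediately follows from JDC and Corollary~\ref{cor:rvs},'' and your argument is precisely that chain of reasoning spelled out (JDC supplies the jointly distributed $H$, finiteness of $\bigcup\Phi$ plus the random-variable hypothesis makes $H$ a random variable, and Corollary~\ref{cor:rvs} then yields the atomlessly distributed $R$ and the functions $f_{x^{\alpha}}$, with the per-treatment identities recovered by restriction of indices). Nothing is missing.
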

\begin{proof}
Immediately follows from JDC and Corollary \ref{cor:rvs}.\end{proof}
\begin{rem}
In Dzhafarov and Gluhovsky (2006) this inference was not made because
JDC at that time was not explicitly formulated (outside quantum mechanics,
see footnotes \ref{fn:Surprisingly,-at-least} and \ref{fn:In-reference-to}).
\end{rem}
The three basic properties of selective influences listed in Section
\ref{sub:Three-basic-properties} trivially generalize to arbitrary
sets of factors and random entities.

\subsubsection*{Distance-type tests}

The principles of test construction (Section \ref{sub:General-principles})
and the logic of the distance-type tests in particular, apply without
changes to arbitrary sets of factors. As to the random entities, some
of the test measures are confined to discrete and/or real-valued variables
(e.g., information-based and Minkowski-type ones), others (such as
classification measures) are completely general. 

We will use the notation and terminology adopted in Dzhafarov and
Kujala (2010). Chains of factor points can be denoted by capital Roman
letters, $X=x_{1}^{\alpha_{1}}\ldots x_{l}^{\alpha_{l}}$ . A subsequence
of points belonging to a chain forms its \emph{subchain}. A \emph{concatenation}
of two chains $X$ and $Y$ is written as $XY$. So, we can have chains
$x^{\alpha}Xy^{\beta}$, $x^{\alpha}XYy^{\beta}$, etc. The number
of points in a chain $X$ is its \emph{cardinality}, $\left|X\right|$.
For any treatment-realizable chain $X=x_{1}^{\alpha_{1}}\ldots x_{l}^{\alpha_{l}}$,
we write 
\[
DX=\sum_{i=1}^{l-1}Dx^{\alpha_{i}}x^{\alpha_{i+1}}
\]
(with the understanding that the sum is zero if $l$ is 0 or 1).

A treatment-realizable chain $u^{\mu}Xv^{\nu}$ is called \emph{compliant}
(with the chain inequality) if $Du^{\mu}v^{\nu}\leq Du^{\mu}Xv^{\nu}=Dx^{\mu}x_{1}^{\alpha_{1}}+DX+Dx^{\alpha_{n}}x^{\nu}$;
it is called \emph{contravening} (the chain inequality) if $Du^{\mu}v^{\nu}>Du^{\mu}Xv^{\nu}$.
The proofs of the two lemmas below are very similar, but it is convenient
to keep them separate.
\begin{lem}
\label{lem:dist1}If a treatment-realizable chain $X_{0}=x_{1}^{\alpha_{1}}\ldots x_{l}^{\alpha_{l}}$
($l\geq$3) is contravening, then it contains a contravening subchain
in which no factor point occurs more than once.\end{lem}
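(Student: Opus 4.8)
The plan is to argue by induction on the length $l$ of the contravening chain $X_0=x_1^{\alpha_1}\ldots x_l^{\alpha_l}$. For $l=3$, condition~(\ref{eq:pairwise}) makes $\alpha_1,\alpha_2,\alpha_3$ pairwise distinct, and since a factor point determines the factor it belongs to, the three factor points are then pairwise distinct; so $X_0$ itself is the required subchain. For the inductive step I would assume $l\ge4$ and that the assertion is already known for every contravening chain of length $3,\ldots,l-1$. If no factor point occurs twice in $X_0$ there is nothing to prove, so fix indices $p<q$ with $x_p^{\alpha_p}=x_q^{\alpha_q}$. Coincidence of these two factor points gives $\alpha_p=\alpha_q$; moreover $(p,q)\ne(1,l)$ because~(\ref{eq:pairwise}) forces $\alpha_1\ne\alpha_l$, and $q-p\ge2$ because $x_p^{\alpha_p}=x_{p+1}^{\alpha_{p+1}}$ would contradict~(\ref{eq:pairwise}).

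The key move is to ``short-circuit'' the chain at the repeated point, setting
\[
X'=x_1^{\alpha_1}\ldots x_p^{\alpha_p}x_{q+1}^{\alpha_{q+1}}\ldots x_l^{\alpha_l}
\]
(delete $x_{p+1}^{\alpha_{p+1}},\ldots,x_q^{\alpha_q}$; when $q=l$ this just means $X'=x_1^{\alpha_1}\ldots x_p^{\alpha_p}$). Then $X'$ is a subchain of $X_0$ and $|X'|<l$. I would next check that $X'$ is treatment-realizable: every consecutive pair of $X'$ other than the splice pair $\{x_p^{\alpha_p},x_{q+1}^{\alpha_{q+1}}\}$ is inherited from $X_0$; that splice pair equals $\{x_q^{\alpha_q},x_{q+1}^{\alpha_{q+1}}\}$ (since $x_p^{\alpha_p}=x_q^{\alpha_q}$) and hence lies in a treatment; the pair of the first and last point of $X'$ is $\{x_1^{\alpha_1},x_l^{\alpha_l}\}$ in both the case $q<l$ and the case $q=l$ (in the latter the last point of $X'$ is $x_p^{\alpha_p}=x_l^{\alpha_l}$) and so lies in a treatment; and~(\ref{eq:pairwise}) holds because $\alpha_p=\alpha_q\ne\alpha_{q+1}$ at the splice, the remaining consecutive and the closing inequalities being inherited.

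Then comes the one genuine computation. Since $x_p^{\alpha_p}=x_q^{\alpha_q}$ we have $Dx_p^{\alpha_p}x_{q+1}^{\alpha_{q+1}}=Dx_q^{\alpha_q}x_{q+1}^{\alpha_{q+1}}$, so passing from $X_0$ to $X'$ merely deletes the nonnegative summands $Dx_i^{\alpha_i}x_{i+1}^{\alpha_{i+1}}$ for $i=p,\ldots,q-1$; hence $DX'\le DX_0$. The first and last points of $X'$ are $x_1^{\alpha_1}$ and $x_l^{\alpha_l}$ (again also when $q=l$), so $X'$ is contravening precisely when $Dx_1^{\alpha_1}x_l^{\alpha_l}>DX'$, and this holds because $Dx_1^{\alpha_1}x_l^{\alpha_l}>DX_0\ge DX'$ by contravention of $X_0$. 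The same inequality kills the degenerate possibility $|X'|=2$: $X'$ contains the two distinct points $x_1^{\alpha_1}$ and $x_l^{\alpha_l}$, so $|X'|\ge2$, and if $|X'|=2$ then $DX'=Dx_1^{\alpha_1}x_l^{\alpha_l}$, contradicting $DX'<Dx_1^{\alpha_1}x_l^{\alpha_l}$. Thus $X'$ is a treatment-realizable contravening chain with $3\le|X'|<l$, and the inductive hypothesis produces a contravening subchain of $X'$ with no repeated factor point; being a subchain of $X'$ it is a subchain of $X_0$, completing the induction.

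I expect the only real difficulty to be the bookkeeping of the second and third paragraphs: verifying that the short-circuited chain still obeys~(\ref{eq:pairwise}) and the treatment-realizability condition~(\ref{eq:dist cond}) at the new junction and at the closing pair, and cleanly handling the endpoint case $q=l$ together with the degenerate case $|X'|=2$. No deeper idea is required beyond the loop-deletion move and nonnegativity of the p.q.-metric $D$; notably, the triangle inequality for $D$ is not even needed for this combinatorial reduction.
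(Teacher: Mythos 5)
Your proof is correct and follows essentially the same route as the paper's: the loop-deletion move (collapsing the segment between two occurrences of a repeated factor point), nonnegativity of $D$ to get $DX'\le DX_0$, and iteration/induction on length. Your treatment is in fact slightly more careful than the paper's on two minor points --- you use $DX'\le DX_0$ where the paper asserts a strict inequality that nonnegativity alone does not warrant (though only the weak form is needed), and you explicitly rule out the degenerate case $\left|X'\right|=2$, which the paper leaves implicit.
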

\begin{proof}
If $l=3$ then the chain contains no factor point more than once,
because otherwise it is not treatment-realizable. If $l>3$, and $X_{0}$
contains factor points $x_{i}^{\alpha_{i}}=x_{j}^{\alpha_{j}}$, then
it can be presented as $X_{0}=x_{1}^{\alpha_{1}}\ldots x_{i}^{\alpha_{i}}Ux_{j}^{\alpha_{j}}\ldots x_{l}^{\alpha_{l}}$,
where $U$ is some nonempty subchain ($i$ may coincide with $1$
or $j$ coincide with $l$, but not both). But then $X_{1}=x_{1}^{\alpha_{1}}\ldots x_{i}^{\alpha_{i}}\ldots x_{l}^{\alpha_{l}}$
is also treatment-realizable and contravening, because
\[
\begin{array}{r}
Dx_{1}^{\alpha_{1}}x_{l}^{\alpha_{l}}>DX_{0}=Dx_{1}^{\alpha_{1}}\ldots x_{i}^{\alpha_{i}}Ux_{j}^{\alpha_{j}}\ldots x_{l}^{\alpha_{l}}\\
>Dx_{1}^{\alpha_{1}}\ldots x_{i}^{\alpha_{i}}\ldots x_{l}^{\alpha_{l}}=DX_{1}.
\end{array}
\]
If $X_{1}$ contains two equal factor points, then $3\leq\left|X_{1}\right|<\left|X_{0}\right|$,
and we can repeat the same procedure to obtain $X_{2}$, etc. As the
procedure has to stop at some $X_{t},$ this subchain will contain
no factor point twice.\end{proof}
\begin{lem}
\label{lem:dist2}If a treatment-realizable chain $X_{0}=x_{1}^{\alpha_{1}}\ldots x_{l}^{\alpha_{l}}$
($l\geq$3) is contravening, then it contains a contravening irreducible
subchain.\end{lem}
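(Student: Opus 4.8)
The plan is to prove the lemma by induction on the length $l\ge 3$ of $X_0=x_1^{\alpha_1}\ldots x_l^{\alpha_l}$, at each step passing to a strictly shorter contravening treatment-realizable subchain. First I would dispose of repeated factor points: by Lemma~\ref{lem:dist1}, $X_0$ already contains a contravening (hence treatment-realizable, and of length $\ge 3$) subchain in which no factor point occurs more than once, so I may assume $X_0$ itself has this property --- i.e.\ the second defining property of irreducibility holds for $X_0$, and will automatically hold for every subchain of $X_0$ I subsequently form. So throughout the induction only the first property (no subset of size $\ge 2$ other than the pairs of (\ref{eq:dist cond}) lies in a treatment) is in question.

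For the base case $l=3$, I would argue that a contravening triad is automatically irreducible. If $X_0=x_1^{\alpha_1}x_2^{\alpha_2}x_3^{\alpha_3}$ is not irreducible, then, the second property holding, the first one fails; since the only subset of $\{x_1^{\alpha_1},x_2^{\alpha_2},x_3^{\alpha_3}\}$ of size $\ge 2$ besides the three pairs of (\ref{eq:dist cond}) is the whole triple, we must have $\{x_1^{\alpha_1},x_2^{\alpha_2},x_3^{\alpha_3}\}\subset\phi$ for some $\phi\in T$. Then the three random variables attributed to these factor points are jointly distributed inside $A(\phi)$, and since $D$ is a p.q.-metric on any space of jointly distributed random variables of the relevant type, the triangle inequality forces $Dx_1^{\alpha_1}x_3^{\alpha_3}\le Dx_1^{\alpha_1}x_2^{\alpha_2}+Dx_2^{\alpha_2}x_3^{\alpha_3}$, i.e.\ $X_0$ is compliant --- contradicting the hypothesis.

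For the inductive step ($l\ge 4$), assuming $X_0$ is not already irreducible, the first property fails, so some subset of size $\ge 2$ lies in a treatment and is not one of the listed pairs; every $2$-element subset of it lies in the same treatment, and the listed pairs form an $l$-cycle, which is triangle-free for $l\ge 4$, so I can extract a pair $S=\{x_i^{\alpha_i},x_j^{\alpha_j}\}$, $1\le i<j\le l$, that lies in a treatment and is not among the pairs of (\ref{eq:dist cond}); this translates precisely to $2\le j-i\le l-2$. I would then split $X_0$ at $x_i^{\alpha_i},x_j^{\alpha_j}$ into the inner arc $B=x_i^{\alpha_i}x_{i+1}^{\alpha_{i+1}}\ldots x_j^{\alpha_j}$ and the ``shortcut'' chain $A=x_1^{\alpha_1}\ldots x_i^{\alpha_i}x_j^{\alpha_j}x_{j+1}^{\alpha_{j+1}}\ldots x_l^{\alpha_l}$. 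The inequality $2\le j-i\le l-2$ makes each of $A,B$ a subchain of $X_0$ with between $3$ and $l-1$ factor points, and each is treatment-realizable: its consecutive pairs are inherited from $X_0$ except for the single new edge $S$, the endpoint pair of $A$ equals that of $X_0$ and the endpoint pair of $B$ equals $S$, and the factor-adjacency conditions follow likewise (using $\alpha_i\ne\alpha_j$ since $x_i^{\alpha_i}\ne x_j^{\alpha_j}$ lie in a common treatment). The crucial observation is that $A$ and $B$ cannot both be compliant: adding their chain inequalities and substituting the bound on $Dx_i^{\alpha_i}x_j^{\alpha_j}$ coming from $B$ into the one for $A$ yields $Dx_1^{\alpha_1}x_l^{\alpha_l}\le DX_0$, contradicting that $X_0$ contravenes. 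Hence one of $A,B$ is contravening, treatment-realizable, and strictly shorter, and the induction hypothesis applied to it produces a contravening irreducible subchain, which is also a subchain of $X_0$.

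I expect the main obstacle to be the bookkeeping in the inductive step rather than any conceptual point: one must handle the boundary cases $i=1$ and $j=l$ (which cannot occur together, since then $S$ would be the listed endpoint pair $\{x_1^{\alpha_1},x_l^{\alpha_l}\}$), verify the factor-adjacency conditions $\alpha_k\ne\alpha_{k+1}$ across the two splice points for both $A$ and $B$, and make the extraction of a genuinely non-consecutive pair from the failure of the first irreducibility property precise. None of this is deep --- it reuses the manipulations already in the proof of Lemma~\ref{lem:dist1} --- but it is what must be written with care, because it is the bound $2\le j-i\le l-2$ that guarantees both split pieces are honest chains of length $\ge 3$ strictly shorter than $X_0$, which is exactly what drives the induction down to the irreducible base case.
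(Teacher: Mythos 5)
Your proposal is correct and follows essentially the same route as the paper's proof: reduce to the repetition-free case via Lemma \ref{lem:dist1}, handle $l=3$ by noting that a triple contained in a treatment would be compliant by the triangle inequality, and for $l\ge4$ split at a non-listed pair $\{x_i^{\alpha_i},x_j^{\alpha_j}\}$ lying in a treatment into the inner arc and the shortcut, observing that at least one of the two must contravene. The paper phrases the descent as an iterated replacement rather than an induction, and is slightly less explicit than you are about excluding the endpoint pair and about why a size-$\ge3$ offending subset yields a non-listed pair, but these are presentational differences only.
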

\begin{proof}
By the previous lemma, we can assume that every factor point in $X_{0}$
occurs no more than once. If $l=3$, the chain $X_{0}$ itself is
irreducible, because otherwise there would exist a treatment $\phi\in T$
that includes the elements of the chain, and this would make the chain
compliant. If $l>3$, and the chain $X_{0}$ is not irreducible, then
it must contain a subchain $x_{i}^{\alpha_{i}}x_{j}^{\alpha_{j}}$
such that $j>i+1$ and $\left\{ x_{i}^{\alpha_{i}},x_{j}^{\alpha_{j}}\right\} $
is part of some treatment $\phi\in T$. The chain then can be presented
as $X_{0}=x_{1}^{\alpha_{1}}\ldots x_{i}^{\alpha_{i}}Ux_{j}^{\alpha_{j}}\ldots x_{l}^{\alpha_{l}}$,
where $U$ is some nonempty subchain ($i$ may coincide with $1$
or $j$ with $l$, but not both). The subchain $x_{i}^{\alpha_{i}}Ux_{j}^{\alpha_{j}}$
is clearly treatment-realizable. If it is contravening, then we replace
$X_{0}$ with $X_{1}=x_{i}^{\alpha_{i}}Ux_{j}^{\alpha_{j}}$; if it
is compliant, then we replace $X_{0}$ with $X_{1}=x_{1}^{\alpha_{1}}\ldots x_{i}^{\alpha_{i}}x_{j}^{\alpha_{j}}\ldots x_{l}^{\alpha_{l}}$.
In both cases we obtain a treatment-realizable subchain $X_{1}$ of
$X_{0}$ such that $3\leq\left|X_{1}\right|<\left|X_{0}\right|$,
and $X_{1}$ is contravening: in the former case $X_{1}=x_{i}^{\alpha_{i}}Ux_{j}^{\alpha_{j}}$
is contravening by construction, in the latter case $Dx_{i}^{\alpha_{i}}Ux_{j}^{\alpha_{j}}>Dx_{i}^{\alpha_{i}}x_{j}^{\alpha_{j}}$
whence 
\[
\begin{array}{r}
Dx_{1}^{\alpha_{1}}x_{l}^{\alpha_{l}}>DX_{0}=Dx_{1}^{\alpha_{1}}\ldots x_{i}^{\alpha_{i}}Ux_{j}^{\alpha_{j}}\ldots x_{l}^{\alpha_{l}}\\
>Dx_{1}^{\alpha_{1}}\ldots x_{i}^{\alpha_{i}}x_{j}^{\alpha_{j}}\ldots x_{l}^{\alpha_{l}}=DX_{1}.
\end{array}
\]
If $X_{1}$ is not irreducible, we can apply the same procedure to
$X_{1}$ to obtain a contravening subchain $X_{2}$ with $3\leq\left|X_{2}\right|<\left|X_{1}\right|$,
and continue in this manner. Eventually we have to reach a contravening
subchain $X_{t}$ of $X_{0}$ such that $\left|X_{t}\right|\geq3$
and the procedure cannot continue, indicating that $X_{t}$ is irreducible.
\end{proof}

\section*{REFERENCES}

\setlength{\parindent}{0cm}\everypar={\hangindent=15pt}Basoalto,
R.M., \& Percival, I.C. (2003). BellTest and CHSH experiments with
more than two settings. \emph{Journal of Physics A: Mathematical \&
General}, 36, 7411\textendash{}7423.

Bloxom, B. (1972). The simplex in pair comparisons. \emph{Psychometrika},
37, 119\textendash{}136.

Cover, T. M. and Thomas, J. A. (1991). \emph{Elements of Information
Theory}. New York: Wiley.

Deza, M.-M., \& Rosenberg, I. G. (2000). n-Semimetrics. \emph{European
Journal of Combinatorics}, 21, 797\textendash{}806.

Dzhafarov, E.N. (1992). The structure of simple reaction time to step-function
signals. \emph{Journal of Mathematical Psychology}, 36, 235-268.

Dzhafarov, E. N. (1997). Process representations and decompositions
of response times. In A. A. J. Marley (Ed.), \emph{Choice, decision
and measurement: Essays in honor of R. Duncan Luce}, pp. 255\textendash{}278.
New York: Erlbaum.

Dzhafarov, E.N. (1999). Conditionally selective dependence of random
variables on external factors. \emph{Journal of Mathematical Psychology},
43, 123\textendash{}157.

Dzhafarov, E.N. (2001). Unconditionally selective dependence of random
variables on external factors. \emph{Journal of Mathematical Psychology},
45, 421\textendash{}451.

Dzhafarov, E.N. (2002). Multidimensional Fechnerian scaling: Pairwise
comparisons, regular minimality, and nonconstant self-similarity.
\emph{Journal of Mathematical Psychology,} 46, 583-608. 

Dzhafarov, E.N. (2003a). Selective influence through conditional independence.
\emph{Psychometrika}, 68, 7\textendash{}26.

Dzhafarov, E.N. (2003b). Thurstonian-type representations for {}``same-different'''
discriminations: Deterministic decisions and independent images. \emph{Journal
of Mathematical Psychology}, 47, 208-228. 

Dzhafarov, E.N. (2003c). Thurstonian-type representations for {}``same-different'''
discriminations: Probabilistic decisions and interdependent images.
\emph{Journal of Mathematical Psychology}, 47, 229-243.

Dzhafarov, E.N. (2006) On the Law of Regular Minimality: Reply to
Ennis. \emph{Journal of Mathematical Psychology}, 50, 74-93.

Dzhafarov, E.N. (2010). Dissimilarity, quasidistance, distance. \emph{Journal
of Mathematical Psychology}, 54, 334-337.

Dzhafarov, E.N., \& Gluhovsky, I. (2006). Notes on selective influence,
probabilistic causality, and probabilistic dimensionality. \emph{Journal
of Mathematical Psychology}, 50, 390\textendash{}401.

Dzhafarov, E.N., \& Kujala, J.V. (2010). The Joint Distribution Criterion
and the Distance Tests for selective probabilistic causality. \emph{Frontiers
in Quantitative Psychology and Measurement}, 1:151 doi: 10.3389/fpsyg.2010.00151.

Dzhafarov, E.N., Schweickert, R., \& Sung, K. (2004). Mental architectures
with selectively influenced but stochastically interdependent components.
\emph{Journal of Mathematical Psychology}, 48, 51-64.

Fine, A. (1982a). Joint distributions, quantum correlations, and commuting
observables. \emph{Journal of Mathematical Physics}, 23, 1306-1310.

Fine, A. (1982b). Hidden variables, joint probability, and the Bell
inequalities.\emph{ Physical Review Letters}, 48, 291-295.

Kechris, A. S. (1995). \emph{Classical Descriptive Set Theory}. New
York: Springer.

Kraskov, A., Stögbauer, H., Andrzejak, R.G., \& Grassberger, P. (2003).
Hierarchical Clustering Based on Mutual Information. \emph{ArXiv q-bio/0311039}.

Kujala, J. V., \& Dzhafarov, E. N. (2008). Testing for selectivity
in the dependence of random variables on external factors. \emph{Journal
of Mathematical Psychology}, 52, 128\textendash{}144.

Kujala, J.V., \& Dzhafarov, E.N. (2009). Regular Minimality and Thurstonian-type
modeling. Journal of Mathematical Psychology, 53, 486-501.

Levine, M. V. (2003). Dimension in latent variable models. \emph{Journal
of Mathematical Psychology}, 47, 450\textendash{}466.

McDonald, R. P. (1967). Nonlinear factor analysis. \emph{Psychometrika
Monographs}, 15. 

McDonald, R. P. (1982). Linear versus nonlinear models in item response
theory. \emph{Applied Psychological Measurement}, 6, 379\textendash{}396.

Schweickert, R. (1982). The bias of an estimate of coupled slack in
stochastic PERT networks. \emph{Journal of Mathematical Psychology},
26, 1\textendash{}12.

Schweickert, R., Fisher, D.L., \& Goldstein, W.M. (2010). Additive
factors and stages of mental processes in task networks. \emph{Journal
of Mathematical Psychology,} 54, 405\textendash{}414

Schweickert, R., \& Townsend, J. T. (1989). A trichotomy: Interactions
of factors prolonging sequential and concurrent mental processes in
stochastic discrete mental (PERT) networks. \emph{Journal of Mathematical
Psychology}, 33, 328-347.

Sternberg, S. (1969). The discovery of processing stages: Extensions
of Donders\textquoteright{} method. In W.G. Koster (Ed.), \emph{Attention
and Performance II. Acta Psychologica}, 30, 276\textendash{}315.

Suppes, P., \& Zanotti, M. (1981). When are probabilistic explanations
possible? \emph{Synthese}, 48, 191\textendash{}199.

Taylor, M.D. (1984). Separation metrics for real-valued random variables\emph{.
International Journal of Mathematics and Mathematical Sciences}, 7,
407-408.

Taylor, M.D. (1985). New metrics for weak convergence of distribution
fiunctions. \emph{Stochastica}, 9, 5-17.

Thurstone, L. L. (1927b). A law of comparative judgments. \emph{Psychological
Review}, 34, 273\textendash{}286.

Townsend, J.T. \& Ashby, G. (1983). \emph{The Stochastic Modeling
of Elementary Psychological Processes}. Cambridge: Cambridge University
Press.

Townsend, J. T. (1984). Uncovering mental processes with factorial
experiments. \emph{Journal of Mathematical Psychology}, 28, 363\textendash{}400. 

Townsend, J.T., \& Schweickert, R. (1989). Toward the trichotomy method
of reaction times: Laying the foundation of stochastic mental networks.
\emph{Journal of Mathematical Psychology}, 33, 309\textendash{}327.

Townsend, J. T., \& Thomas, R. D. (1994). Stochastic dependencies
in parallel and serial models: Effects on systems factorial interactions.
\emph{Journal of Mathematical Psychology}, 38, 1\textendash{}34.

Webster, R. (1994). \emph{Convexity}. New York: Oxford University
Press.

Werner, R.F., \& Wolf, M.M. (2001). All multipartite Bell correlation
inequalities for two dichotomic observables per site. arXiv:quant-ph\slash{}0102024v1.

Werner, R.F., \& Wolf, M.M. (2001). Bell inequalities and entanglement.
arXiv:quant-ph\slash{}0107093v2.
\end{document}